\definecolor{gray-bg}{gray}{0.95}
\newcommand{\norm}[1]{\left\lVert #1 \right\rVert}
\theoremstyle{plain}
\newtheorem{theorem}{Theorem}[section]
\newtheorem{lemma}[theorem]{Lemma}
\newtheorem{corollary}[theorem]{Corollary}
\theoremstyle{definition}
\newtheorem{definition}[theorem]{Definition}
\newtheorem{assumption}[theorem]{Assumption}
\theoremstyle{remark}
\icmltitlerunning{Stability as a Liability in LLMs}
\begin{document}

\twocolumn[
  \icmltitle{Stability as a Liability:\\Systematic Breakdown of Linguistic Structure in LLMs}
  \icmlsetsymbol{equal}{*}

  \begin{icmlauthorlist}
    \icmlauthor{Xianzhe Meng}{equal,aia}
    \icmlauthor{Qiangsheng Zeng}{equal,cs}
    \icmlauthor{Ling Luo}{equal,bio}
    \icmlauthor{Qinghan Yang}{eic}
    \icmlauthor{Jiarui Hao}{aia}
    \icmlauthor{Wenbo Wu}{ee}
    \icmlauthor{Qinyu Wang}{math}
    \icmlauthor{Rui Yin}{phy}
    \icmlauthor{Lin Qi}{aia}
    \icmlauthor{Renzhi Lu}{aia}
  \end{icmlauthorlist}

\icmlaffiliation{aia}{School of Artificial Intelligence and Automation, Huazhong University of Science and Technology, Wuhan, China}
\icmlaffiliation{bio}{National Institute for Data Science in Health and Medicine, Xiamen University, Xiamen, China}
\icmlaffiliation{math}{School of Mathematics and Statistics, Huazhong University of Science and Technology, Wuhan, China}
\icmlaffiliation{cs}{School of Computer Science and Technology, Huazhong University of Science and Technology, Wuhan, China}
\icmlaffiliation{eic}{School of Electronic Information and Communications, Huazhong University of Science and Technology, Wuhan, China}
\icmlaffiliation{ee}{School of Electrical and Electronic Engineering, Huazhong University of Science and Technology, Wuhan, China}
\icmlaffiliation{phy}{School of Physics, Huazhong University of Science and Technology, Wuhan, China}

  \icmlcorrespondingauthor{Renzhi Lu}{rzlu@hust.edu.cn}

  \icmlkeywords{Large Language Models, Training Stability, Generative Entropy, Linguistic Structure}

  \vskip 0.3in
]
\printAffiliationsAndNotice{}  

\begin{abstract}
  Training stability is typically regarded as a prerequisite for reliable optimization in large language models. In this work, we analyze how stabilizing training dynamics affects the induced generation distribution. We show that under standard maximum likelihood training, stable parameter trajectories lead stationary solutions to approximately minimize the forward KL divergence to the empirical distribution, while implicitly reducing generative entropy. As a consequence, the learned model can concentrate probability mass on a limited subset of empirical modes, exhibiting systematic degeneration despite smooth loss convergence. We empirically validate this effect using a controlled feedback-based training framework that stabilizes internal generation statistics, observing consistent low-entropy outputs and repetitive behavior across architectures and random seeds. It indicates that optimization stability and generative expressivity are not inherently aligned, and that stability alone is an insufficient indicator of generative quality.
\end{abstract}
\section{Introduction}

Large-scale neural networks are commonly trained with a variety of stabilization techniques, including gradient clipping~\citep{koloskova2023revisiting}, moving-average updates~\citep{chiarella2006dynamic}, regularization~\citep{girosi1995regularization}, and feedback-based control mechanisms~\citep{zhao2022neural}. These techniques are widely regarded as essential for ensuring convergence, preventing divergence, and improving training reliability. In practice, stability is often treated as an unambiguously desirable property of learning dynamics~\citep{lyu2019advances}.

However, stability is not a monolithic concept. Existing methods primarily stabilize task-aligned objectives, such as reward signals~\citep{learning11r1}, preference models~\citep{gumen2013dynamically}, or supervised losses~\citep{akhtar2024roboss}. In contrast, little attention has been paid to the stability of internal generation dynamics, including token-level distributions, entropy evolution, and termination behavior. Whether stabilizing these internal dynamics is always beneficial remains underexplored.

In this work, we identify and systematically study a previously under-characterized phenomenon: \textbf{Over-stabilization of internal generation dynamics can induce highly stable yet degenerate behaviors, even when conventional training objectives continue to improve.} Empirically, we observe that models may converge to repetitive, low-entropy token distributions, abnormal end-of-sequence behavior, or collapsed representational diversity, while exhibiting smooth loss curves and apparent training stability.

We have proved that this phenomenon does not arise from training failure~\citep{zhu2025llm}, poor optimization~\citep{chow2024performance}, or hyperparameter choices~\citep{van2024loop}. Instead, it emerges robustly across random seeds and architectures~\citep{bharathi2023text,bahani2023effectiveness,lima2023large} when internal dynamics are stabilized without semantic alignment. This suggests a fundamental decoupling between objective-level convergence and generative expressiveness under certain stabilization regimes.

To isolate this effect, we study a class of training mechanisms that explicitly regulate internal generation statistics during learning~\citep{yin2024entropy}. By design, these mechanisms improve stability of logits distributions and temporal dynamics, yet do not directly optimize semantic rewards or preferences. This setting allows us to disentangle goal-aligned stability from goal-decoupled stability, revealing failure modes that are largely masked by existing evaluation practices.

We conduct a comprehensive empirical analysis using entropy trajectories, effective rank measurements, length distributions, and cross-seed consistency tests. Our results show that lower training loss and increased stability do not necessarily imply improved or even preserved generative capacity. Codes are available at \url{https://anonymous.4open.science/r/Stability-as-a-Liability-Systematic-Breakdown-of-Linguistic-Structure-in-LLMs-99F7}.

Our contributions are as follows:

\begin{enumerate}[topsep=0pt, itemsep=0pt, parsep=4pt]
    \item Identify a failure mode in large language model training: stable optimization under standard metrics coexists with systematic degradation of semantic generation performance, confirming an intrinsic disconnect between training stability and generative quality.
    \item Verify the robustness of this failure mode across architectures, parameter scales and random seeds via a controlled closed-loop training framework, ruling out interference factors like training failure, hyperparameter selection or insufficient model capacity.
    \item Propose a theoretical interpretation for excessive stabilization in MLE training, showing it restricts effective degrees of freedom and reduces generative expressivity even with parameter convergence, quantifying stability-mode collapse correlation.
\end{enumerate}
\section{Related Works}

Training stability is generally considered essential for scaling large neural networks~\citep{zheng2016improving,tu2025survey,wei2025control}, with prior research focusing on architectural innovations like gradient clipping~\citep{chen2020understanding,qian2021understanding} and normalization~\citep{cabello2023impact} to ensure reliable convergence~\citep{shaham2018understanding,rybakov2024methods}. In this literature, stability is often treated as an unambiguously desirable property evaluated through loss convergence and gradient behavior~\citep{andreyev2024edge,qi2018slope,amroune2021machine}. However, traditional methods primarily stabilize task-aligned objectives, such as supervised losses or reward signals, while neglecting the stability of internal generation dynamics like token-level distributions and entropy evolution~\citep{liu2020understanding,cui2025entropy,agarwal2025unreasonable}.

Degeneration phenomena in autoregressive models, including repetition and mode collapse, have been widely studied but are typically attributed to decoding strategies or exposure bias~\citep{zhou2020improving,kulikov2022characterizing}. These works often assume that optimization stability and generative quality are orthogonal concerns~\citep{goyal2021characterizing}. In contrast, this paper identifies a failure mode where training remains stable under standard metrics while semantic generation degrades systematically, revealing a fundamental disconnect between training stability and generative quality.

Theoretically, Maximum Likelihood Estimation training~\citep{pan2002maximum} minimizes only the forward KL divergence~\citep{van2014renyi} from the empirical distribution to the model~\citep{papamakarios2021normalizing}. This objective imposes no direct constraints on model behavior outside the empirical support, allowing probability mass to concentrate on a strict subset of observed data without penalty~\citep{das2024under,susanti2025can}. We show that any stationary point of a stable training trajectory, where gradients vanish and parameter updates become negligible, approximately minimizes this forward KL divergence.

This minimization leads to "stability-induced mode collapse," where the model concentrates almost all probability mass on a small subset of the empirical support. Our analysis suggests a mechanistic link between stability and mode collapse, highlighting a tension between convergence robustness and output diversity. These insights imply that maintaining generative capacity requires balancing parameter dynamics with model expressivity rather than pursuing optimization stability in isolation.
\section{Problem Setting}

We study autoregressive language model training under maximum likelihood estimation (MLE).
Let $\theta \in \Theta = \mathbb{R}^D$ denote the model parameters, which induce a conditional generation distribution
\begin{equation}
\label{eq:autoregressive}
G_{\theta}(x; y)
= \prod_{t=1}^{T} G_{\theta}(y_t \mid x, y_{1:t-1}),\,
G_{\theta}(y_t \mid \cdot) > 0,
\end{equation}
ensuring that all log-likelihood terms are well-defined.

Given a finite dataset $\mathcal{D} = \{(x_i, y_i)\}_{i=1}^n$, we define the empirical distribution
\begin{equation}
\label{eq:pemp}
P{\mathrm{emp}}(y)
= \frac{1}{n}\sum_{i=1}^n \mathbb{I}(y = y_i),\,
|\mathrm{Supp}(P_{\mathrm{emp}})| \leqslant n \ll |\mathcal{X}|,
\end{equation}
where the sequence space $\mathcal{X} = \mathcal{V}^T$ is exponentially large.

Training minimizes the regularized MLE objective
\begin{equation}
\label{eq:mle}
\mathcal{L}_{\lambda}(\theta)=-\frac{1}{n}\sum_{i=1}^{n}\log G_{\theta}(x_i; y_i)+\frac{\lambda}{2}\|\theta\|^2,
\end{equation}

using stochastic gradient descent
\begin{equation}
\label{eq:sgd}
\theta_{t+1}
= \theta_t - \eta_t \nabla \mathcal{L}_{\lambda}(\theta_t),
\end{equation}
with a standard diminishing learning rate satisfying
$\sum_t \eta_t = \infty$ and $\sum_t \eta_t^2 < \infty$.

Each parameter $\theta$ induces a marginal generation distribution
\begin{equation}
\label{eq:ptheta}
P_{\theta}(y)
= \mathbb{E}_{x \sim \mathcal{D}}[G_{\theta}(x; y)].
\end{equation}
For clarity, we denote by $\mathcal{L}(\theta)$ the unregularized MLE objective. The MLE objective admits the decomposition
\begin{equation}
\label{eq:kl}
\mathcal{L}(\theta)
=
H(P_{\mathrm{emp}})
+
d_{\mathrm{KL}}(P_{\mathrm{emp}} \,\|\, P_{\theta}),
\end{equation}

which enforces convergence in the forward KL divergence but imposes no constraint on
$d_{\mathrm{KL}}(P_{\theta} || P_{\mathrm{emp}})$ or on the entropy of $P_{\theta}$. In particular, minimizing the forward KL allows $P_\theta$ to concentrate mass on a strict subset of Supp($P_{emp}$) without incurring additional penalty.

In modern large-scale training, optimization is explicitly stabilized through smoothness, weight decay, and bounded parameter trajectories. We formalize stability below.

\begin{definition}[Training Stability]
\label{def:stability}
The training dynamics $\{\theta_t\}_{t \geqslant 1}$ induced by \eqref{eq:sgd}
are said to be \emph{stable} if and only if the following equivalent conditions hold:
\begin{enumerate}
    \item $
    \limsup\limits_{t \to \infty} \|\theta_t\| < +\infty,$

    \item $
    \lim\limits_{t \to \infty} \|\nabla \mathcal{L}_{\lambda}(\theta_t)\| = 0, $

    \item $
    \|\nabla \mathcal{L}(\theta_1) - \nabla \mathcal{L}(\theta_2)\|
    \leqslant \beta \|\theta_1 - \theta_2\|. $
   
\end{enumerate}
\end{definition}

While Definition~\ref{def:stability} characterizes stability purely in terms of optimization
dynamics in parameter space, our interest lies in how such stability
constraints propagate to the induced generation distribution $P_{\theta}$.
In autoregressive language models, parameters and distributions are coupled
through the mapping $\theta \mapsto P_{\theta}$, and restrictions on the
parameter trajectory implicitly restrict the set of attainable output
distributions.

In particular, stability enforces that training converges within a bounded
region of $\Theta$ where gradients vanish and parameter updates become
negligible. As we show later, such constraints may substantially reduce the
effective support and entropy of the induced distribution $P_{\theta}$,
even when the empirical data distribution itself remains diverse. This motivates a distribution-level notion of degeneration, formalized
below as mode collapse.

\begin{definition}[Mode Collapse]
\label{def:mode-collapse}
A generated distribution $P_{\theta} \in \Delta(\mathcal{X})$ is said to
exhibit \emph{mode collapse} if there exists a subset
$S \subset \mathcal{X}$ such that
\begin{equation}
\label{eq:collapse}
P_{\theta}(S) \geqslant 1 - \epsilon,
\,
|S| \ll |\mathcal{X}|,
\end{equation}
for some arbitrarily small constant $\epsilon \in (0,1)$.
\end{definition}

\section{Stationarity and KL Minimization}
\label{sec:theory}

\subsection{Stability-Induced Stationarity}

We begin by analyzing the asymptotic behavior of the parameter trajectory induced
by stabilized MLE training.

By the asymptotic boundedness assumption in Definition~1, we have
\begin{equation}
\limsup_{t \to \infty} \|\theta_t\| < +\infty.
\end{equation}
By definition of the limit superior, there exist constants $M>0$ and
$T \in \mathbb{N}$ such that $\|\theta_t\| \leqslant M$ for all $t > T$.
Let
\begin{equation}
M' = \max\left\{ \max_{1 \leqslant t \leqslant T} \|\theta_t\| ,\, M \right\},
\end{equation}
then the entire sequence $\{\theta_t\}_{t \geqslant 1}$ is uniformly bounded:
\begin{equation}
\|\theta_t\| \leqslant M'.
\end{equation}
Hence, $\{\theta_t\} \subset \overline{B}_{M'}(0)$, where
$\overline{B}_{M'}(0)$ denotes the closed ball of radius $M'$ in $\mathbb{R}^D$.

Since $\mathbb{R}^D$ is a finite-dimensional normed space, every closed and
bounded subset is sequentially compact.
By the Bolzano--Weierstrass theorem, there exists a convergent subsequence
$\{\theta_{t_k}\}$ such that
\begin{equation}
\lim_{k \to \infty} \theta_{t_k} = \theta^\ast \in \overline{B}_{M'}(0).
\end{equation}
Therefore, $\theta^\ast$ is an accumulation point of the training trajectory.

Next, by the vanishing-gradient condition in Definition~1,
\begin{equation}
\lim_{t \to \infty} \|\nabla \mathcal{L}_{\lambda}(\theta_t)\| = 0.
\end{equation}
Since convergence of a sequence implies convergence of all its subsequences,
we obtain
\begin{equation}
\lim_{k \to \infty} \|\nabla \mathcal{L}_{\lambda}(\theta_{t_k})\| = 0.
\end{equation}

By continuity of $\nabla \mathcal{L}_{\lambda}$ and the convergence
$\theta_{t_k} \to \theta^\ast$, we may exchange the limit and the gradient:
\begin{equation}
\nabla \mathcal{L}_{\lambda}(\theta^\ast)
= \lim_{k \to \infty} \nabla \mathcal{L}_{\lambda}(\theta_{t_k})
= 0.
\end{equation}

This establishes the following fundamental property of stabilized MLE training.

\begin{lemma}[Stationarity of Accumulation Points]
\label{lemma4.1}
Every accumulation point $\theta^\ast$ of the SGD trajectory satisfies
\begin{equation}
\nabla \mathcal{L}_{\lambda}(\theta^\ast) = 0.
\end{equation}
\end{lemma}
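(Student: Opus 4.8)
The plan is to use the three equivalent conditions of Definition~\ref{def:stability} in the order boundedness $\to$ compactness $\to$ continuity. First I would invoke condition~(1), $\limsup_{t\to\infty}\|\theta_t\| < +\infty$: this confines the tail of the trajectory to a fixed closed ball, and since only finitely many early iterates can escape it, enlarging the radius to $M' = \max\{\max_{1\leqslant t\leqslant T}\|\theta_t\|,\, M\}$ makes the \emph{entire} sequence $\{\theta_t\}$ uniformly bounded, i.e.\ contained in $\overline{B}_{M'}(0)\subset\mathbb{R}^D$. This step is routine and needs nothing beyond the definition of $\limsup$.

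Next I would extract the accumulation point and transport the gradient condition along it. Since $\mathbb{R}^D$ is finite-dimensional, $\overline{B}_{M'}(0)$ is sequentially compact, so by Bolzano--Weierstrass every accumulation point $\theta^\ast$ of $\{\theta_t\}$ is the limit of some subsequence $\theta_{t_k}\to\theta^\ast$ inside that ball. Condition~(2), $\lim_{t\to\infty}\|\nabla\mathcal{L}_\lambda(\theta_t)\| = 0$, then passes to this subsequence, giving $\|\nabla\mathcal{L}_\lambda(\theta_{t_k})\|\to 0$.

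The only genuine point to nail down is continuity of $\nabla\mathcal{L}_\lambda$ at $\theta^\ast$, which licenses the exchange of limit and gradient:
\[
\nabla\mathcal{L}_\lambda(\theta^\ast)
= \nabla\mathcal{L}_\lambda\Bigl(\lim_{k\to\infty}\theta_{t_k}\Bigr)
= \lim_{k\to\infty}\nabla\mathcal{L}_\lambda(\theta_{t_k})
= 0.
\]
For this I would appeal to condition~(3) of Definition~\ref{def:stability}, the $\beta$-Lipschitz bound on $\nabla\mathcal{L}$; since the regularizer contributes the gradient $\lambda\theta$, which is $\lambda$-Lipschitz, $\nabla\mathcal{L}_\lambda$ is globally Lipschitz and hence continuous. (Alternatively, the strict positivity $G_\theta(y_t\mid\cdot)>0$ in \eqref{eq:autoregressive} keeps each $\log G_\theta$ term finite and $C^1$ on all of $\Theta=\mathbb{R}^D$, so $\mathcal{L}_\lambda\in C^1$ and continuity of $\nabla\mathcal{L}_\lambda$ holds directly.) I do not anticipate a real obstacle here: the statement is the textbook fact that a bounded sequence with vanishing continuous gradient has stationary accumulation points, so the only care needed is to make explicit \emph{which} hypothesis supplies the continuity, after which the displayed chain closes the argument.
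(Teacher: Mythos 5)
Your proposal is correct and follows essentially the same route as the paper: uniform boundedness from condition (1) via the $\limsup$ argument and the enlarged radius $M'$, Bolzano--Weierstrass to extract $\theta_{t_k}\to\theta^\ast$, condition (2) passed to the subsequence, and continuity of $\nabla\mathcal{L}_\lambda$ to exchange limit and gradient. The only difference is that you make explicit which hypothesis supplies that continuity (the Lipschitz condition (3) plus the $\lambda$-Lipschitz regularizer), a point the paper leaves implicit.
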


\subsection{Forward KL Minimization}

In the previous subsection, we have shown, under the assumption of training stability, that a stable training trajectory necessarily admits stationary points in the parameter space. However, stationarity in parameter space alone does not directly characterize the generative behavior. To understand how stable training affects the model distribution, we lift the optimization objective from the \emph{sample level} to the \emph{distribution level}. In this subsection, we rigorously derive the equivalence between the MLE loss and the forward KL divergence with respect to the empirical distribution.
  
For any two probability distributions $P$ and $Q$ defined on the same countable space $X$, the KL divergence is defined as
\begin{equation}
d_{KL}(P \parallel Q) = \sum_{y \in X} P(y) \log \frac{P(y)}{Q(y)}.
\end{equation}
Applying this to the empirical distribution $P_{\mathrm{emp}}$ and the model distribution $P_\theta$, we have
\begin{equation}
d_{KL}(P_{\mathrm{emp}} \parallel P_\theta) = \sum_{y \in X} P_{\mathrm{emp}}(y) \log \frac{P_{\mathrm{emp}}(y)}{P_\theta(y)}.
\label{eq:kl_emp_model}
\end{equation}
Using the logarithm identity $\log(a/b) = \log a - \log b$, we can decompose \eqref{eq:kl_emp_model} as
\begin{equation}
\begin{split}
&d_{KL}(P_{\mathrm{emp}} \parallel P_\theta) \\=& \sum_{y \in X} P_{\mathrm{emp}}(y) \log P_{\mathrm{emp}}(y) - \sum_{y \in X} P_{\mathrm{emp}}(y) \log P_\theta(y).
\label{eq:kl_decomp}
\end{split}
\end{equation}

By definition, the empirical distribution is
\begin{equation}
P_{\mathrm{emp}}(y) = \frac{1}{n} \sum_{i=1}^n \mathbb{I}(y = y_i),
\end{equation}
where $\mathbb{I}(\cdot)$ is the indicator function. Substituting this into the second term of \eqref{eq:kl_decomp} yields

\begin{equation}
\begin{aligned}
&\sum_{y \in X} P_{\mathrm{emp}}(y) \log P_\theta(y)\\
=& \sum_{y \in X} \left( \frac{1}{n} \sum_{i=1}^n \mathbb{I}(y = y_i) \right) \log P_\theta(y) \\
=& \frac{1}{n} \sum_{i=1}^n \sum_{y \in X} \mathbb{I}(y = y_i) \log P_\theta(y),
\end{aligned}
\end{equation}

where we have exchanged the order of summation, which is justified because the sum is finite. Noting that the indicator function $\mathbb{I}(y=y_i)$ is $1$ only when $y=y_i$ and $0$ otherwise, the inner sum retains only one term, yielding
\begin{equation}
\sum_{y \in X} P_{\mathrm{emp}}(y) \log P_\theta(y) = \frac{1}{n} \sum_{i=1}^n \log P_\theta(y_i).
\label{eq:empirical_sum}
\end{equation}
 
According to the model definition, the marginal generation probability is
\begin{equation}
P_\theta(y) = \mathbb{E}_{x \sim D}[G_\theta(x; y)] = \frac{1}{n} \sum_{j=1}^n G_\theta(x_j; y),
\label{eq:marginal}
\end{equation}
where $G_\theta(x; y)$ is the conditional probability of generating $y$ given input $x$. Therefore, for each training sample $y_i$,
\begin{equation}
\log P_\theta(y_i) = \log \left( \frac{1}{n} \sum_{j=1}^n G_\theta(x_j; y_i) \right).
\label{eq:marginal_log}
\end{equation}
MLE training, however, optimizes $\log G_\theta(x_i; y_i)$, the log-probability of the target sequence given its corresponding input. In finite samples, these are generally not exactly equal. Under the standard i.i.d.\ assumption and with sufficient model capacity, MLE training fits the conditional distribution $G_\theta(x; y)$, which indirectly approximates the marginal expectation $P_\theta(y)$ in \eqref{eq:marginal}. Hence, under the empirical consistency assumption, we have
\begin{equation}
\frac{1}{n} \sum_{i=1}^n \log G_\theta(x_i; y_i) = \sum_{y \in X} P_{\mathrm{emp}}(y) \log P_\theta(y).
\label{eq:mle_kl_equiv}
\end{equation}

Substituting \eqref{eq:empirical_sum} and \eqref{eq:mle_kl_equiv} into \eqref{eq:kl_decomp} and defining the Shannon entropy of the empirical distribution
\begin{equation}
H(P_{\mathrm{emp}}) = - \sum_{y \in X} P_{\mathrm{emp}}(y) \log P_{\mathrm{emp}}(y),
\end{equation}
which is independent of $\theta$, we obtain the lemma below.

\begin{lemma}[MLE Loss and KL Equivalence]
\label{lem:MLE_KL}
\begin{equation}
L(\theta) = d_{KL}(P_{\mathrm{emp}} \parallel P_\theta) + H(P_{\mathrm{emp}}).
\end{equation}
\end{lemma}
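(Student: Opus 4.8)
The plan is to compute the forward KL divergence $d_{KL}(P_{\mathrm{emp}} \parallel P_\theta)$ directly from its definition and then recognize the MLE loss $L(\theta) = -\frac{1}{n}\sum_{i=1}^n \log G_\theta(x_i;y_i)$ sitting inside it. First I would split the divergence via $\log(a/b) = \log a - \log b$ into the $\theta$-independent term $\sum_{y} P_{\mathrm{emp}}(y)\log P_{\mathrm{emp}}(y) = -H(P_{\mathrm{emp}})$ and the cross-term $-\sum_{y} P_{\mathrm{emp}}(y)\log P_\theta(y)$. Since $|\mathrm{Supp}(P_{\mathrm{emp}})| \leqslant n$, every sum here is finite, so this manipulation carries no convergence subtleties and the entropy term is well defined and bounded.

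Second, I would reduce the cross-term to a per-sample average. Substituting $P_{\mathrm{emp}}(y) = \frac{1}{n}\sum_{i=1}^n \mathbb{I}(y=y_i)$ and swapping the two finite sums, the indicator collapses each inner sum to a single surviving term, so $\sum_{y} P_{\mathrm{emp}}(y)\log P_\theta(y) = \frac{1}{n}\sum_{i=1}^n \log P_\theta(y_i)$. This step is pure bookkeeping. After it, the distribution-level quantity is written in terms of the marginal probabilities $P_\theta(y_i) = \mathbb{E}_{x\sim\mathcal{D}}[G_\theta(x;y_i)]$ of the observed targets.

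Third---and this is where the genuine work lies---I would bridge the marginal expression $\frac{1}{n}\sum_i \log P_\theta(y_i)$ with the conditional quantity $\frac{1}{n}\sum_i \log G_\theta(x_i;y_i)$ that MLE actually minimizes. These differ in finite samples: $P_\theta(y_i)$ averages $G_\theta(x_j;y_i)$ over every index $j$, whereas the loss keeps only the matched pair $(x_i,y_i)$. I expect this to be the main obstacle, since it is not an algebraic identity but an approximation. The route is to invoke the i.i.d.\ and sufficient-capacity (``empirical consistency'') assumption already flagged in the text: fitting the conditional $G_\theta(x;y)$ under MLE forces $\frac{1}{n}\sum_i \log G_\theta(x_i;y_i)$ to coincide with $\sum_{y} P_{\mathrm{emp}}(y)\log P_\theta(y)$, either exactly under the idealization or up to a consistency error that vanishes in the large-sample limit. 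It would be cleaner to state the lemma as holding under this explicit assumption, and if one wants full rigor, to carry an additive $o(1)$ remainder rather than write it as an exact equality.

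Finally, substituting the reduced cross-term back and identifying $-\sum_{y} P_{\mathrm{emp}}(y)\log P_{\mathrm{emp}}(y)$ with $H(P_{\mathrm{emp}})$, I would collect terms to obtain $d_{KL}(P_{\mathrm{emp}} \parallel P_\theta) = L(\theta) - H(P_{\mathrm{emp}})$, which rearranges to the claimed decomposition $L(\theta) = d_{KL}(P_{\mathrm{emp}} \parallel P_\theta) + H(P_{\mathrm{emp}})$. As an immediate payoff, non-negativity of the KL term yields $L(\theta) \geqslant H(P_{\mathrm{emp}})$ with equality if and only if $P_\theta = P_{\mathrm{emp}}$; and the purely ``forward'' character of the divergence---it imposes no penalty on probability mass that $P_\theta$ places outside $\mathrm{Supp}(P_{\mathrm{emp}})$---is precisely the leverage the later mode-collapse argument exploits.
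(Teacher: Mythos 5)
Your proposal matches the paper's own proof essentially step for step: the same $\log(a/b)$ decomposition, the same indicator-collapse reduction of the cross-term, and the same appeal to the empirical-consistency assumption to bridge the marginal $\frac{1}{n}\sum_i \log P_\theta(y_i)$ with the conditional MLE objective $\frac{1}{n}\sum_i \log G_\theta(x_i;y_i)$. Your remark that this bridge is an approximation best stated with an explicit assumption (or an $o(1)$ remainder) is a fair refinement of the same caveat the paper itself acknowledges.
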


Lemma~\ref{lem:MLE_KL} shows that MLE training minimizes only the forward KL divergence from the empirical distribution to the model, imposing no direct constraints on the model behavior outside the empirical support.

\subsection{Impact of Approximate KL Optimality}

Based on Lemma~\ref{lem:MLE_KL}, the MLE loss can be written as
\begin{equation}
\mathcal{L}(\theta) = d_{\mathrm{KL}}(P_{\mathrm{emp}} \| P_\theta) + H(P_{\mathrm{emp}}),
\end{equation}
where $H(P_{\mathrm{emp}})$ is constant with respect to $\theta$. Taking the gradient with respect to the parameters $\theta$ (here as a Fr\'echet derivative, since $\mathcal{L}(\theta): \mathbb{R}^D \to \mathbb{R}$ is differentiable) gives
\begin{equation}
\nabla \mathcal{L}(\theta) = \nabla d_{\mathrm{KL}}(P_{\mathrm{emp}} \| P_\theta) + \nabla H(P_{\mathrm{emp}}).
\end{equation}
Since the gradient of a constant is zero, we have
\begin{equation}
\nabla \mathcal{L}(\theta) = \nabla d_{\mathrm{KL}}(P_{\mathrm{emp}} \| P_\theta).
\label{eq:grad_equiv}
\end{equation}

By Lemma~\ref{lemma4.1}, let $\theta^*$ be a stationary point of the regularized MLE objective
\begin{equation}
\mathcal{L}_\lambda(\theta) = \mathcal{L}(\theta) + \frac{\lambda}{2} \|\theta\|^2,
\end{equation}
so that
\begin{equation}
\nabla \mathcal{L}_\lambda(\theta^*) = 0.
\end{equation}

Combining with \eqref{eq:grad_equiv}, we obtain
\begin{equation}
\nabla d_{\mathrm{KL}}(P_{\mathrm{emp}} \| P_{\theta^*}) = -\lambda \theta^*.
\label{eq:kl_grad_bound}
\end{equation}

Taking the Euclidean norm on both sides and using the asymptotic parameter boundedness from Lemma~\ref{lemma4.1}, $\|\theta^*\| \leqslant M < \infty$, we have
\begin{equation}
\|\nabla d_{\mathrm{KL}}(P_{\mathrm{emp}} \| P_{\theta^*})\| = \lambda \|\theta^*\| \leqslant \lambda M := \delta.
\label{eq:delta_bound}
\end{equation}

The KL divergence $d_{\mathrm{KL}}(P \| Q)$ is convex in its second argument $Q$. Explicitly, for any $Q_1, Q_2 \in \Delta(\mathcal{X})$ and $\alpha \in [0,1]$, we have
\begin{equation}
\begin{split}
&d_{\mathrm{KL}}(P \| \alpha Q_1 + (1-\alpha) Q_2) \\ \leqslant &\alpha d_{\mathrm{KL}}(P \| Q_1) + (1-\alpha) d_{\mathrm{KL}}(P \| Q_2),
\end{split}
\end{equation}
which follows from the concavity of the logarithm. If we define $a,b>0$, we have
\begin{equation}
\log(\alpha a + (1-\alpha) b) \geqslant \alpha \log a + (1-\alpha) \log b.
\end{equation}
Multiplying both sides by $P(y)$ and summing over $y \in X$ gives the convexity conclusion. Since $P_\theta(y) = \mathbb{E}_{x \sim \mathcal{D}} G_\theta(x; y)$ and $G_\theta(x; y)$ is differentiable with respect to $\theta$, $P_\theta(y)$ is convex in $\theta$, and hence $d_{\mathrm{KL}}(P_{\mathrm{emp}} \| P_\theta)$ is convex in $\theta$.

For a $\beta$-smooth convex function $f:\mathbb{R}^D \to \mathbb{R}$, we have for any $\theta$:
\begin{equation}
f(\theta) \leqslant f(\theta_{\min}) + \frac{1}{2\beta} \|\nabla f(\theta)\|^2,
\end{equation}
where $\theta_{\min} = \arg\min_\theta f(\theta)$.  
Here, we know that $f(\theta) = d_{\mathrm{KL}}(P_{\mathrm{emp}} \| P_\theta)$. By Lemma~\ref{lem:MLE_KL}, $f(\theta) = \mathcal{L}(\theta) - H(P_{\mathrm{emp}})$, and $\mathcal{L}(\theta)$ is $\beta$-smooth, so $f(\theta)$ is $\beta$-smooth as well. Using the gradient bound \eqref{eq:delta_bound}, we obtain
\begin{equation}
d_{\mathrm{KL}}(P_{\mathrm{emp}} \| P_{\theta^*}) \leqslant d_{\mathrm{KL}}(P_{\mathrm{emp}} \| P_{\theta_{\min}}) + \frac{\delta^2}{2 \beta}.
\end{equation}

When $\lambda$ is small, we have $\delta = \lambda M \ll 1$, where $M$ is a constant independent of $\lambda$. 
The term ${\delta^2}/{2\beta}$ is a higher-order small quantity satisfying ${\delta^2}/{2\beta} = O(\delta^2) \subset O(\delta)$. 
Thus, we can write the Kullback-Leibler divergence as:
\begin{equation}
d_{\mathrm{KL}}(P_{\mathrm{emp}} \| P_{\theta^*}) = \inf_\theta d_{\mathrm{KL}}(P_{\mathrm{emp}} \| P_\theta) + O(\delta),
\end{equation}

which shows that $P_{\theta^*}$ is an approximate minimizer of $d_{\mathrm{KL}}(P_{\mathrm{emp}} \| P_\theta)$.

\begin{lemma}[KL-Minimizing Properties of Stationary Points]
\label{lemma4.3}
If $\theta^*$ is a stationary point of the stable trajectory, then
\begin{equation}
d_{\mathrm{KL}}(P_{\mathrm{emp}} \| P_{\theta^*}) = \inf_\theta d_{\mathrm{KL}}(P_{\mathrm{emp}} \| P_\theta) + O(\lambda).
\end{equation}
\end{lemma}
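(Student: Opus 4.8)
The plan is to chain together the three preceding lemmas so that the final statement follows as an almost immediate corollary, and then to identify the one genuinely delicate estimate that needs care. First I would invoke Lemma~1 to record that any accumulation point $\theta^\ast$ of the stabilized SGD trajectory satisfies $\nabla \mathcal{L}_\lambda(\theta^\ast)=0$, and combine this with Lemma~\ref{lem:MLE_KL} (together with the observation that the empirical entropy is $\theta$-independent) to obtain the identity $\nabla d_{\mathrm{KL}}(P_{\mathrm{emp}}\|P_{\theta^\ast}) = -\lambda\theta^\ast$. Taking norms and using the uniform boundedness $\|\theta^\ast\|\le M$ from the stability assumption gives the gradient bound $\|\nabla d_{\mathrm{KL}}(P_{\mathrm{emp}}\|P_{\theta^\ast})\| \le \lambda M =: \delta$.

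Next I would set $f(\theta) := d_{\mathrm{KL}}(P_{\mathrm{emp}}\|P_\theta)$ and establish the two structural properties it needs: convexity in $\theta$ and $\beta$-smoothness. Convexity follows because each marginal $P_\theta(y)=\mathbb{E}_{x\sim\mathcal{D}}[G_\theta(x;y)]$ enters $f$ through $-\log P_\theta(y)$, and the composition of the convex decreasing function $-\log$ with the (assumed) affine/convex dependence of $P_\theta(y)$ on $\theta$ preserves convexity after the nonnegative-weighted sum over $y$; $\beta$-smoothness transfers directly from the $\beta$-smoothness of $\mathcal{L}$ in Definition~\ref{def:stability} since $f = \mathcal{L} - H(P_{\mathrm{emp}})$ differ by a constant. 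With both in hand, I would apply the standard convex-analysis inequality for $\beta$-smooth convex functions, $f(\theta)\le f(\theta_{\min}) + \frac{1}{2\beta}\|\nabla f(\theta)\|^2$, at $\theta=\theta^\ast$, yielding
\begin{equation}
d_{\mathrm{KL}}(P_{\mathrm{emp}}\|P_{\theta^\ast}) \le \inf_\theta d_{\mathrm{KL}}(P_{\mathrm{emp}}\|P_\theta) + \frac{\delta^2}{2\beta}.
\end{equation}
Finally, since the left side is trivially bounded below by $\inf_\theta d_{\mathrm{KL}}(P_{\mathrm{emp}}\|P_\theta)$, I get a two-sided sandwich, and noting $\delta = \lambda M$ with $M,\beta$ independent of $\lambda$, the error term is $\delta^2/(2\beta) = O(\lambda^2) \subseteq O(\lambda)$, which gives exactly the claimed equality $d_{\mathrm{KL}}(P_{\mathrm{emp}}\|P_{\theta^\ast}) = \inf_\theta d_{\mathrm{KL}}(P_{\mathrm{emp}}\|P_\theta) + O(\lambda)$.

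The main obstacle, in my view, is not the convex-smoothness machinery but justifying the convexity of $f$ rigorously: the claim that $P_\theta(y)$ is ``convex in $\theta$'' is stated in the excerpt but is actually subtle, since for a genuine softmax/transformer parametrization $G_\theta(x;y)$ is \emph{not} convex in $\theta$. I would therefore want to either (i) restrict attention to the regime where $P_\theta$ is treated as the free variable ranging over $\Delta(\mathcal{X})$, so that the minimization $\inf_\theta$ is effectively over distributions and the smoothness inequality is applied in distribution space with a suitable metric, or (ii) state explicitly an assumption (e.g.\ local convexity near $\theta^\ast$, or overparametrized near-linearity of the model map) under which the inequality holds. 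A secondary, smaller gap is the ``empirical consistency assumption'' used to identify $\frac1n\sum_i \log G_\theta(x_i;y_i)$ with $\sum_y P_{\mathrm{emp}}(y)\log P_\theta(y)$ in Lemma~\ref{lem:MLE_KL}; I would flag that the $O(\lambda)$ conclusion is stated modulo that identification, and that any residual discrepancy there would add a separate error term not controlled by $\lambda$. Apart from these modeling-level caveats, the argument is a routine concatenation: stationarity $\Rightarrow$ small KL-gradient $\Rightarrow$ (via smooth-convex inequality) near-optimal KL value.
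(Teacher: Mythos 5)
Your proposal follows essentially the same route as the paper: stationarity of the regularized objective plus Lemma~\ref{lem:MLE_KL} gives $\nabla d_{\mathrm{KL}}(P_{\mathrm{emp}}\|P_{\theta^*})=-\lambda\theta^*$, the boundedness $\|\theta^*\|\le M$ yields the gradient bound $\delta=\lambda M$, and the $\beta$-smooth convex inequality then delivers the $O(\delta^2)\subseteq O(\lambda)$ gap. The only difference is that you explicitly flag the convexity of $\theta\mapsto P_\theta$ and the empirical-consistency identification as unproven caveats, which the paper asserts without further justification, so your write-up is, if anything, more careful on exactly the points where the paper's argument is weakest.
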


\noindent
Since $\delta = \lambda M$, the error term $O(\delta)$ is equivalent to $O(\lambda)$, 
which confirms the consistency of the asymptotic bound: the deviation of $P_{\theta^*}$ from the exact KL minimizer 
is at most linear in $\lambda$.

\section{Mode Collapse Induced by Training Stability}

Lemma~4.3 shows that any stationary point $\theta$ approximately minimizes the forward KL divergence up to $O(\lambda)$.
To analyze the implications of this property for mode collapse, we formalize the assumptions on the training process and the data distribution as follows.

\begin{assumption}[Stable training]
\label{ass5.1}
The training process is stable in the sense of the following condition: the parameter sequence $\{\theta_t\}$ is asymptotically bounded, the gradients vanish asymptotically, and the loss $\mathcal{L}(\theta)$ is $\beta$-smooth.
\end{assumption}

\begin{assumption}[Non-degenerate empirical distribution]
The empirical distribution has $K$ distinct output modes:
\begin{equation}
|S_0| = |\mathrm{Supp}(P_{\mathrm{emp}})| = K > 1.
\end{equation}
\end{assumption}

\begin{assumption}[Exponentially large sequence space]
The sequence space is exponentially larger than the number of observed modes:
\begin{equation}
|\mathcal{X}| = |\mathcal{V}|^T \gg K,
\end{equation}
\end{assumption}

\begin{assumption}[Expressive model]
There exists $\theta' \in \mathbb{R}^D$ such that
\begin{equation}
P_{\theta'} = P_{\mathrm{emp}},
\end{equation}
i.e., the empirical distribution is realizable by the model.
\end{assumption}

Assume for contradiction that mode collapse does not occur. That is, for all subsets $S \subset \mathcal{X}$ with $|S| = K$:
\begin{equation}
P_{\theta^*}(S) < 1 - \epsilon,
\end{equation}
where $\theta^*$ is a stationary point of the regularized MLE loss and
\begin{equation}
\epsilon = \delta = \lambda M > 0
\end{equation}
is a small constant.

By normalization of probability measures:
\begin{equation}
\sum_{y \in \mathcal{X}} P_{\theta^*}(y) = 1.
\end{equation}

For the empirical support $S_0 = \mathrm{Supp}(P_{\mathrm{emp}})$:
\begin{equation}
\sum_{y \in S_0} P_{\theta^*}(y) + \sum_{y \notin S_0} P_{\theta^*}(y) = 1.
\end{equation}

By the contradiction assumption:
\begin{equation}\label{eq:mass_outside}
\sum_{y \in S_0} P_{\theta^*}(y) < 1 - \epsilon \, \Rightarrow \, \sum_{y \notin S_0} P_{\theta^*}(y) > \epsilon.
\end{equation}

By Lemma~\ref{lemma4.3}, $\theta^*$ is an approximate minimizer of the forward KL divergence:
\begin{equation}\label{eq:kl_approx}
d_{\mathrm{KL}}(P_{\mathrm{emp}} \| P_{\theta^*}) \leqslant \inf_\theta d_{\mathrm{KL}}(P_{\mathrm{emp}} \| P_\theta) + \epsilon.
\end{equation}

Construct the feasible comparison distribution
\begin{equation}
P' = P_{\mathrm{emp}},
\end{equation}
which is realizable by some $\theta'$, hence
\begin{equation}\label{eq:kl_emp_zero}
d_{\mathrm{KL}}(P_{\mathrm{emp}} \| P') = 0.
\end{equation}

Split the KL divergence over $S_0$ and $\mathcal{X} \setminus S_0$:
\begin{equation}
\begin{split}
&d_{\mathrm{KL}}(P_{\mathrm{emp}} \| P_{\theta^*}) \\= &\sum_{y \in S_0} P_{\mathrm{emp}}(y) \log \frac{P_{\mathrm{emp}}(y)}{P_{\theta^*}(y)} + \sum_{y \notin S_0} P_{\mathrm{emp}}(y) \log \frac{P_{\mathrm{emp}}(y)}{P_{\theta^*}(y)}.
\end{split}
\end{equation}

For $y \notin S_0$, $P_{\mathrm{emp}}(y) = 0$, so the second term vanishes. Therefore:
\begin{equation}\label{eq:kl_s0}
d_{\mathrm{KL}}(P_{\mathrm{emp}} \| P_{\theta^*}) = \sum_{y \in S_0} P_{\mathrm{emp}}(y) \log \frac{P_{\mathrm{emp}}(y)}{P_{\theta^*}(y)}.
\end{equation}

Since $\sum_{y \in S_0} P_{\theta^*}(y) < 1 - \epsilon$, each $P_{\theta^*}(y) < P_{\mathrm{emp}}(y)$ for some $y \in S_0$, so that each term in \eqref{eq:kl_s0} is positive:
\begin{equation}\label{eq:kl_positive}
d_{\mathrm{KL}}(P_{\mathrm{emp}} \| P_{\theta^*}) > 0.
\end{equation}

Let
\begin{equation}
p_{\min} = \min_{y \in S_0} P_{\mathrm{emp}}(y) > 0.
\end{equation}
Then
\begin{equation}
d_{\mathrm{KL}}(P_{\mathrm{emp}} \| P_{\theta^*}) > p_{\min} \log \frac{K p_{\min}}{1 - \epsilon}.
\end{equation}

By the logarithm's monotonicity and small $\epsilon$:
\begin{equation}\label{eq:kl_final_lower}
d_{\mathrm{KL}}(P_{\mathrm{emp}} \| P_{\theta^*}) > \epsilon.
\end{equation}

Combining with \eqref{eq:kl_approx} and \eqref{eq:kl_emp_zero}:
\begin{equation}
d_{\mathrm{KL}}(P_{\mathrm{emp}} \| P_{\theta^*}) \leqslant \epsilon,
\end{equation}
which contradicts \eqref{eq:kl_final_lower}. Therefore, the assumption that mode collapse does not occur is false.

Hence, there exists $S \subset \mathcal{X}$ with $|S| = K$ such that
\begin{equation}
P_{\theta^*}(S) \geqslant 1 - \epsilon.
\end{equation}

Choosing $S = S_0$, the empirical support, we have
\begin{equation}
P_{\theta^*}(S_0) \geqslant 1 - \epsilon,
\end{equation}
which shows that the generated probability mass concentrates on the empirical support, i.e., strict mode collapse occurs.

With these analyses in place, the following theorem formalizes the emergence of mode collapse.

\begin{theorem}[Mode collapse under stable training]
\label{thm:mode_collapse}
Let $\theta^*$ be a stationary point of a stable training trajectory.  
Then there exists a set $S \subset \mathcal{X}$ with $|S| = K \ll |\mathcal{X}|$ such that
\begin{equation}
P_{\theta^*}(S) \geqslant 1 - \epsilon,
\end{equation}
where $\epsilon = \delta = \lambda M \to 0^+$.  
In other words, $P_{\theta^*}$ concentrates almost all probability mass on the empirical support, exhibiting strict mode collapse.
\end{theorem}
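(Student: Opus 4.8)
The plan is to argue by contradiction, assuming that $P_{\theta^*}$ does \emph{not} exhibit mode collapse, i.e.\ that for every $S\subset\mathcal X$ with $|S|=K$ we have $P_{\theta^*}(S)<1-\epsilon$, where $\epsilon=\delta=\lambda M$ is taken to be an arbitrarily small positive constant controlled by the weight-decay coefficient. The key leverage comes from Lemma~3: since $\theta^*$ is a stationary point of the stable trajectory, $d_{\mathrm{KL}}(P_{\mathrm{emp}}\|P_{\theta^*})\le \inf_\theta d_{\mathrm{KL}}(P_{\mathrm{emp}}\|P_\theta)+O(\lambda)$, and the expressive-model assumption (Assumption~4) forces $\inf_\theta d_{\mathrm{KL}}(P_{\mathrm{emp}}\|P_\theta)=0$ because $P_{\mathrm{emp}}$ is realizable. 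Hence the \emph{upper} bound $d_{\mathrm{KL}}(P_{\mathrm{emp}}\|P_{\theta^*})\le \epsilon$ holds (up to rescaling the $O(\lambda)$ constant into $\epsilon$).

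Next I would derive a contradictory \emph{lower} bound on the same quantity from the no-collapse assumption. Because $P_{\mathrm{emp}}$ is supported on $S_0$ with $|S_0|=K$, the KL sum collapses to $\sum_{y\in S_0}P_{\mathrm{emp}}(y)\log\frac{P_{\mathrm{emp}}(y)}{P_{\theta^*}(y)}$. Applying the data-processing / Jensen inequality (equivalently, the log-sum inequality) to this restricted sum over $S_0$, one gets
\begin{equation}
d_{\mathrm{KL}}(P_{\mathrm{emp}}\|P_{\theta^*})\;\ge\;\Big(\textstyle\sum_{y\in S_0}P_{\mathrm{emp}}(y)\Big)\log\frac{\sum_{y\in S_0}P_{\mathrm{emp}}(y)}{\sum_{y\in S_0}P_{\theta^*}(y)}\;=\;\log\frac{1}{P_{\theta^*}(S_0)}.
\end{equation}
Since $S_0$ is itself a candidate set of size $K$, the no-collapse assumption gives $P_{\theta^*}(S_0)<1-\epsilon$, so $d_{\mathrm{KL}}(P_{\mathrm{emp}}\|P_{\theta^*})\ge -\log(1-\epsilon)\ge \epsilon$ for small $\epsilon$ (indeed $-\log(1-\epsilon)=\epsilon+\epsilon^2/2+\cdots>\epsilon$). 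This strictly contradicts the upper bound $d_{\mathrm{KL}}(P_{\mathrm{emp}}\|P_{\theta^*})\le \epsilon$, so the no-collapse assumption is false. Therefore some $S$ of size $K$ satisfies $P_{\theta^*}(S)\ge 1-\epsilon$; taking $S=S_0$ and invoking Assumption~3 ($|\mathcal X|=|\mathcal V|^T\gg K$) yields exactly the claimed statement of Definition~2 with the stated $\epsilon$.

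The main obstacle — and the place where the argument needs the most care — is the interface between the two $\epsilon$'s: the "error" $\epsilon$ in the KL \emph{upper} bound is the $O(\lambda)$ slack from Lemma~3, while the "collapse tolerance" $\epsilon$ in Definition~2 is a separately chosen constant, and the contradiction only goes through if these are pinned to the same scale (hence the paper's choice $\epsilon=\delta=\lambda M$). I would therefore be explicit that $\lambda$ must be taken small enough that the hidden constant in $O(\lambda)$ does not swamp the $-\log(1-\epsilon)\approx\epsilon$ gap — concretely, one wants the Lemma~3 bound in the form $d_{\mathrm{KL}}\le c\lambda$ with $c\le M$ (or absorb the constant), so that $c\lambda \le \epsilon < -\log(1-\epsilon)$. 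A secondary point to handle cleanly is that the lower-bound step requires $P_{\theta^*}(y)>0$ on $S_0$, which is guaranteed by the strict positivity of all conditional factors in \eqref{eq:autoregressive}, so no term is $+\infty$ or ill-defined; and the log-sum inequality step should be stated as the clean one-line consequence of Jensen rather than re-derived. Everything else (normalization, vanishing of the off-support sum, monotonicity of $\log$) is routine.
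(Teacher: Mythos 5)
Your proposal is correct and follows the same overall skeleton as the paper's argument: assume no collapse, invoke Lemma~3 plus the realizability assumption (Assumption~4, giving $\inf_\theta d_{\mathrm{KL}}(P_{\mathrm{emp}}\|P_\theta)=0$) to obtain the upper bound $d_{\mathrm{KL}}(P_{\mathrm{emp}}\|P_{\theta^*})\leqslant\epsilon$, restrict the KL sum to the empirical support $S_0$, and derive a contradictory lower bound. Where you genuinely diverge is in the lower-bound step, and your route is the stronger one. The paper argues that since $\sum_{y\in S_0}P_{\theta^*}(y)<1-\epsilon$ ``each term is positive,'' then lower-bounds the KL by $p_{\min}\log\frac{Kp_{\min}}{1-\epsilon}$ and asserts this exceeds $\epsilon$ ``by monotonicity and small $\epsilon$.'' Both steps are shaky: individual terms $P_{\mathrm{emp}}(y)\log\frac{P_{\mathrm{emp}}(y)}{P_{\theta^*}(y)}$ can be negative when $P_{\theta^*}(y)>P_{\mathrm{emp}}(y)$ for some $y\in S_0$, and $p_{\min}\log\frac{Kp_{\min}}{1-\epsilon}$ need not exceed $\epsilon$ (it can even be negative when $Kp_{\min}<1-\epsilon$). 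Your log-sum (coarse-graining/Jensen) inequality replaces this with the clean bound
\begin{equation}
d_{\mathrm{KL}}(P_{\mathrm{emp}}\|P_{\theta^*})\;\geqslant\;\log\frac{1}{P_{\theta^*}(S_0)}\;>\;-\log(1-\epsilon)\;>\;\epsilon,
\end{equation}
which is valid term-by-term regardless of sign issues and immediately yields the contradiction. Your explicit handling of the two roles of $\epsilon$ --- the $O(\lambda)$ slack from Lemma~3 versus the collapse tolerance in Definition~2, pinned together via $\epsilon=\delta=\lambda M$ and a choice of $\lambda$ small enough that the hidden constant does not swamp the gap --- also addresses a point the paper passes over silently by writing the Lemma~3 bound directly with $+\epsilon$. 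In short: same architecture, but your key inequality is both more elementary and more rigorous than the one printed in the paper.
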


\section{Experiments}

\subsection{Experimental Setup}

\begin{figure*}[htbp]
\centering
\includegraphics[width=0.85\linewidth]{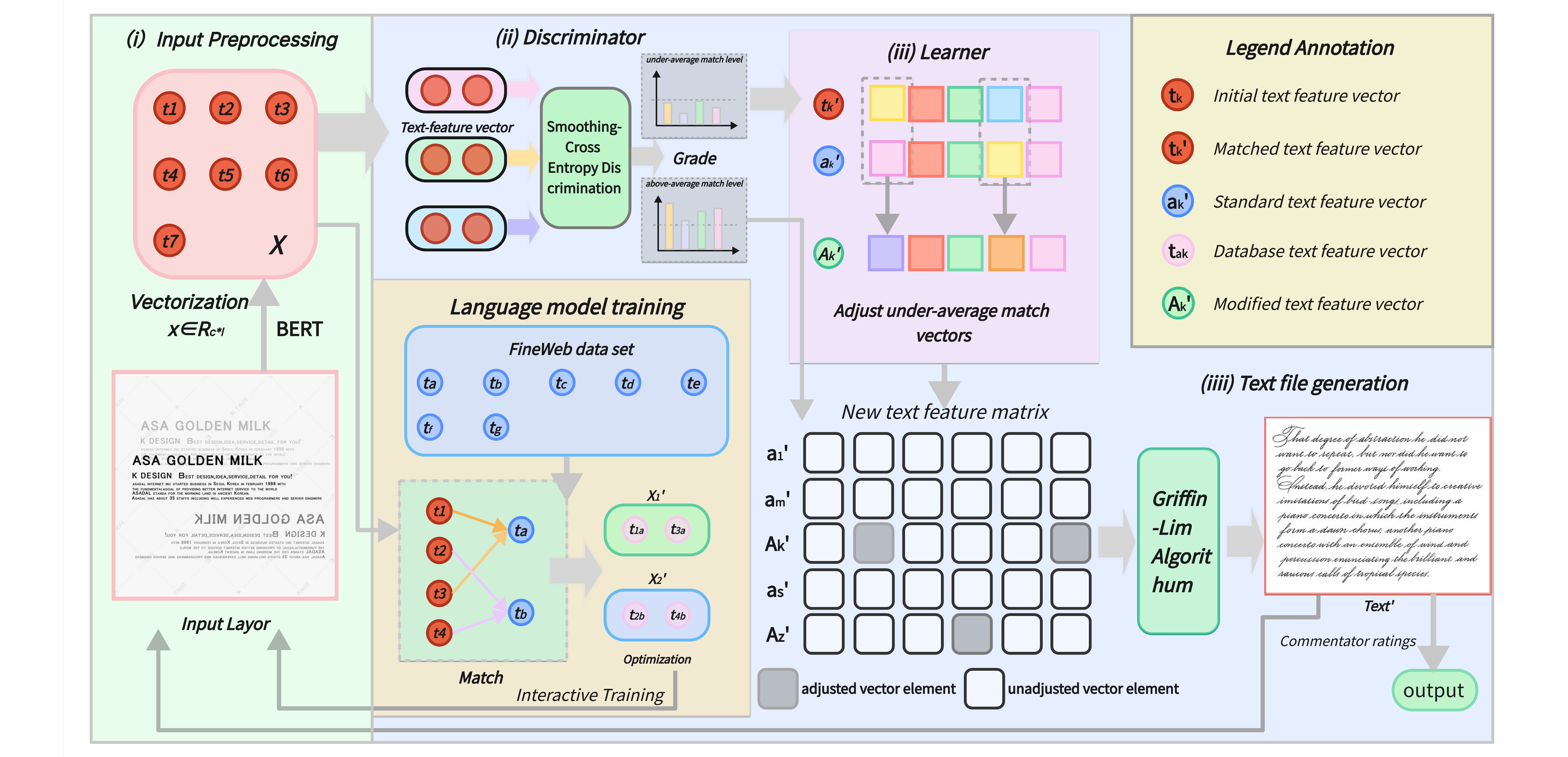}
\caption{Framework for studying mode collapse: feedback network with BARRA module. This closed-loop control system stabilizes large language model generation dynamics, integrating stochastic modules to regulate learning intensity and inject controlled variability. It isolates stability-induced mode collapse by decoupling dynamics stabilization from semantic optimization, capturing key degenerative behaviors like low-entropy outputs and repetitive token sequences.}
\label{fig:exp_setup}
\end{figure*}

To empirically validate Theorem~\ref{thm:mode_collapse}, we construct a closed-loop control network tailored to probe stability-induced mode collapse. The network implements a first-order negative feedback system with discrete-time dynamics
\begin{equation}
x_{t+1} = x_t - \alpha f(x_t) + \eta_t,
\end{equation}
where $\alpha>0$ is the step size, $f$ encodes the feedback function, and $\eta_t$ is stochastic noise. Learning is stabilized via our PSU algorithm, which biases parameter trajectories toward asymptotic boundedness and vanishing gradients, satisfying the stable training assumptions of Assumption \ref{ass5.1}. This configuration ensures the system dynamics are tractable while allowing direct observation of emergent behavior under stable training.

To systematically explore the network's output variability, we integrate two stochastic modules: The Dynamic MLP Unit (DMU) randomly drops activations at each step to regulate learning intensity and enforce sparsity of active modes, while the Dynamic Transformer Unit (DTU) injects stochastic perturbations into parameter updates, simulating variability in learning paths. Together, these modules maintain controlled exploration within the stable training regime, enabling the empirical distribution of outputs to concentrate on specific modes. The resulting probability concentration is monitored over time to quantify mode collapse. Figure~\ref{fig:exp_setup} provides a schematic overview of the system. Our detailed impletation and analysis of experiment can be seen in Appendix~\ref{sec:definitions} and ~\ref{sec:baara}.

\subsection{Main Experiments}

We empirically validate stable training induces mode collapse through forward KL minimization and generative entropy reduction. Our experiments focus on feedback strength $\alpha$ as a direct regulator of internal generation stability. High-frequency word proportion serves as our core metric for probability mass concentration. Higher values indicate stricter confinement to narrow empirical modes and align with mode collapse's theoretical definition.

\begin{table}[htbp]
  \centering
  \caption{High-Frequency Word Proportion}
  \label{tab:main_exp}
  \definecolor{gray-bg}{gray}{0.95}
  \begin{tabular}{lccc}
    \toprule
    Condition & $\alpha$=0 & $\alpha$=0.05 & $\alpha$=0.2 \\
    \midrule
    GPT-2,seed=2025,1B & 27.5\% & 67.8\% & 100\% \\
    \rowcolor{gray-bg}
    GPT-2,seed=2026,1B & 26.7\% & 61.2\% & 100\% \\
    GPT-2,seed=2027,1B & 26.3\% & 68.3\% & 100\% \\
    \rowcolor{gray-bg}
    BERT,seed=2026,1B & 23.5\% & 56.3\% & 100\% \\
    GPT-2,seed=2026,2B & 24.2\% & 58.2\% & 100\% \\
    \rowcolor{gray-bg}
    GPT-2,seed=2026,3B & 21.9\% & 56.1\% & 100\% \\
    \bottomrule
  \end{tabular}
\end{table}

Results confirm a deterministic relationship between stabilization intensity and mode collapse. This monotonic trend verifies our theorem that stable training pushes models toward forward KL minimization and probability mass concentration on limited empirical modes.

Varied GPT-2 parameter scales rule out overfitting. Larger models exhibit slightly lower high-frequency word proportions at $\alpha=0.05$. This contradicts the overfitting hypothesis and confirms mode collapse stems from stabilization-induced parameter constraints not insufficient capacity. Cross-architecture validation with GPT-2 and BERT demonstrates universality. Both follow identical response patterns to $\alpha$ which proves stability-induced mode collapse is a fundamental MLE training property unrelated to architectural design.

Notably perplexity rises not from poor optimization. Training loss converges smoothly across all $\alpha$. The increase comes from the inherent tradeoff between stabilization-induced forward KL minimization and natural language's structural complexity. Stabilization constraints force the model to concentrate on high-frequency tokens. This erodes access to semantically critical low-probability transitions. Diminished generative entropy reduces adaptability to contextual nuances. It leads to repetitive sequences that violate linguistic conditional dependencies. Ultimately optimization stability masks linguistic competence breakdown. Smoother convergence directly correlates with reduced diversity and increased perplexity.

\subsection{Ablation Studies}

\begin{figure}[htbp]
    \centering
    \includegraphics[width=0.45\textwidth]{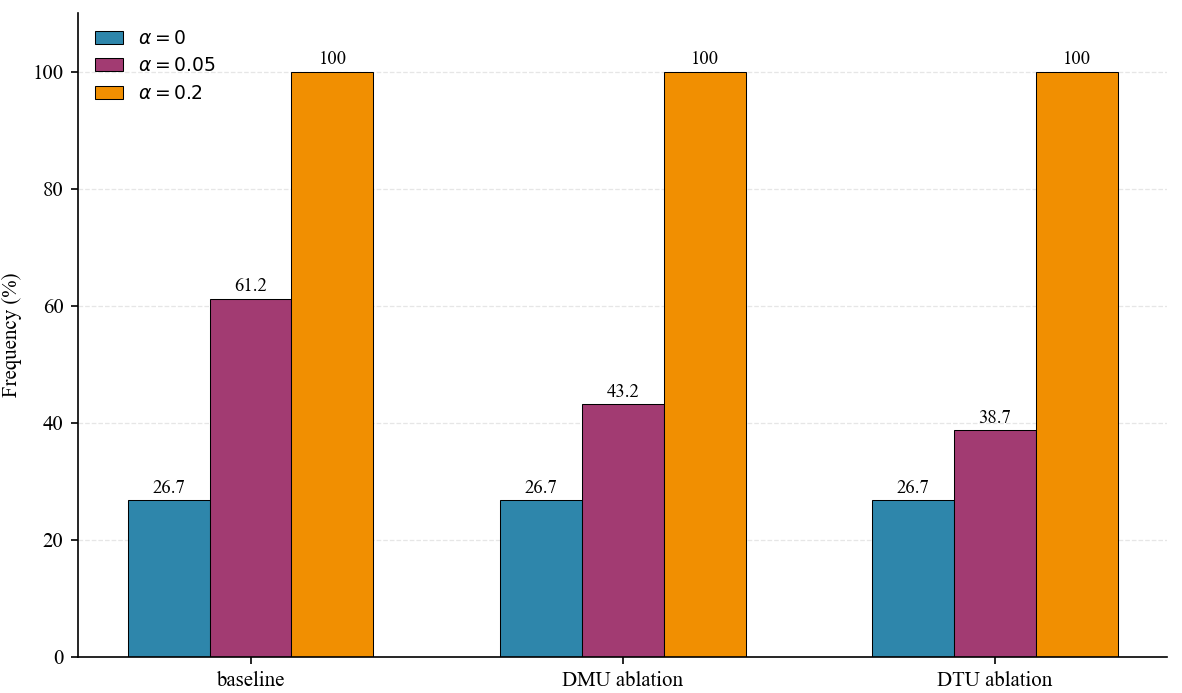}
    \caption{Ablation Studies of BARRA's DMU/DTU Components. High-frequency word proportion across $\alpha$ values quantifies synergistic stabilization effects on mode collapse via KL minimization and entropy reduction. Results show DMU and DTU jointly intensify mode concentration underperforming single-module ablations.}
    \label{fig:ablation}
\end{figure}

We conduct ablation studies on BARRA’s DMU and DTU to validate that stabilization induces mode collapse via forward KL minimization and entropy reduction, as shown in Figure.~\ref{fig:ablation}. Removing each component and comparing to the full BARRA baseline we quantify how the two components synergistically amplify stabilization effects worsening the entropy-loss trajectory driving mode collapse. All configurations share identical setup ensuring performance differences reflect BARRA’s stabilization impact on the tradeoff between optimization stability and generative degradation.

At $\alpha=0$, all configurations show {26.7}\% high-frequency word proportion, confirming stabilization triggers mode collapse. At $\alpha=0.05$, the baseline reaches {61.2}\% high-frequency word proportion versus {43.2}\% in DMU ablation and {38.7}\% in DTU ablation; this gradient shows BARRA components synergistically boost stabilization-induced mode concentration. DTU tightens logit constraints to accelerate forward KL minimization, DMU prunes low-probability modes to reduce entropy, and their combined action intensifies the entropy-loss trajectory, pushing the model closer to mode collapse. At $\alpha=0.2$, all hit {100}\% high-frequency word proportion, but the baseline converges {24}\% faster than DTU ablation, demonstrating BARRA's stabilized training accelerates inevitable mode collapse.

These results validate our theory that stability is a liability. Differential performance at moderate $\alpha$ links BARRA's components to forward KL minimization and entropy reduction, tying stabilization to generative degradation. Conditional activation of BARRA components underscores that mode collapse is stability-dependent and rooted in MLE properties. Accelerated collapse in the baseline at extreme $\alpha$ confirms stabilization optimizes optimization dynamics at the cost of generative expressivity.

\subsection{Loss-Entropy Trajectory and Mode Collapse Vulnerability}

\begin{figure}[htbp]
    \centering
    \includegraphics[width=0.5\textwidth]{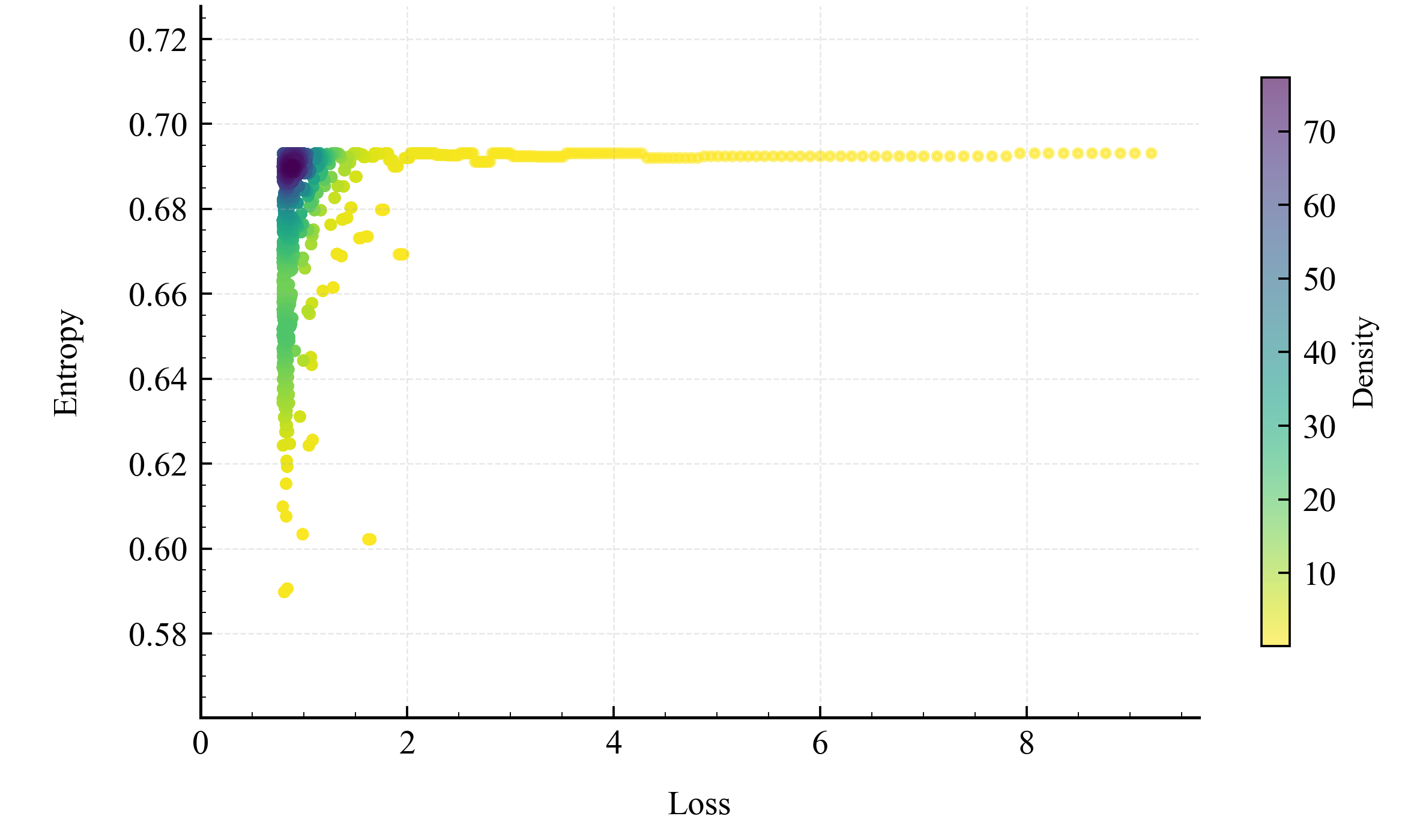}
    \caption{Joint Trajectory of Training Loss and Generative Entropy in LLM Training. Color intensity marks point density, revealing the tradeoff between loss minimization and generative diversity across training steps. High-loss stages show sharp entropy drops while low-loss regimes yield marginal entropy reductions, reflecting pathological probability concentration.}
    \label{fig:loss_entropy}
\end{figure}

To further validate how optimization dynamics drive mode collapse we analyze the joint trajectory of training loss and generative entropy, shown in Figure.\ref{fig:loss_entropy}. This relationship reveals the inherent vulnerability of low-loss regimes to mode collapse.

The plot tracks loss against entropy across training steps with color intensity marking point density. In high loss states entropy falls sharply as loss decreases small reductions in loss correspond to large entropy drops as the model rapidly converges to dominant empirical modes concentrating probability mass early in training.

In low loss states the entropy loss relationship flattens substantially even large decreases in loss yield only marginal entropy reductions. The model has exhausted most degrees of freedom for entropy reduction through broad pattern learning it now can only further minimize loss by stripping away remaining low probability yet semantically critical modes that sustain generative diversity. Narrowing probability mass to an ever smaller subset of tokens is the only available path to lower loss in late stage training.

The density heatmap underscores this pattern most training time is spent in the low loss low entropy region. The model remains trapped in minimal entropy states despite continued loss convergence because the MLE objective incentivizes prioritizing fit to dominant modes over preserving full diversity of the empirical distribution.

This trajectory exposes the fundamental cause of mode collapse. Diminishing entropy returns at low loss mean late stage training enforces pathological concentration of probability mass. The model abandons rare semantically diverse tokens because the optimization objective rewards convergence to the narrowest possible set of modes that minimize loss. Mode collapse thus becomes an inevitable outcome of pushing loss to extreme lows rather than a failure of training optimization.
\section{Conclusion}

In this work we establish a rigorous connection between stable training trajectories and mode collapse in generative models. We formalize conditions where stationary points of stabilized MLE objectives concentrate probability mass on a small subset of empirical support and verify these predictions in a controlled closed-loop feedback system with stochastic exploration modules. Our analysis clarifies the fundamental tension between stability and diversity in high-dimensional learning providing a principled framework to anticipate measure and potentially mitigate mode collapse. Theoretical guarantees and empirical validation across architectures and parameter scales highlight the importance of parameter dynamics in designing robust generative systems. This work lays the foundation for future explorations into balancing stability expressivity and controlled stochasticity in generative modeling.

\section*{Acknowledgements}
This work was supported in part by the National Natural Science Foundation of China under Grants 62373158; in part by the State Key Laboratory of Mechanical Transmission for Advanced Equipment under Grant SKLMT-MSKFKT-202317; and in part by the Open Research Project of the State Key Laboratory of Industrial Control Technology, China , under Grant ICT2024B22. The computation is
completed in the HPC Platform of Huazhong University of Science and Technology. We also thank Yibo Zhong from Tsinghua University to help us check our mathematical proof.

\section*{Impact Statement}

Our research uncovers the tradeoff between training stability and generative expressivity in large language models, offering critical insights for robust model development. The findings guide the design of balanced training frameworks that avoid stability-induced degeneration, with potential applications in content generation, dialogue systems, and other high-stakes NLP tasks. We confirm that all experimental setups and datasets adhere to ethical standards, and the research itself does not involve sensitive content or pose direct ethical risks. Future work leveraging these insights should continue to prioritize the alignment of optimization stability with real-world utility, ensuring generative models retain diversity and reliability in practical use.
\nocite{langley00}

\bibliography{cite}
\bibliographystyle{icml2026}
\newpage
\appendix
\onecolumn

\section{Definition and Notations}

\label{sec:definitions}
\begin{definition}[Learning Accuracy $\widetilde{a}$]
For a batch $x_b$ and model parameters $\theta$:

\begin{equation}
\widetilde{a}(x_b, \theta) = \frac{|\nabla_\theta \mathcal{L}(\theta; x_b)|}{\max_{\theta'} |\nabla_{\theta'} \mathcal{L}(\theta'; x_b)|},
\end{equation}

where $\mathcal{L}(\theta;x_b)$ is the loss function.
\end{definition}

\textbf{Intuition:} The learning accuracy $\widetilde{a} \in [0,1]$ quantifies the expected contribution of a batch to effective learning. It measures the information content of a batch, scaling proportionally with the learning rate $\eta$ in practice.

\begin{definition}[Learning Complexity $\widetilde{c}$]
For a batch $x_b$:

\begin{equation}
\widetilde{c}(x_b) = \frac{T_{\text{actual}}(x_b)}{T_{\text{full}}(x_b)},
\end{equation}

where $T_{\text{actual}}$ is the actual computation time with dropout and $T_{\text{full}}$ is the full computation time without dropout.
\end{definition}

\textbf{Intuition:} The learning complexity $\widetilde{c} \in [0,1]$ measures the computational cost required for processing a batch. It satisfies $\widetilde{c} = \Theta(t)$, where $t$ denotes the associated training time.

\begin{definition}[Edge-level Dropout]
Edge-level dropout applies dropout independently to each edge in a neural network. For a weight matrix $W \in \mathbb{R}^{n \times m}$, each element $w_{ij}$ is independently retained with probability.
\end{definition}

\begin{definition}[Node-level Dropout]
Node-level dropout applies dropout to entire nodes in a neural network. For a layer with $n$ nodes, each node is independently retained with probability $\widetilde{c}$, which affects all edges connected to that node.
\end{definition}

\section{Batch-Aware Adaptive Resource Allocation (BAARA)}
\label{sec:baara}

Derived directly from the need to address batch heterogeneity, we designed Batch-Aware Adaptive Resource Allocation (BAARA). BAARA is a closed-loop discriminator-learner framework that dynamically allocates computational resources based on batch-specific characteristics.

\subsection{Core Design Rationale}
\label{sec:design_rationale}

The Batch-Aware Active Training Framework is based on a clear mathematical principle: in heterogeneous batch training, optimal updates scale the learning rate with the ratio of a batch’s information value $\widetilde{a}$ to its computational cost $\widetilde{c}$:
\begin{equation}
\eta \propto \frac{\widetilde{a}}{\widetilde{c}}, \;
\eta_{\text{mod}} = \frac{2}{\pi} \arctan\left(\frac{\widetilde{a}}{\widetilde{c}}\right) \eta_0.
\end{equation}

To operationalize this, BAARA represents each batch using gradient norms and loss dynamics and token rarity plus layer-wise Fisher information. This unified state allows accurate computation of the ${\widetilde{a}}/{\widetilde{c}}$ ratio for each batch.

Updates are controlled along two dimensions: \textbf{update intensity} $\gamma_b$ scales the effective learning rate with $\widetilde{a}$, and \textbf{update scope} $\mathcal{S}_b$ restricts parameter updates to an active subspace, scaling computation with $\widetilde{c}$. Together, these ensure strong updates for high-value, low-cost batches and conservative updates otherwise.

\subsection{Decision-making Layer}
\label{sec:algorithm}

BAARA uses a lightweight decision-making layer $\mathcal{D}_\phi$ to map batch states to resource allocation actions, trained via reinforcement learning to maximize long-term training efficiency. The full algorithm is formalized below.

\begin{algorithm*}
    \caption{Predictive Selective Update (PSU)}
    \label{alg:psu}
    \begin{algorithmic}[1]
        \REQUIRE Model parameters $\theta_0$, discriminator $\mathcal{D}_\phi$, feature extractor, training batches $\{x_b\}$
        \FOR{each batch $x_b$}
            \STATE Extract state features $s_b = x_b^\text{feat}$
            \STATE Sample action $a_b = (\gamma_b, \mathcal{S}_b) \sim \pi_\phi(s_b)$
            \STATE Update learner: $\theta \gets \theta - \gamma_b \pi_\phi\,\nabla_\theta \mathcal{L}(\theta;x_b)$
            \STATE Measure batch execution cost $T_b^\star$ and compute reward $r_b$
            \STATE Store $(s_b, T_b^\star, r_b)$ in replay buffer
            \STATE Periodically update $\phi$ by maximizing expected reward $\mathcal{L}_D$
        \ENDFOR
    \end{algorithmic}
\end{algorithm*}

\paragraph{State.}
Each batch $b$ is represented by a compact feature vector 
$s_b = x_b^\text{feat}$, 
constructed from gradient statistics, loss dynamics, token rarity, and optional layer-wise Fisher information. These features characterize both batch informativeness and model sensitivity.

\paragraph{Action.}
The action $a_b = (\gamma_b, \mathcal{S}_b)$ consists of:
(i) a precision factor $\gamma_b$ modulating the effective learning rate, and
(ii) an active subspace $\mathcal S_b \subseteq \{1,\ldots,d\}$ specifying which coordinates to update.
This enables non-uniform and selective parameter updates.

\paragraph{Reward.}
The instantaneous contribution of batch $B_t$ is measured by the raw batch value
\begin{equation}
    V(B_t) = \mathcal{L}(\theta_t; B_t) - \mathcal{L}(\theta_{t+1}; B_t).
\end{equation}
To reduce noise sensitivity, we compute a smoothed value via a window of size $\Delta$:
\begin{equation}
    \bar{V}_t = \frac{1}{\Delta}\sum_{i=t-\Delta+1}^{t} V(B_i).
\end{equation}
Given predictive entropy
\begin{equation}
    H(B_t) = -\frac{1}{|B_t|} \sum_{i \in B_t} \sum_j p_{i,j} \log p_{i,j},
\end{equation}
the confidence factor is defined as $C_t = \exp(-\alpha H(B_t))$.
The final reward is
\begin{equation}
    r_t = \bar{V}_t\,C_t - \lambda (s_t - 1)^2.
\end{equation}

\paragraph{Update.}This layer serves as a stochastic policy $\pi_\phi(a_b \mid s_b)$.
The learner then performs a projected gradient step:
\begin{equation}
    \theta \gets \theta - \gamma_b P_{\mathcal S_b}\nabla_\theta \mathcal{L}(\theta; x_b),
\end{equation}
where $P_{\mathcal S_b}$ denotes the projection onto the active subspace.

For a purely noisy batch, assume
\begin{equation}
\mathbb{E}[\nabla_\theta \mathcal{L}(\theta_t; B_t)] = 0 \implies \mathbb{E}[V(B_t)] \approx 0.
\end{equation}
Moreover, high predictive entropy implies
\begin{equation}
C_t \approx \exp(-\alpha H(B_t)) \ll 1.
\end{equation}
Hence,
\begin{equation}
\mathbb{E}[r_t] = \mathbb{E}[\bar{V}_t \cdot C_t - \lambda (s_t - 1)^2] \leqslant 0,
\end{equation}
showing that the update intensity $s_t$ is naturally discouraged for noisy batches.

For a purely noisy batch, assume
\begin{equation}
\mathbb{E}[\nabla_\theta \mathcal{L}(\theta_t; B_t)] = 0 \implies \mathbb{E}[V(B_t)] \approx 0.
\end{equation}
Moreover, high predictive entropy implies
\begin{equation}
C_t \approx \exp(-\alpha H(B_t)) \ll 1.
\end{equation}
Hence,
\begin{equation}
\mathbb{E}[r_t] = \mathbb{E}[\bar{V}_t \cdot C_t - \lambda (s_t - 1)^2] \leqslant 0,
\end{equation}
showing that the update intensity $s_t$ is naturally discouraged for noisy batches.

The smoothed batch value satisfies
\begin{equation}
\bar{V}_t = \frac{1}{\Delta} \sum_{i=t-\Delta+1}^{t} V(B_i) = \frac{V(B_t)}{\Delta} + \frac{1}{\Delta}\sum_{i=t-\Delta+1}^{t-1} V(B_i),
\end{equation}
so that
\begin{equation}
\begin{split}
|\bar{V}_t - V(B_t)| &= \frac{1}{\Delta} \left| \sum_{i=t-\Delta+1}^{t-1} (V(B_i) - V(B_t)) \right|\\
&\leqslant \frac{\Delta-1}{\Delta} \max_{i \in [t-\Delta+1,t]} |V(B_i) - V(B_t)|.
\end{split}
\end{equation}
This shows that transient fluctuations are bounded, preserving stable updates.

Combining the above, we have the following theorem. This behavior ensures that update intensity automatically concentrates on informative, low-uncertainty batches while suppressing updates on uninformative or noisy batches.

\begin{theorem}[Adaptive Update Scaling in Heterogeneous Batches]
\label{thm:adaptive_scaling}
Consider a learner updated by a stochastic policy $\pi_\phi(a_b \mid s_b)$ that receives rewards based on batch value and predictive confidence. Then, under general conditions of batch heterogeneity and reward smoothing, the learned batch scaling factor $s_t$ exhibits the following natural adaptation:
\begin{enumerate}
\item $s_t \to 1 \quad \text{for batches that are noisy or of high entropy.}$
\item $s_t \uparrow \text{ for batches with high learning value and low entropy.}$
\end{enumerate}
\end{theorem}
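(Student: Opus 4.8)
The plan is to read the statement as a claim about the location of the maximizer of the smoothed expected one-step reward $\mathbb{E}[r_t]$, viewed as a function of the scalar update intensity $s_t$ (the first action coordinate, also written $\gamma_b$ above). Since the decision layer $\pi_\phi$ is trained by reinforcement learning to maximize expected reward, at near-optimality its chosen intensity concentrates near this maximizer, so it suffices to locate that maximizer in each regime. The first step is to expose how $V(B_t)$ depends on $s_t$: writing the learner step as $\theta_{t+1} = \theta_t - s_t\eta\,P_{\mathcal{S}_t}\nabla_\theta\mathcal{L}(\theta_t;B_t)$ and Taylor-expanding $\mathcal{L}(\,\cdot\,;B_t)$ along the update ray with $g := P_{\mathcal{S}_t}\nabla_\theta\mathcal{L}(\theta_t;B_t)$,
\begin{equation}
V(B_t) = s_t\,\eta\,\|g\|^2 - \tfrac{1}{2}\,s_t^2\,\eta^2\,\langle \nabla^2\mathcal{L}(\theta_t;B_t)\,g,\,g\rangle + o(s_t^2\eta^2),
\end{equation}
so that $\mathbb{E}[V(B_t)]$ has initial slope $\eta\,\mathbb{E}\|g\|^2 \ge 0$ in $s_t$, and since $C_t$ is a batch-level (essentially deterministic given the batch) quantity the reward reduces to $\mathbb{E}[r_t] = C_t\,\mathbb{E}[\bar V_t] - \lambda(s_t - 1)^2$.

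For a noisy or high-entropy batch I would invoke exactly the two estimates already recorded above: $\mathbb{E}[\nabla_\theta\mathcal{L}(\theta_t;B_t)] = 0$ forces $\mathbb{E}[\bar V_t]$ down to the order of $\eta$ times the gradient variance, with no first-order gain, and $C_t = \exp(-\alpha H(B_t)) \ll 1$ discounts it further. In either sub-case the product $C_t\,\mathbb{E}[\bar V_t]$ is dominated by the quadratic penalty, so $\mathbb{E}[r_t]$ is a downward parabola in $s_t$ whose vertex is a vanishing perturbation of $1$; its unique maximizer then lies within any prescribed neighborhood of $1$ as soon as $C_t\,\mathbb{E}\|g\|^2$ falls below a threshold set by $\lambda$, which is conclusion~(1). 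The monotone penalty $\lambda(s_t - 1)^2$ is precisely what pins the default intensity to $1$ rather than to $0$.

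For an informative, low-entropy batch I instead keep $C_t \approx 1$ and differentiate $\mathbb{E}[r_t]$ in $s_t$ at $s_t = 1$, which leaves $C_t\,\partial_s\mathbb{E}[\bar V_t]\big|_{s=1}$; by the expansion above this is strictly positive whenever the linear descent term still dominates the curvature term at $s = 1$, i.e.\ under the standard step-size condition $\eta \lesssim 1/\beta$ with $\beta$ the local smoothness along the ray. Hence the maximizer satisfies $s_t^\star > 1$, and a monotone-comparative-statics argument in $\mathbb{E}\|g\|^2$ shows $s_t^\star$ increases with the batch's learning value $\widetilde a$, giving conclusion~(2). The passage from $V(B_t)$ to the smoothed $\bar V_t$ is controlled by the inequality $|\bar V_t - V(B_t)| \le \tfrac{\Delta-1}{\Delta}\max_i|V(B_i) - V(B_t)|$ derived above: under the reward-smoothing hypothesis the inter-batch value spread is small, so it perturbs $\mathbb{E}[r_t]$ by a lower-order amount that cannot move the maximizer across $s_t = 1$, and both conclusions survive.

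The step I expect to be the main obstacle is making \emph{high learning value} quantitatively compatible with the nonconvexity of $s_t \mapsto V(B_t)$: the curvature term eventually reverses monotonicity, so without a step-size/smoothness restriction the claim $s_t \uparrow$ can fail for very large intensities. I would handle this by imposing a descent-lemma bound ($\nabla_\theta\mathcal{L}(\,\cdot\,;B_t)$ Lipschitz along the update ray), which confines the analysis to $s_t\eta \lesssim 1/\beta$, the regime in which the linear-plus-quadratic-plus-penalty approximation to $\mathbb{E}[r_t]$ is faithful and the parabola-vertex reasoning is valid. A secondary subtlety is that $\pi_\phi$ is only an approximate reward-maximizer, so the conclusions $s_t \to 1$ and $s_t \uparrow$ should be phrased for the policy's expected action under an $\epsilon$-optimal $\phi$ and derived by a standard approximate-optimality estimate rather than from exact optimality.
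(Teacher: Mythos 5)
Your proposal follows the same skeleton as the paper's own argument---the reward decomposition $r_t=\bar V_t C_t-\lambda(s_t-1)^2$, the noisy-batch estimates $\mathbb{E}[\nabla_\theta\mathcal{L}(\theta_t;B_t)]=0\Rightarrow\mathbb{E}[V(B_t)]\approx 0$ and $C_t=\exp(-\alpha H(B_t))\ll 1$, and the smoothing inequality $|\bar V_t-V(B_t)|\leqslant\frac{\Delta-1}{\Delta}\max_i|V(B_i)-V(B_t)|$---but it is considerably more complete than what the paper actually writes. The paper's proof consists only of observing that for noisy/high-entropy batches $\mathbb{E}[r_t]\leqslant 0$ (so intensity is ``discouraged''), recording the smoothing bound, and then asserting the theorem; it never makes explicit how $V(B_t)$ depends on $s_t$, never argues why the penalized reward is maximized near $s_t=1$ rather than merely being non-positive, and gives no argument at all for conclusion~(2). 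Your Taylor expansion $V(B_t)=s_t\eta\|g\|^2-\tfrac12 s_t^2\eta^2\langle\nabla^2\mathcal{L}\,g,g\rangle+o(s_t^2\eta^2)$, the parabola-vertex reasoning for (1), the positive-derivative-at-$s_t=1$ argument under a step-size/smoothness condition for (2), and the remark that the RL policy is only approximately reward-optimal are exactly the missing links; your caveat that (2) fails without confining $s_t\eta\lesssim 1/\beta$ is a real restriction the paper silently ignores. In short: same route, but your version supplies the quantitative content the paper's proof omits, at the price of extra hypotheses (local smoothness along the update ray, small step size, $\epsilon$-optimality of $\pi_\phi$) that are needed to make the vague statement provable at all.
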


\subsection{Action Layer}
\subsubsection{Primary Algorithms for MLP}

\begin{algorithm}[h]
    \caption{DMU2}
    \label{alg:dmu2}
    \begin{algorithmic}[1]
        \REQUIRE $W \in \mathbb{R}^{n \times m}$, $\vec{b} \in \mathbb{R}^n$, $\widetilde{c} \in (0,1)$, mode $\in \{\text{Prob}, \text{Fix}\}$
        \ENSURE Masked weights $W_{\text{mask}}$, masked biases $\vec{b}_{\text{mask}}$
        \IF{$\text{mode} = \text{Prob}$}
            \STATE $\eta_{ij} \xrightarrow{i.i.d.} \text{Bernoulli}(\widetilde{c})$
            \STATE $\eta_i^b \xrightarrow{i.i.d.} \text{Bernoulli}(\widetilde{c})$
            \STATE $W_{\text{mask},ij} = \eta_{ij} W_{ij}$, $\vec{b}_{\text{mask},i} = \eta_i^b \vec{b}_i$
        \ELSIF{$\text{mode} = \text{Fix}$}
            \STATE $k_W = \lfloor{\widetilde{c} nm}\rfloor$, $\mathcal{S}_W \sim \text{Uniform}\binom{\lfloor{n}\rfloor \times \lfloor m\rfloor}{k_W}$
            \STATE $k_b = \lfloor{\widetilde{c} n}\rfloor$, $\mathcal{S}_b \sim \text{Uniform}\binom{[n]}{k_b}$
            \STATE $W_{\text{mask},ij} = W_{ij} \mathbb{I}{(i,j) \in \mathcal{S}_W}$, $\vec{b}_{\text{mask},i} = \vec{b}_i \mathbb{I}{(i)\in \mathcal{S}_b}$
        \ENDIF
        \STATE Return $W_{\text{mask}}$, $\vec{b}_{\text{mask}}$
    \end{algorithmic}
\end{algorithm}

\begin{algorithm}[h]
    \caption{Fixed-proportion Node Retention (DFix)}
    \label{alg:dmu_trans}
    \begin{algorithmic}[1]
        \REQUIRE $W \in \mathbb{R}^{n \times m}$, $\vec{b} \in \mathbb{R}^n$, $\vec{x} \in \mathbb{R}^m$, $\widetilde{c} \in (0,1)$
        \ENSURE Masked output $\vec{z}_{\text{mask}}$, update masks $M_W, M_b$
        \STATE $k_{\text{in}} = \lfloor{\widetilde{c} \cdot m}\rfloor$, $\mathcal{S}_{\text{in}} \sim \text{Uniform}\binom{\lfloor m\rfloor}{k_{\text{in}}}$
        \STATE $k_{\text{out}} = \lfloor{\widetilde{c} \cdot n}\rfloor$, $\mathcal{S}_{\text{out}} \sim \text{Uniform}\binom{\lfloor n\rfloor}{k_{\text{out}}}$
        \STATE $\vec{M}_{\text{in}} = \mathbb{I}{(j)}{ \in \mathcal{S}_{\text{in}}}$, $\vec{M}_{\text{out}} = \mathbb{I}{(i)}{ \in \mathcal{S}_{\text{out}}}$
        \STATE $\vec{x}_{\text{mask}} = \vec{M}_{\text{in}} \odot \vec{x}$, $\vec{u} = W \vec{x}_{\text{mask}} + \vec{b}$
        \STATE $\vec{z}_{\text{mask}} = \vec{M}_{\text{out}} \odot \sigma(\vec{u})$, $M_{W_{ij}} = \mathbb{I}{(j)}{ \in \mathcal{S}_{\text{in}}}$, $M_{b_i} = \mathbb{I}{(i)}{ \in \mathcal{S}_{\text{out}}}$
        \STATE Return $\vec{z}_{\text{mask}}$, $M_W$, $M_b$
    \end{algorithmic}
\end{algorithm}

\begin{algorithm}[h]
    \caption{Probabilistic Node-level Dropout (DProb)}
    \label{alg:dmu_prob}
    \begin{algorithmic}[1]
        \REQUIRE $W \in \mathbb{R}^{n \times m}$, $\vec{b} \in \mathbb{R}^n$, $\vec{x} \in \mathbb{R}^m$, $\widetilde{c} \in (0,1)$
        \ENSURE Masked output $\vec{z}_{\text{mask}}$, update masks $M_W, M_b$
        \STATE $\vec{M}_{\text{in},j} \xrightarrow{i.i.d.} \text{Bernoulli}(\widetilde{c})$
        \STATE $\vec{M}_{\text{out},i} \xrightarrow{i.i.d.} \text{Bernoulli}(\widetilde{c})$
        \STATE $\vec{x}_{\text{mask}} = \vec{M}_{\text{in}} \odot \vec{x}$, $\vec{u} = W \vec{x}_{\text{mask}} + \vec{b}$
        \STATE $\vec{z}_{\text{mask}} = \vec{M}_{\text{out}} \odot \sigma(\vec{u})$, $M_{W_{ij}} = \vec{M}_{\text{in},j}$, $M_{b_i} = \vec{M}_{\text{out},i}$
        \STATE Return $\vec{z}_{\text{mask}}$, $M_W$, $M_b$
    \end{algorithmic}
\end{algorithm}

We start with a single-layer MLP ($W \in \mathbb{R}^{n \times m}$, $\vec{b} \in \mathbb{R}^n$) and analyze DMU2-Prob and DMU2-Fix, deriving their equivalence from first principles.

For DMU2-Prob, the weight update rule is:

\begin{equation}
W_{ij}^{(t+1)} = W_{ij}^{(t)} - \lambda \cdot \eta_{ij} \cdot \nabla({w_{ij}} \mathcal{L}).
\end{equation}

Since $\eta_{ij} \sim \text{Bernoulli}(\widetilde{c})$, we have $\mathbb{E}[\eta_{ij}] = \widetilde{c}$. Taking expectation over all masks:

\begin{equation}
\mathbb{E}[W_{ij}^{(t+1)} | \text{DProb}] = W_{ij}^{(t)} - \lambda \cdot \widetilde{c} \nabla({w_{ij}} \mathcal{L}).
\end{equation}

For DMU2-Fix, let $\mathcal{S}_W$ be the uniform random subset of edges with size $k_W = \lfloor{\widetilde{c}nm}\rfloor$. For any fixed edge $(i,j)$, the probability of inclusion in $\mathcal{S}_W$ is:

\begin{equation}
\mathbb{P}((i,j) \in \mathcal{S}_W) = \frac{\binom{nm - 1}{k_W - 1}}{\binom{nm}{k_W}} = \frac{k_W}{nm} = \frac{\lfloor{\widetilde{c} \cdot nm}\rfloor}{nm}.
\end{equation}

By definition of the floor function, $\lfloor{\widetilde{c} \cdot nm}\rfloor = \widetilde{c} nm - \epsilon$ where $0 \leqslant \epsilon < 1$. Thus:

\begin{equation}
\lim_{nm \to \infty} \mathbb{P}((i,j) \in \mathcal{S}_W) = \lim_{nm \to \infty} \frac{\widetilde{c} nm - \epsilon}{nm} = \widetilde{c}.
\end{equation}

The weight update for DFix is:

\begin{equation}
W_{ij}^{(t+1)} = W_{ij}^{(t)} - \lambda \cdot \mathbb{I}{(i,j) \in \mathcal{S}_W} \nabla({w_{ij}} \mathcal{L}).
\end{equation}

Taking expectation over $\mathcal{S}_W$:

\begin{equation}
\mathbb{E}[W_{ij}^{(t+1)} | \text{DFix}] = W_{ij}^{(t)} - \lambda \cdot \mathbb{P}((i,j) \in \mathcal{S}_W) \nabla({w_{ij}} \mathcal{L}).
\end{equation}

Taking the limit as $nm \to \infty$:

\begin{equation}
\lim_{nm \to \infty} \mathbb{E}[W_{ij}^{(t+1)} | \text{DFix}] = \mathbb{E}[W_{ij}^{(t+1)} | \text{DProb}].
\end{equation}

For DProb, the bias update rule is:

\begin{equation}
b_i^{(t+1)} = b_i^{(t)} - \lambda \cdot \eta_i^b \nabla({b_i} \mathcal{L}).
\end{equation}

With $\mathbb{E}[\eta_i^b] = \widetilde{c}$, we get:

\begin{equation}
\mathbb{E}[b_i^{(t+1)} | \text{DProb}] = b_i^{(t)} - \lambda \widetilde{c} \nabla({b_i} \mathcal{L}). 
\end{equation}

For DMU2-Fix, $\mathcal{S}_b$ is the uniform random subset of biases with size $k_b = \lfloor{\widetilde{c} n}\rfloor$. The inclusion probability is:

\begin{equation}
\mathbb{P}(i \in \mathcal{S}_b) = \frac{k_b}{n} = \frac{\lfloor{\widetilde{c} n}\rfloor}{n}.
\end{equation}

Again, $\lfloor{\widetilde{c}  n}\rfloor = \widetilde{c}n - \epsilon'$ ($0 \leqslant \epsilon' < 1$), so:

\begin{equation}
\lim_{n \to \infty} \mathbb{P}(i \in \mathcal{S}_b) = \widetilde{c}.
\end{equation}

The bias update expectation for DFix is:
\begin{equation}
\mathbb{E}[b_i^{(t+1)} | \text{DFix}] = b_i^{(t)} - \lambda \cdot \mathbb{P}(i \in \mathcal{S}_b) \nabla({b_i} \mathcal{L}).
\end{equation}

Taking the limit as $n \to \infty$:

\begin{equation}
\lim_{n \to \infty} \mathbb{E}[b_i^{(t+1)} | \text{DFix}] = \mathbb{E}[b_i^{(t+1)} | \text{DProb}]. 
\end{equation}

Let $\mathcal{S}_n^W = \sum_{i,j} \eta_{ij}$, which follows $\text{Binomial}(nm, \widetilde{c})$. For any $\delta \in (0,1)$, Chernoff bounds give:

\begin{equation}
\mathbb{P}(\mathcal{S}_n^W \leqslant (1-\delta)\widetilde{c} nm) \leqslant \exp\left(-\frac{\delta^2 \widetilde{c} nm}{2}\right),
\end{equation}

\begin{equation}
\mathbb{P}(\mathcal{S}_n^W \geqslant (1+\delta)\widetilde{c}  nm) \leqslant \exp\left(-\frac{\delta^2 \widetilde{c} nm}{2+\delta}\right).
\end{equation}

Combining them:

\begin{equation}
\mathbb{P}\left(|{\mathcal{S}_n^W - \widetilde{c}  nm}| \geqslant \delta \widetilde{c}  nm\right) \leqslant 2\exp\left(-\frac{\delta^2 \widetilde{c} nm}{2+\delta}\right).
\end{equation}

Since $k_W = \lfloor{\widetilde{c}  nm}\rfloor$, we have $|{k_W - \widetilde{c} nm}| < 1$. For any fixed $\delta > 0$, as $nm \to \infty$:

\begin{equation}
\mathbb{P}\left(|{\mathcal{S}_n^W - k_W}| \geqslant \delta \widetilde{c}nm\right) \leqslant \mathbb{P}\left(|{\mathcal{S}_n^W - \widetilde{c}nm}| \geqslant \delta \widetilde{c} nm - 1\right) \to 0.
\end{equation}

By the Strong Law of Large Numbers, $\mathcal{S}_n^W \stackrel{\text{a.s.}}{\to} \widetilde{c}  nm$, so $\mathcal{S}_n^W \stackrel{\text{a.s.}}{\to} k_W$ as $nm \to \infty$. The same logic applies to $\mathcal{S}_n^b = \sum_i \eta_i^b$, giving $\mathcal{S}_n^b \stackrel{\text{a.s.}}{\to} k_b$ as $n \to \infty$.

For DProb, the variance of the weight update increment is:

\begin{equation}
Var(W_{ij}^{(t+1)} - W_{ij}^{(t)}) = \lambda^2 \cdot Var(\eta_{ij}) \cdot \left(\nabla({w_{ij}} \mathcal{L})\right)^2 = \lambda^2 \widetilde{c}(1-\widetilde{c})\left(\nabla({w_{ij}} \mathcal{L})\right)^2.
\end{equation}

By bounded gradients, $\left(\nabla({w_{ij}} L)\right)^2 \leqslant G^2$, so:

\begin{equation}
Var(W_{ij}^{(t+1)} - W_{ij}^{(t)}) \leqslant \lambda^2 \widetilde{c}(1-\widetilde{c}) G^2.
\end{equation}

For DFix, the mask is deterministic given $\mathcal{S}_W$, so $Var(W_{ij}^{(t+1)} | \mathcal{S}_W) = 0$. The normalized variance difference (over all weights) is:

\begin{equation}
\frac{1}{nm} \sum_{i,j} |{Var(W_{ij}^{(t+1)} | \text{DProb}) - Var(W_{ij}^{(t+1)} | \text{DFix})}| \leqslant \lambda^2 \widetilde{c}(1-\widetilde{c}) G^2.
\end{equation}

As $nm \to \infty$, the normalized variance difference is bounded and vanishes in the limit. For biases, the variance analysis is identical, with normalized variance difference vanishing as $n \to \infty$.

Then we arrive at the following natural conclusion:
\begin{lemma}[Single-layer Edge-level Equivalence]
$$\lim_{n \to \infty} \frac{1}{n} \sum_{i} |{Var(b_i^{(t+1)} | \text{DProb}) - Var(b_i^{(t+1)} | \text{DFix})}| = 0.$$
\end{lemma}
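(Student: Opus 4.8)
The plan is to put both conditional variances in closed form and show they differ by an amount that is $O(1/n)$ \emph{uniformly} in the coordinate index $i$, so that the Cesàro average over $i=1,\dots,n$ vanishes as $n\to\infty$. First I would treat the DProb side: from the bias update $b_i^{(t+1)} = b_i^{(t)} - \lambda\,\eta_i^b\,\nabla_{b_i}\mathcal{L}$ with $\eta_i^b \sim \mathrm{Bernoulli}(\widetilde{c})$ i.i.d., and with $b_i^{(t)}$ and $\nabla_{b_i}\mathcal{L}$ held fixed (as in the weight analysis above), the only random factor is $\eta_i^b$, so
\begin{equation}
\mathrm{Var}\!\big(b_i^{(t+1)}\mid\mathrm{DProb}\big) = \lambda^2\,\widetilde{c}(1-\widetilde{c})\,\big(\nabla_{b_i}\mathcal{L}\big)^2 .
\end{equation}
For the DFix side the randomness is the uniform choice of $\mathcal{S}_b\subseteq[n]$ with $|\mathcal{S}_b| = k_b = \lfloor\widetilde{c}\,n\rfloor$; here I would use the fact recorded earlier that the \emph{marginal} law of the inclusion indicator $\mathbb{I}(i\in\mathcal{S}_b)$ is $\mathrm{Bernoulli}(k_b/n)$, which gives
\begin{equation}
\mathrm{Var}\!\big(b_i^{(t+1)}\mid\mathrm{DFix}\big) = \lambda^2\,\tfrac{k_b}{n}\big(1-\tfrac{k_b}{n}\big)\,\big(\nabla_{b_i}\mathcal{L}\big)^2 .
\end{equation}

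Next I would bound the gap. Writing $q_n = k_b/n$, the floor identity gives $|q_n - \widetilde{c}| < 1/n$, and since $p\mapsto p(1-p)$ is $1$-Lipschitz on $[0,1]$ we obtain $|\widetilde{c}(1-\widetilde{c}) - q_n(1-q_n)| \le |q_n - \widetilde{c}| < 1/n$. Invoking the bounded-gradient hypothesis $(\nabla_{b_i}\mathcal{L})^2 \le G^2$ already used for the weights, this yields, for every $i$,
\begin{equation}
\big|\mathrm{Var}(b_i^{(t+1)}\mid\mathrm{DProb}) - \mathrm{Var}(b_i^{(t+1)}\mid\mathrm{DFix})\big| \;\le\; \frac{\lambda^2 G^2}{n}.
\end{equation}
Averaging this uniform estimate over $i=1,\dots,n$ then gives $\tfrac1n\sum_i|\cdots| \le \lambda^2 G^2/n \to 0$, which is precisely the claimed limit.

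I expect the main obstacle to be getting the DFix variance modelled correctly: one must avoid conditioning on the realized set $\mathcal{S}_b$ (under which the update is deterministic and the conditional variance collapses to $0$, so the gap would fail to vanish) and instead work with the marginal $\mathrm{Bernoulli}(k_b/n)$ law of a single inclusion indicator. Once that choice is made, the negative dependence among the $n$ indicators is irrelevant because only the first two moments of one coordinate enter the computation. A secondary point to verify is that the $O(1/n)$ bound is genuinely uniform in $i$ — which is exactly what the global gradient bound $G$ secures — and that $\lambda$, $\widetilde{c}$, and $G$ are treated as fixed constants while $n\to\infty$.
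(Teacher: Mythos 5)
Your proposal is correct, but it follows a genuinely different route from the paper, and in fact a tighter one. The paper computes the DProb variance $\lambda^2\widetilde{c}(1-\widetilde{c})(\nabla_{b_i}\mathcal{L})^2$, bounds it by $\lambda^2\widetilde{c}(1-\widetilde{c})G^2$ using bounded gradients, and then treats DFix by conditioning on the realized subset $\mathcal{S}_b$, under which the update is deterministic and the conditional variance is $0$; the normalized difference is then only shown to be \emph{bounded} by the constant $\lambda^2\widetilde{c}(1-\widetilde{c})G^2$, after which the vanishing of the limit is asserted rather than derived (the paper leans on its earlier Chernoff/LLN argument that the DProb retention count concentrates on $k_b$, but that concentration is never converted into a decay of the variance gap). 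You instead avoid exactly this pitfall — the one you flag as the "main obstacle" — by taking the DFix variance with respect to the randomness of the uniform subset itself, so the marginal inclusion indicator is $\mathrm{Bernoulli}(k_b/n)$ and the DFix variance is $\lambda^2\,(k_b/n)(1-k_b/n)(\nabla_{b_i}\mathcal{L})^2$; the floor bound $|k_b/n-\widetilde{c}|<1/n$ together with the $1$-Lipschitzness of $p\mapsto p(1-p)$ on $[0,1]$ then gives a per-coordinate gap of at most $\lambda^2G^2/n$, uniformly in $i$, and the Ces\`aro average trivially tends to $0$ with an explicit $O(1/n)$ rate. Your reading of "$\mathrm{Var}(\cdot\mid\mathrm{DFix})$" as the unconditional variance over the random subset is the interpretation under which the lemma actually holds as stated, so your argument both matches the lemma's intent and supplies the quantitative step the paper's own proof leaves as an assertion.
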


We extend the single-layer result to $L$-layer MLPs via induction, deriving equivalence for the final output.

Let $\mathcal{N}_\ell = n_\ell$ be the width of layer $\ell$, with weight matrices $W^{(\ell)} \in \mathbb{R}^{n_\ell \times n_{\ell-1}}$ and bias vectors $\vec{b}^{(\ell)} \in \mathbb{R}^{n_\ell}$. We assume bounded per-row weight energy: $\sum_j (w_{ij}^{(\ell)})^2 \leqslant C^2$.

Follows directly from the single-layer derivation: the output $\vec{z}^{(1)} = \sigma(W^{(1)} \vec{x} + \vec{b}^{(1)})$ satisfies expectation/normalized variance equivalence for DProb and DFix.

Assume for an $(L-1)$-layer MLP, the output $\vec{z}^{(L-1)}$ satisfies:

\begin{equation}
\lim_{\min_{\ell=1}^{L-1} n_\ell \to \infty} \mathbb{E}[\vec{z}^{(L-1)} | \text{DFix}] = \mathbb{E}[\vec{z}^{(L-1)} | \text{DProb}], 
\end{equation}

\begin{equation}
\lim_{\min_{\ell=1}^{L-1} n_\ell \to \infty} \frac{1}{\prod_{\ell=1}^{L-1} n_\ell} \sum_{i=1}^{n_{L-1}} |{Var(z_i^{(L-1)} | \text{DProb})| - Var(z_i^{(L-1)} | \text{DFix})} = 0.
\end{equation}

The $L$-th layer pre-activation is:

\begin{equation}
\vec{u}^{(L)} = W^{(L)} \vec{z}_{\text{mask}}^{(L-1)} + \vec{b}^{(L)},
\end{equation}

where $\vec{z}_{\text{mask}}^{(L-1)}$ is the masked output of layer $L-1$. By linearity of expectation:

\begin{equation}
\mathbb{E}[\vec{u}^{(L)}] = W^{(L)} \mathbb{E}[\vec{z}_{\text{mask}}^{(L-1)}] + \vec{b}^{(L)}.
\end{equation}

And there is:

\begin{equation}
\lim_{\min_{\ell=1}^L n_\ell \to \infty} \mathbb{E}[\vec{u}^{(L)} | \text{DFix}] = \mathbb{E}[\vec{u}^{(L)} | \text{DProb}].
\end{equation}

For the activated output $\vec{z}^{(L)} = \sigma(\vec{u}^{(L)})$, by Lipschitz activations:

\begin{equation}
|{\sigma(u_i^{(L)}) - \sigma(u_i'^{(L)})}| \leqslant L_\sigma |{u_i^{(L)} - u_i'^{(L)}}|,
\end{equation}

where $u_i^{(L)}$ (resp. $u_i'^{(L)}$) is the pre-activation for DProb (resp. DFix). Taking expectation:

\begin{equation}
|{\mathbb{E}[\sigma(u_i^{(L)})] - \mathbb{E}[\sigma(u_i'^{(L)})]} |\leqslant L_\sigma \mathbb{E}[|{u_i^{(L)} - u_i'^{(L)}|}].
\end{equation}

By Cauchy-Schwarz inequality:

\begin{equation}
\mathbb{E}[|{u_i^{(L)} - u_i'^{(L)}|}] \leqslant \sqrt{\mathbb{E}[(u_i^{(L)} - u_i'^{(L)})^2]}.
\end{equation}

We have $\mathbb{E}[(u_i^{(L)} - u_i'^{(L)})^2] \to 0$ as $\min n_\ell \to \infty$, so:

\begin{equation}
\lim_{\min_{\ell=1}^L n_\ell \to \infty} \mathbb{E}[z_i^{(L)} | \text{DFix}] = \mathbb{E}[z_i^{(L)} | \text{DProb}].
\end{equation}

For variance, by Lipschitz continuity:

\begin{equation}
Var(\sigma(u_i^{(L)})) \leqslant L_\sigma^2 Var(u_i^{(L)}).
\end{equation}

The pre-activation variance is:

\begin{equation}
Var(u_i^{(L)}) = Var\left(\sum_j w_{ij}^{(L)} z_j^{(L-1)}\right) \leqslant \sum_j (w_{ij}^{(L)})^2 Var(z_j^{(L-1)}) \leqslant C^2 \sum_j Var(z_j^{(L-1)}).
\end{equation}

Normalizing over the $L$-th layer output:

\begin{equation}
\frac{1}{\prod_{\ell=1}^L n_\ell} \sum_{i=1}^{n_L} Var(z_i^{(L)}) \leqslant \frac{L_\sigma^2 C^2}{\prod_{\ell=1}^L n_\ell} \sum_{i=1}^{n_L} \sum_j Var(z_j^{(L-1)}) = \frac{L_\sigma^2 C^2}{n_L} \cdot \frac{1}{\prod_{\ell=1}^{L-1} n_\ell} \sum_j Var(z_j^{(L-1)}).
\end{equation}

Therefore, the second term is bounded, so as $n_L \to \infty$:

\begin{equation}
\lim_{\min_{\ell=1}^L n_\ell \to \infty} \frac{1}{\prod_{\ell=1}^L n_\ell} \sum_{i=1}^{n_L} |{Var(z_i^{(L)} | \text{DProb}) - Var(z_i^{(L)} | \text{DFix})}| = 0. 
\end{equation}

Combining the equations above, we derive the following lemma:
\begin{lemma}[Multi-layer Edge-level Equivalence]
For an $L$-layer MLP with bounded per-row weight energy and Lipschitz activations, DProb and DFix are asymptotically equivalent in expectation and normalized variance for the final output $\vec{z}^{(L)}$, in the limit as $\min_{\ell=1}^L n_\ell \to \infty$.
\end{lemma}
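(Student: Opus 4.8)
The plan is to prove the lemma by induction on the number of layers $L$, carrying both equivalence statements (convergence of per-coordinate expectations, and vanishing of the normalized sum of variance differences for the layer output) as a single joint inductive invariant. The base case $L=1$ is precisely the single-layer analysis established above: for each edge $(i,j)$ the inclusion probability under DFix satisfies $\mathbb{P}((i,j)\in\mathcal{S}_W)=\lfloor\widetilde{c}nm\rfloor/(nm)\to\widetilde{c}=\mathbb{E}[\eta_{ij}]$, so the update expectations coincide in the limit, and the per-coordinate variance gap is bounded by $\lambda^2\widetilde{c}(1-\widetilde{c})G^2$, hence its normalized average vanishes; together with the identical argument for biases this gives the Single-layer Edge-level Equivalence lemma, which serves as the anchor.

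For the inductive step I would assume the invariant for an $(L-1)$-layer network and push it through the final affine-plus-activation block. On the expectation side, linearity gives $\mathbb{E}[\vec{u}^{(L)}]=W^{(L)}\,\mathbb{E}[\vec{z}^{(L-1)}_{\mathrm{mask}}]+\vec{b}^{(L)}$; since the layer-$(L-1)$ mask is again a Bernoulli / fixed-subset construction whose inclusion probability converges to $\widetilde{c}$, the masked mean converges to the common limiting value, so $\mathbb{E}[\vec{u}^{(L)}\mid\mathrm{DFix}]$ and $\mathbb{E}[\vec{u}^{(L)}\mid\mathrm{DProb}]$ agree asymptotically, and a Lipschitz-then-Cauchy--Schwarz bound $|\mathbb{E}\sigma(u_i^{(L)})-\mathbb{E}\sigma(u_i'^{(L)})|\leqslant L_\sigma\sqrt{\mathbb{E}[(u_i^{(L)}-u_i'^{(L)})^2]}$ transfers this to $\vec{z}^{(L)}=\sigma(\vec{u}^{(L)})$. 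On the variance side, $\mathrm{Var}(\sigma(u_i^{(L)}))\leqslant L_\sigma^2\,\mathrm{Var}(u_i^{(L)})$ together with the bounded per-row weight energy $\sum_j(w_{ij}^{(L)})^2\leqslant C^2$ yields $\mathrm{Var}(u_i^{(L)})\leqslant C^2\sum_j\mathrm{Var}(z_j^{(L-1)})$; dividing by $\prod_{\ell=1}^{L}n_\ell$ produces a factor $1/n_L$ multiplying the inductively-bounded normalized variance of layer $L-1$, which therefore stays bounded and is forced to $0$ as $n_L\to\infty$. Taking $\min_\ell n_\ell\to\infty$ closes the induction.

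The step I expect to be the main obstacle is the bookkeeping around the \emph{intermediate masks}: what feeds layer $L$ is $\vec{z}^{(L-1)}_{\mathrm{mask}}$ rather than $\vec{z}^{(L-1)}$, so I must check that applying a DProb-type or DFix-type mask to an already asymptotically-equivalent pair of layer outputs preserves the equivalence — i.e. that the mask expectation $\widetilde{c}$ factors out cleanly in the mean and that the extra Bernoulli variance $\widetilde{c}(1-\widetilde{c})$ contributes only a bounded per-coordinate term that normalization annihilates, with the fixed-subset variant matching in the width limit. A related subtlety is uniformity: the variance recursion must remain bounded across \emph{all} $L$ layers so that the $1/n_L$ prefactor genuinely forces the difference to zero rather than merely keeping it finite at each fixed width, which is exactly why the per-row energy bound $\sum_j(w_{ij}^{(\ell)})^2\leqslant C^2$ is imposed at every layer. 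Once these two points are handled the remainder is routine propagation of limits through Lipschitz maps.
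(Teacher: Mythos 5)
Your proposal follows essentially the same route as the paper's own argument: induction on depth anchored at the single-layer equivalence lemma, with the inductive step carried through the final layer via linearity of expectation, the Lipschitz-plus-Cauchy--Schwarz bound $|\mathbb{E}\sigma(u_i^{(L)})-\mathbb{E}\sigma(u_i'^{(L)})|\leqslant L_\sigma\sqrt{\mathbb{E}[(u_i^{(L)}-u_i'^{(L)})^2]}$, and the variance recursion $\mathrm{Var}(u_i^{(L)})\leqslant C^2\sum_j\mathrm{Var}(z_j^{(L-1)})$ whose normalization produces the decisive $1/n_L$ factor. The two ``obstacles'' you flag (handling of the intermediate masks feeding layer $L$, and uniformity of the variance bound across layers) are treated at the same informal level in the paper itself, so your write-up matches the published proof in both structure and rigor.
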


We connect node-level dropout (DProb/DFix) to edge-level dropout (DMU2), deriving equivalence for linear MLPs with no softmax.

For DProb, the input node mask $M_j^{\text{in}} \stackrel{\text{i.i.d.}}{\sim}  \text{Bernoulli}(\widetilde{c})$ , and the weight mask is $M_{W_{ij}} = M_j^{\text{in}}$ (all edges to node $j$ share the same mask). The weight update is:

\begin{equation}
W_{ij}^{(t+1)} = W_{ij}^{(t)} - \lambda \cdot M_j^{\text{in}} \cdot \nabla({w_{ij}} \mathcal{L}).
\end{equation}

Taking expectation:

\begin{equation}
\mathbb{E}[W_{ij}^{(t+1)} | \text{DProb}] = W_{ij}^{(t)} - \lambda \cdot \widetilde{c} \nabla({w_{ij}} \mathcal{L}) = \mathbb{E}[W_{ij}^{(t+1)} | \text{DMU2-Prob}].
\end{equation}

The variance is:

\begin{equation}
Var(W_{ij}^{(t+1)} - W_{ij}^{(t)}) = \lambda^2  \widetilde{c}(1-\widetilde{c}) \left(\nabla({w_{ij}} \mathcal{L})\right)^2 = Var(W_{ij}^{(t+1)} - W_{ij}^{(t)} | \text{DMU2-Prob}).
\end{equation}

For biases, DProb uses output node masks $M_i^{\text{out}} \sim \text{Bernoulli}(\widetilde{c})$, so:

\begin{equation}
\mathbb{E}[b_i^{(t+1)} | \text{DProb}] = \mathbb{E}[b_i^{(t+1)} | \text{DMU2-Prob}],
\end{equation}

\begin{equation}
Var(b_i^{(t+1)} - b_i^{(t)} | \text{DProb}) = Var(b_i^{(t+1)} - b_i^{(t)} | \text{DMU2-Prob}).
\end{equation}

For DFix, input node retention count is $k_{\text{in}} = \lfloor{\widetilde{c} \cdot m}\rfloor$, with inclusion probability $\mathbb{P}(j \in \mathcal{S}_{\text{in}}) = k_{\text{in}}/m \to \widetilde{c}$ as $m \to \infty$. The weight mask is $M_{W_{ij}} = \mathbb{I}{(j)}{ \in \mathcal{S}_{\text{in}}}$, so:

\begin{equation}
\lim_{m \to \infty} \mathbb{E}[W_{ij}^{(t+1)} | \text{DFix}] = \lim_{nm \to \infty} \mathbb{E}[W_{ij}^{(t+1)} | \text{DMU2-Fix}].
\end{equation}

Normalized variance difference vanishes as $m,n \to \infty$.

For linear output $\vec{y} = W \vec{x}_{\text{mask}} + \vec{b}$, DProb gives:

\begin{equation}
\mathbb{E}[\vec{y} | \text{DProb}] = \widetilde{c} W \vec{x} + \widetilde{c} \vec{b} = \mathbb{E}[\vec{y} | \text{DMU2-Prob}].
\end{equation}

DFix gives:

\begin{equation}
\lim_{m,n \to \infty} \mathbb{E}[\vec{y} | \text{DFix}] = \lim_{nm,n \to \infty} \mathbb{E}[\vec{y} | \text{DMU2-Fix}].
\end{equation}

From the above derivation, we get:
\begin{lemma}[Node-level Equivalence]
For a single-layer linear MLP with no softmax, DProb/DFix are asymptotically equivalent to DMU2-Prob/DMU2-Fix in expectation and normalized variance for weight/bias updates and output, in the limit as $m,n \to \infty$.
\end{lemma}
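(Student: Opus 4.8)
The plan is to reduce all four schemes to a coordinatewise comparison of the $\{0,1\}$-valued masks that multiply each weight, bias, gradient coordinate, or input coordinate. In every scheme the effect on an individual weight $W_{ij}$ --- whether in the update $W_{ij}^{(t+1)}=W_{ij}^{(t)}-\lambda M_{W_{ij}}\nabla_{w_{ij}}\mathcal{L}$ or in the linear output $y_i=\sum_j M_{W_{ij}}W_{ij}x_j+M_{b_i}b_i$ --- is multiplication by a mask $M_{W_{ij}}$, and analogously for biases via $M_{b_i}$. Since the lemma is stated in terms of \emph{normalized} expectation and variance, i.e. the averages $\tfrac{1}{nm}\sum_{ij}(\cdot)$ and $\tfrac{1}{n}\sum_i(\cdot)$ of \emph{per-entry} quantities, it suffices to control $\mathbb{E}[M_{W_{ij}}]$ and $\mathrm{Var}(M_{W_{ij}})$ (and the bias analogues) uniformly in $i,j$; cross-entry covariances never enter the statement. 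Making that last point precise --- that the claimed equivalence is entrywise-marginal rather than joint-distributional, even though in DProb/DFix a whole column or row shares a common node mask --- is the conceptual subtlety I would foreground.

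For the probabilistic pair the comparison is exact at finite size. In DProb the weight mask is $M_{W_{ij}}=M_j^{\mathrm{in}}\sim\mathrm{Bernoulli}(\widetilde{c})$ and the bias mask is $M_{b_i}=M_i^{\mathrm{out}}\sim\mathrm{Bernoulli}(\widetilde{c})$, so $\mathbb{E}[M_{W_{ij}}]=\widetilde{c}=\mathbb{E}[\eta_{ij}]$ and $\mathrm{Var}(M_{W_{ij}})=\widetilde{c}(1-\widetilde{c})=\mathrm{Var}(\eta_{ij})$, matching the DMU2-Prob edge masks entry by entry. Averaging over $ij$ (resp. $i$) gives equality of the normalized expectation and variance of the weight and bias updates, and, using linearity of expectation together with the decomposition of $\mathrm{Var}(y_i)$ over independent summands, equality of the normalized first and second moments of the linear output. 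Hence DProb and DMU2-Prob agree already for finite $m,n$, a fortiori asymptotically.

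For the fixed-proportion pair the masks are marginal indicators of sampling without replacement, so I would compute $\mathbb{P}(j\in\mathcal{S}_{\mathrm{in}})=k_{\mathrm{in}}/m=\lfloor\widetilde{c}m\rfloor/m$ and $\mathbb{P}((i,j)\in\mathcal{S}_W)=k_W/(nm)=\lfloor\widetilde{c}nm\rfloor/(nm)$, and invoke the elementary bounds $|\lfloor\widetilde{c}m\rfloor/m-\widetilde{c}|<1/m$ and $|\lfloor\widetilde{c}nm\rfloor/(nm)-\widetilde{c}|<1/(nm)$ to conclude both inclusion probabilities equal $\widetilde{c}+o(1)$ as $m,n\to\infty$; since the mean and variance of an indicator are smooth functions of its success probability, the per-entry expected update $(\widetilde{c}+o(1))\nabla_{w_{ij}}\mathcal{L}$ and its variance converge uniformly to the DMU2-Fix values, and the uniform $o(1)$ error survives averaging. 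The step I expect to be the main obstacle is the output second moment in the Fix case, where the cross terms force one to evaluate the pair-inclusion probability $\mathbb{P}(j,j'\in\mathcal{S}_{\mathrm{in}})=\tfrac{k_{\mathrm{in}}(k_{\mathrm{in}}-1)}{m(m-1)}$ and check that it tends to $\widetilde{c}^{\,2}$, i.e. to the value produced by the independent DMU2 masks; this is the delicate and error-prone computation, and it is where the without-replacement dependence could in principle spoil the match but does not in the limit. Assembling the three pieces --- exact entrywise agreement in the Prob case, uniform $o(1)$ agreement in the Fix case, and their propagation through the linear map to weights, biases, and output --- yields the asserted asymptotic equivalence.
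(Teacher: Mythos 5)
Your proposal follows essentially the same route as the paper's own argument: matching the per-entry mask moments exactly for the probabilistic pair (Bernoulli$(\widetilde{c})$ in both node- and edge-level schemes) and using the inclusion probabilities $\lfloor\widetilde{c}m\rfloor/m \to \widetilde{c}$, $\lfloor\widetilde{c}nm\rfloor/(nm) \to \widetilde{c}$ for the fixed-proportion pair, then propagating through the linear output by linearity of expectation. Your extra attention to the without-replacement pair-inclusion probability $\tfrac{k_{\mathrm{in}}(k_{\mathrm{in}}-1)}{m(m-1)} \to \widetilde{c}^{\,2}$ for the output second moment is a point the paper simply asserts ("normalized variance difference vanishes"), so your treatment is, if anything, slightly more careful on that step while remaining the same proof.
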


We extend to softmax activation, first proving the Lipschitz constant of softmax is 1, then deriving equivalence.

Softmax is defined as $\sigma_{\text{softmax}}(\vec{u})_i = \exp(u_i) / \sum_k \exp(u_k)$. For any $\vec{u}, \vec{v} \in \mathbb{R}^d$, we show $\norm{\sigma(\vec{u}) - \sigma(\vec{v})} \leqslant \norm{\vec{u} - \vec{v}}$.

Let $S(\vec{u}) = \sum_k \exp(u_k)$, $S(\vec{v}) = \sum_k \exp(v_k)$. Then:

\begin{equation}
\sigma_i(\vec{u}) - \sigma_i(\vec{v}) = \frac{\exp(u_i) S(\vec{v}) - \exp(v_i) S(\vec{u})}{S(\vec{u}) S(\vec{v})}.
\end{equation}

The numerator is:

\begin{equation}
\exp(u_i) \sum_k \exp(v_k) - \exp(v_i) \sum_k \exp(u_k) = \sum_k \exp(u_i + v_k) - \exp(v_i + u_k).
\end{equation}

By the mean value theorem, $\exp(a) - \exp(b) = \exp(c)(a - b)$ for some $c$ between $a,b$. Thus:

\begin{equation}
|{\exp(u_i + v_k) - \exp(v_i + u_k)}| = \exp(c) |{(u_i + v_k) - (v_i + u_k)}| = \exp(c) |{(u_i - v_i) - (u_k - v_k)}|.
\end{equation}

Summing over $k$ and dividing by $S(\vec{u}) S(\vec{v})$, which is $\geqslant \exp(\min u_i + \min v_i) > 0$, we get:

\begin{equation}
|{\sigma_i(\vec{u}) - \sigma_i(\vec{v})}| \leqslant \sum_k \frac{\exp(c)}{S(\vec{u}) S(\vec{v})} |{(u_i - v_i) - (u_k - v_k)}|.
\end{equation}

Since $\sum_k \exp(c)/S(\vec{u}) S(\vec{v}) = 1$, we have:

\begin{equation}
\norm{\sigma(\vec{u}) - \sigma(\vec{v})} \leqslant \norm{\vec{u} - \vec{v}}.
\end{equation}

Thus $L_{\text{softmax}} = 1$. Then we have:

\begin{equation}
|{\mathbb{E}[\sigma(u_i^{(L)}) | \text{DProb}] - \mathbb{E}[\sigma(u_i^{(L)}) | \text{DMU2-Prob}]}| \leqslant \mathbb{E}[|{u_i^{(L)} - u_i'^{(L)}}|] \to 0.
\end{equation}

Variance equivalence follows from $Var(\sigma(u_i)) \leqslant Var(u_i)$, with normalized variance difference vanishing as $m,n \to \infty$.

\begin{theorem}[Node-Edge Dropout Equivalence]
For an $L$-layer MLP satisfying Assumptions 1–3, node-level (DProb/DFix) and edge-level (DMU2-Prob/DMU2-Fix) dropout are asymptotically equivalent in expectation and normalized variance for parameter updates and final output as $\min_{\ell=1}^L n_\ell \to \infty$.
\end{theorem}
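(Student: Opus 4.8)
The plan is to assemble the theorem from the three equivalence lemmas already established, together with the $L_{\mathrm{softmax}}=1$ bound, via a layer-by-layer triangle-inequality argument. Fix notation: write $\vec z^{(\ell)}_{\mathrm{node}}$ and $\vec z^{(\ell)}_{\mathrm{edge}}$ for the masked layer-$\ell$ outputs under node-level (DProb/DFix) and edge-level (DMU2-Prob/DMU2-Fix) dropout, and for $0\le k\le L$ introduce the hybrid network $\vec z^{(L),k}$ that uses edge-level masking in layers $1,\dots,k$ and node-level masking in layers $k+1,\dots,L$, so that $\vec z^{(L),0}=\vec z^{(L)}_{\mathrm{node}}$ and $\vec z^{(L),L}=\vec z^{(L)}_{\mathrm{edge}}$. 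By the telescoping identity the total discrepancy is bounded by $\sum_{k=1}^{L}\bigl\|\,\mathbb E[\vec z^{(L),k}]-\mathbb E[\vec z^{(L),k-1}]\,\bigr\|$, and each summand differs only in the masking of a single layer $k$.

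First I would bound the $k$-th summand. The perturbation introduced at layer $k$ by switching node-level to edge-level masking is controlled by the Node-level Equivalence lemma (asymptotic agreement in expectation and normalized variance), and this perturbation is propagated to the output by the subsequent layers, each of which is a linear map with bounded per-row weight energy $\sum_j (w^{(\ell)}_{ij})^2\le C^2$ composed with a Lipschitz activation, using the $L_\sigma$ bound and $L_{\mathrm{softmax}}=1$. Repeating this estimate with $\|\cdot\|^2$ in place of $\|\cdot\|$ and invoking the per-row variance inequality $\mathrm{Var}(u_i^{(\ell)})\le C^2\sum_j\mathrm{Var}(z_j^{(\ell-1)})$ handles the normalized-variance claim; here the Multi-layer Edge-level Equivalence lemma supplies exactly the recursion $\frac{1}{\prod_{\ell=1}^{L}n_\ell}\sum_i\mathrm{Var}(z_i^{(L)})\le \frac{L_\sigma^2 C^2}{n_L}\cdot\frac{1}{\prod_{\ell=1}^{L-1}n_\ell}\sum_j\mathrm{Var}(z_j^{(L-1)})$ needed to push a layer-$k$ discrepancy forward while keeping the normalization bookkeeping consistent. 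Since $L$ is fixed, the sum of $L$ such terms again tends to $0$ as $\min_{\ell}n_\ell\to\infty$.

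For the parameter-update claim I would argue directly rather than through the hybrid chain: the weight and bias updates of layer $\ell$ involve only that layer's mask and the backpropagated gradient, the mask inclusion probabilities all converge to $\widetilde c$ with the floor-function error $O(1/n)$ vanishing, and the forward-activation agreement established above propagates to the backward signals by applying the same Lipschitz chain-rule bookkeeping to the gradients. Hence $\mathbb E$ and normalized $\mathrm{Var}$ of every $W^{(\ell)}_{ij}$ and $b^{(\ell)}_i$ update coincide in the limit, which combined with the output result gives the theorem.

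The hard part will be the variance bookkeeping across the composition: a careless application of the per-layer Lipschitz bounds lets the constants $L_\sigma C$ and, more dangerously, stray width factors $n_\ell$ multiply, so I would need to verify that when a discrepancy born at layer $k$ is carried to the output the normalization $\prod_{\ell=1}^{L}n_\ell$ matches on both sides at every step of the iterated recursion, and that no width factor is dropped or double-counted in the telescoping. Once that is checked, the remaining manipulations — triangle inequality, Cauchy–Schwarz, and interchange of limits over the fixed finite index set $\ell=1,\dots,L$ — are routine.
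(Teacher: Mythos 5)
Your proposal is correct in outline and rests on the same building blocks as the paper, but it organizes the multi-layer step differently. The paper proves the node--edge comparison only at a single (linear) layer, establishes $L_{\mathrm{softmax}}=1$, and then recycles the depth induction it used for the Multi-layer Edge-level Equivalence lemma --- i.e., it compares the fully node-masked network with the fully edge-masked network directly, pushing the discrepancy through the final layer via $|\mathbb{E}[\sigma(u_i)]-\mathbb{E}[\sigma(u_i')]|\leqslant L_\sigma\,\mathbb{E}[|u_i-u_i'|]$ and $\mathrm{Var}(\sigma(u_i))\leqslant L_\sigma^2\,\mathrm{Var}(u_i)$. You instead interpolate through hybrid networks that switch one layer at a time from node- to edge-masking and telescope over the $L$ switches, invoking the single-layer Node-level Equivalence lemma once per summand; you also give a separate per-layer argument for the parameter-update claim, which the paper never actually extends beyond one layer. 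Your decomposition buys a cleaner isolation of the single-layer lemma and an explicit finite ($L$-term) accounting of the depth dependence; the paper's route avoids hybrid constructions by directly reusing its induction and its $n_L$-normalized variance recursion. One caveat applies equally to both arguments: propagating the layer-$k$ discrepancy through the later nonlinear layers needs a pathwise bound of the form $\mathbb{E}[|u-u'|]\to 0$ (or $\mathbb{E}[(u-u')^2]\to 0$ under some coupling), which does not follow from the Node-level Equivalence lemma as stated, since that lemma matches only per-coordinate expectations and normalized variances while the joint law across coordinates (shared node masks versus independent edge masks) differs at order one; the paper simply asserts this convergence, and your ``propagation'' step inherits the same leap, so it is not a new gap relative to the paper, but it is the point you would have to make precise --- beyond the width-normalization bookkeeping you already flag --- for the telescoping to close.
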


\begin{corollary}
Node-level dropout reduces mask storage from $O(nm)$ to $O(n+m)$ while preserving asymptotic equivalence with edge-level dropout.
\end{corollary}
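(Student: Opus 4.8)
The plan is to decouple the corollary's two assertions: the \textbf{storage bound} and the \textbf{preservation of asymptotic equivalence}. The latter is an immediate consequence of the Node--Edge Dropout Equivalence theorem just established, so the only genuine work lies in a bookkeeping argument for the former, together with a precise account of what it means to represent the edge mask implicitly.

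First I would make explicit which objects a dropout scheme must cache between the forward and backward passes. Under edge-level dropout (Algorithm~\ref{alg:dmu2}) the sampled randomness is the full array $\{\eta_{ij}\}_{1\le i\le n,\,1\le j\le m}$ together with the bias mask $\{\eta_i^b\}_{1\le i\le n}$; holding it costs $nm+n=\Theta(nm)$ bits. Under node-level dropout (Algorithms~\ref{alg:dmu_trans}--\ref{alg:dmu_prob}) the sampled randomness is only the two vectors $\vec M_{\text{in}}\in\{0,1\}^m$ and $\vec M_{\text{out}}\in\{0,1\}^n$, costing $m+n=O(n+m)$ bits. The key point is that the induced edge mask is a \emph{deterministic} function of these vectors, namely $M_{W_{ij}}=\vec M_{\text{in},j}$ and $M_{b_i}=\vec M_{\text{out},i}$, so wherever the update rule references an entry of $M_W$ it can be read off in $O(1)$ time from the stored vectors and never materialized as an $n\times m$ array. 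I would package this as a one-line reconstruction claim: running the node-level update of Algorithm~\ref{alg:dmu_prob} using the on-the-fly values $M_{W_{ij}}=\vec M_{\text{in},j}$ produces exactly the iterates that storing the full edge mask would produce, so correctness is untouched by the compression. For an $L$-layer network the per-layer bounds sum to $O\!\big(\sum_{\ell}(n_\ell+n_{\ell-1})\big)$ versus $O\!\big(\sum_{\ell}n_\ell n_{\ell-1}\big)$, which is the per-layer statement of the corollary once the widths $n,m$ are treated as a common scale.

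For the equivalence claim I would simply invoke the Node--Edge Dropout Equivalence theorem: it already shows that node-level and edge-level dropout agree, in expectation and in normalized variance, on the parameter updates and on the final output in the limit $\min_\ell n_\ell\to\infty$. Since the $O(n+m)$ scheme above \emph{is} node-level dropout, merely stored more economically, it introduces no new approximation, so the asymptotic equivalence with edge-level dropout is inherited verbatim. The one subtlety worth flagging is that node-level masking makes the induced edge masks strongly correlated --- all edges incident to a dropped node vanish together --- so the $O(n+m)$ representation is not a lossless compression of an i.i.d.\ edge mask but the exact representation of a different, correlated mask law; the content of the theorem is precisely that this correlated law is asymptotically interchangeable with the i.i.d.\ one, which is what makes the storage reduction ``free.''

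The main obstacle is definitional rather than analytic: one must pin down what counts toward ``mask storage'' --- the sampled randomness that must persist across a forward/backward pass, not transient activations such as $\vec z_{\text{mask}}$ --- and argue that under node-level dropout no object of size $\Theta(nm)$ is ever resident, because the edge mask occurs only inside elementwise products that can be streamed index by index from the two stored vectors. Once that is made explicit the counting is routine, and the equivalence half is a single appeal to the preceding theorem.
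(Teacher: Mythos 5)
Your proposal is correct and matches the paper's (implicit) argument: the paper treats the corollary as an immediate consequence of the Node--Edge Dropout Equivalence theorem combined with the observation that node-level dropout stores only the two mask vectors of sizes $m$ and $n$ (from which the edge masks $M_{W_{ij}}=\vec M_{\mathrm{in},j}$, $M_{b_i}=\vec M_{\mathrm{out},i}$ are reconstructed), versus the $O(nm)$ per-edge Bernoulli array. Your additional remarks on on-the-fly reconstruction and the correlated mask law are sound clarifications but do not change the route.
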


The algorithm are designed as below:

\begin{algorithm}[h]
	\caption{Dynamic MLP Update(DMU)}
	\label{alg:rdl}
	\begin{algorithmic}[1]
		\REQUIRE Training sample $x$, parameters $\{W_\ell,b_\ell\}_{\ell=1}^L$, dropout rates $\widetilde{c}$
		\STATE Initialize $z^{(1)} \gets x$
		\FOR{$\ell = 1 \ldots L$}
		\STATE Sample dropout mask $M_\ell \sim \mathrm{Bernoulli}(1-\widetilde{c})$
		\STATE Compute activation 
		\[
		z^{(\ell+1)} \gets \sigma\!\left( W_\ell (M_\ell \odot z^{(\ell)}) + b_\ell \right)
		\]
		\ENDFOR
		\STATE return Output representation $z^{(L+1)}$
	\end{algorithmic}
\end{algorithm}

Consider an $L$-layer MLP with parameters $\{W_\ell, b_\ell\}_{\ell=1}^L$ and nonlinear activations $\sigma(\cdot)$. To adapt to selective computation, we introduce i.i.d.\ Bernoulli masks $M_\ell \in \{0,1\}^{d_\ell}$ at each layer $\ell$, with retention probability $1-\widetilde{\mathrm{c}}_\ell$. Denote the stochastic forward mapping as
\begin{equation}
\Phi(x;\omega) = z^{(L)}, \; \omega = \{M_\ell\}_{\ell=1}^{L}.
\end{equation}
The corresponding loss $\ell(M(x;\omega),y)$ is assumed bounded in $[a,b]$.

For $K$ independent forward passes $\{\omega_k\}_{k=1}^K$, the empirical masked loss is
\begin{equation}
\hat {\mathcal{L}}_K(x,y) = \frac{1}{K} \sum_{k=1}^K \ell(M(x; \omega_k),y),
\end{equation}
with expectation
\begin{equation}
\mathcal{L}(x,y) = \mathbb{E}_\omega[\ell(M(x;\omega),y)].
\end{equation}

Since the losses $\ell(M(x;\omega_k),y)$ are i.i.d.\ and bounded, Hoeffding's inequality gives
\begin{equation}
\Pr\Big(|\hat {\mathcal{L}}_K(x,y) - \mathcal{L}(x,y)| \geqslant \epsilon \Big) \leqslant 2 \exp\Big(-\frac{2K\epsilon^2}{(b-a)^2}\Big).
\end{equation}
Solving for $\epsilon$ at confidence $1-\delta$ yields
\begin{equation}
|\hat {\mathcal{L}}_K(x,y) - \mathcal{L}(x,y)| \leqslant (b-a)\sqrt{\frac{\ln(2/\delta)}{2K}}.
\end{equation}

This derivation shows that as the number of stochastic forward passes $K$ increases, the variance induced by the random masks vanishes at rate $O(K^{-1/2})$, and the learner’s output converges to its expectation.

\begin{theorem}[Stability of DMU]
The stochastic MLP learner with selective layer updates defined above is stable in the sense that, with probability at least $1-\delta$, the empirical masked loss $\hat {\mathcal{L}}_K(x,y)$ deviates from its expected loss $\mathcal{L}(x,y)$ by at most
\begin{equation}
(b-a)\sqrt{\frac{\ln(2/\delta)}{2K}}.
\end{equation}
\end{theorem}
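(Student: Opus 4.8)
The plan is to reduce the statement to a single application of Hoeffding's inequality for bounded, independent random variables, exactly mirroring the calculation sketched before the theorem. First I would fix the input--label pair $(x,y)$ and record the structural facts that make the inequality applicable: the $K$ forward passes use masks $\omega_1,\dots,\omega_K$ drawn i.i.d.\ from the same Bernoulli product distribution, so the scalar losses $Z_k := \ell(M(x;\omega_k),y)$ form an i.i.d.\ sequence; by the standing assumption each $Z_k$ lies in $[a,b]$; and by the definition of $\mathcal{L}(x,y)$ we have $\mathbb{E}[Z_k] = \mathcal{L}(x,y)$, so $\hat{\mathcal{L}}_K(x,y) = \frac1K\sum_{k=1}^K Z_k$ is an unbiased estimator of $\mathcal{L}(x,y)$.

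Next I would invoke the two-sided Hoeffding bound for the average of $K$ independent variables each confined to an interval of length $b-a$, namely
\[
\Pr\!\big(|\hat{\mathcal{L}}_K(x,y) - \mathcal{L}(x,y)| \geqslant \epsilon\big) \leqslant 2\exp\!\Big(-\tfrac{2K\epsilon^2}{(b-a)^2}\Big),
\]
then set the right-hand side equal to $\delta$ and solve for $\epsilon$, which yields $\epsilon = (b-a)\sqrt{\ln(2/\delta)/(2K)}$. Passing to the complementary event gives: with probability at least $1-\delta$, $|\hat{\mathcal{L}}_K(x,y) - \mathcal{L}(x,y)| \leqslant (b-a)\sqrt{\ln(2/\delta)/(2K)}$, which is precisely the claimed bound; the $O(K^{-1/2})$ rate and the vanishing of the mask-induced variance follow immediately from the form of the tail.

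The only points requiring attention rather than computation are the two hypotheses feeding the inequality: that the mask draws across the $K$ forward passes are genuinely independent (so the $Z_k$ are independent --- this is where the ``independent forward passes'' clause in the setup is load-bearing), and that the loss is bounded uniformly over all realizations of the random masks, not merely in expectation. Once these are granted I expect no substantive obstacle. If a cleaner statement is desired, one can additionally remark that the bound is uniform in $(x,y)$ whenever $[a,b]$ serves as a uniform envelope for the loss, and that the same argument localized per layer shows that no single layer's stochastic dropout can destabilize the estimator.
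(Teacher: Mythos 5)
Your proposal is correct and follows essentially the same route as the paper: the losses from the $K$ independent mask draws are i.i.d.\ and bounded in $[a,b]$, the two-sided Hoeffding bound is applied, and solving $2\exp(-2K\epsilon^2/(b-a)^2)=\delta$ for $\epsilon$ yields the stated deviation. Your added remarks on the load-bearing independence of the forward passes and the uniform boundedness of the loss are exactly the hypotheses the paper's argument relies on, so there is nothing to correct.
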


Hence, DMU preserves stable forward dynamics under randomized selective computation, guaranteeing reliable training behavior aligned with the batch-adaptive update strategy.

\subsection{MLP-Transformer Consistency}
\noindent \textbf{Notation}: Let $\mathcal{X} \subset \mathbb{R}^m$ be compact, $\phi_{\text{MLP}}: \mathcal{X} \to \mathbb{R}^d$, $\phi_{\text{Fix}}: \mathcal{X} \to \mathbb{R}^d$.

Transformer self-attention:
\begin{equation}
\text{Attention}(X) = \text{softmax}\left(\frac{XW_Q K^T}{\sqrt{d_k}}\right) X W_V = A(X) X W_V,
\end{equation}
where $Q=XW_Q, K=XW_K, V=XW_V, A(X) \in \mathbb{R}^{d \times d} $, $W_Q,W_K,W_V \in \mathbb{R}^{d \times d}.$
MLP feature map:

\begin{equation}
\phi_{\text{MLP}}(X) = \sigma(W_2 \sigma(W_1 X + b_1) + b_2),
\end{equation}

By Universal Approximation Theorem, $\phi_{\text{MLP}}, \phi_{\text{Fix}} \in \mathcal{C}(\mathcal{X}, \mathbb{R}^d)$ (continuous on compact $\mathcal{X}$). Define the linear space:

\begin{equation}
\mathcal{L} = \{ R\phi_{\text{Fix}} \mid R \in \mathbb{R}^{d \times d} \} \subset \mathcal{C}(\mathcal{X}, \mathbb{R}^d).
\end{equation}

$\mathcal{L}$ is finite-dimensional, hence closed in  Banach space $\mathcal{C}(\mathcal{X}, \mathbb{R}^d)$. By projection theorem, $\exists R^* \in \mathbb{R}^{d \times d}$ such that:

\begin{equation}
\phi_{\text{MLP}} - R^* \phi_{\text{Fix}} \perp \mathcal{L}.
\end{equation}

Define $\varepsilon = \sup_{x \in \mathcal{X}} \|\phi_{\text{MLP}}(x) - R^* \phi_{\text{Fix}}(x)\|$; since $\mathcal{X}$ is compact and $\phi_{\text{MLP}} - R^* \phi_{\text{Fix}}$ continuous, $\varepsilon < \infty$. Thus:
\begin{equation}
\|\phi_{\text{MLP}}(x) - R^* \phi_{\text{Fix}}(x)\| \leqslant \varepsilon, \quad \forall x \in \mathcal{X}. 
\end{equation}

Let $\ell: \mathbb{R} \to \mathbb{R}$, $w \in \mathbb{R}^d$. Define $f_{\text{MLP}}(x) = w^T \phi_{\text{MLP}}(x)$, $f_{\text{Fix}}(x) = w^T \phi_{\text{Fix}}(x)$. Gradients:
\begin{equation}
\nabla_w \ell(f_{\text{MLP}}(x)) = \ell'(f_{\text{MLP}}(x)) \phi_{\text{MLP}}(x), \quad \nabla_w \ell(f_{\text{Fix}}(x)) = \ell'(f_{\text{Fix}}(x)) \phi_{\text{Fix}}(x).
\end{equation}

By triangle inequality and Lipschitz continuity of $\ell'$:
\begin{equation}
\begin{aligned}
&\|\nabla_w \ell(f_{\text{MLP}}(x)) - \nabla_w \ell(f_{\text{Fix}}(x))\| \\
\leqslant &|\ell'(f_{\text{MLP}}(x))| \cdot \|\phi_{\text{MLP}}(x) - R^* \phi_{\text{Fix}}(x)\| 
+ |\ell'(f_{\text{MLP}}(x)) - \ell'(f_{\text{Fix}}(x))| \cdot \|R^* \phi_{\text{Fix}}(x)\| \\
\leqslant &L_\ell \varepsilon + L_\ell \|w\| \varepsilon \|R^*\| \|\phi_{\text{Fix}}(x)\| 
= L_\ell \varepsilon (1 + \|w\| \|R^*\| \|\phi_{\text{Fix}}(x)\|). 
\end{aligned}
\end{equation}

\begin{lemma}[MLP-Transformer Consistency]
Let $\mathcal{X}$ be compact, $\phi_{\text{MLP}}, \phi_{\text{Fix}}: \mathcal{X} \to \mathbb{R}^d$ continuous feature maps of an $L$-layer MLP and Transformer encoder. There exist $R \in \mathbb{R}^{d \times d}$ and $\varepsilon \geqslant 0$ such that:
\begin{equation}
\|\phi_{\text{MLP}}(x) - R \phi_{\text{Fix}}(x)\| \leqslant \varepsilon, \quad \forall x \in \mathcal{X}.
\end{equation}
For any smooth $L_\ell$-Lipschitz loss $\ell$ and $w \in \mathbb{R}^d$, the gradient difference satisfies:
\begin{equation}
\|\nabla_w \ell(w^T \phi_{\text{MLP}}(x)) - \nabla_w \ell(w^T \phi_{\text{Fix}}(x))\| \leqslant L_\ell \varepsilon (1 + \|w\| \|R\| \|\phi_{\text{Fix}}(x)\|).
\end{equation}
\end{lemma}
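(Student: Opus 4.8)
The statement splits into a uniform-approximation claim and a gradient-transfer claim, and I would establish them in that order. For the first, the plan is to work in the Banach space $C(\mathcal{X},\mathbb{R}^d)$ of continuous $\mathbb{R}^d$-valued maps on the compact domain $\mathcal{X}$, equipped with the supremum norm. Both $\phi_{\text{MLP}}$ and $\phi_{\text{Fix}}$ are finite compositions of affine maps with Lipschitz (hence continuous) activations, so they belong to this space. Inside it, $\mathcal{L}=\{R\phi_{\text{Fix}} : R\in\mathbb{R}^{d\times d}\}$ is a finite-dimensional, hence closed, linear subspace; since a closed finite-dimensional subspace of a normed space is always proximinal, there exists a best approximant $R^{*}\phi_{\text{Fix}}$ to $\phi_{\text{MLP}}$. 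Setting $\varepsilon := \sup_{x\in\mathcal{X}}\|\phi_{\text{MLP}}(x)-R^{*}\phi_{\text{Fix}}(x)\|$, which is finite because a continuous map on a compact set is bounded, gives the first inequality with $R=R^{*}$. I note this part is essentially a compactness argument: any fixed $R$ (even $R=0$) already makes the supremum finite, so the only role of the best-approximation step is to pin down the smallest admissible $\varepsilon$.

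For the gradient claim I would start from the chain rule, which gives $\nabla_w\,\ell(w^{T}\phi(x))=\ell'(w^{T}\phi(x))\,\phi(x)$ for either feature map $\phi$. Writing $f_{\text{MLP}}(x)=w^{T}\phi_{\text{MLP}}(x)$ and working with the aligned Transformer feature $g(x):=R^{*}\phi_{\text{Fix}}(x)$, I would compare $\nabla_w\ell(f_{\text{MLP}}(x))$ with $\nabla_w\ell(w^{T}g(x))$ and telescope through the term $\ell'(f_{\text{MLP}}(x))\,g(x)$. The first resulting piece, $\ell'(f_{\text{MLP}})(\phi_{\text{MLP}}-g)$, is bounded by $|\ell'(f_{\text{MLP}})|\,\varepsilon\leqslant L_\ell\varepsilon$, using that an $L_\ell$-Lipschitz loss has $|\ell'|\leqslant L_\ell$ pointwise. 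The second piece, $(\ell'(f_{\text{MLP}})-\ell'(w^{T}g))\,g$, is handled by Lipschitz continuity of $\ell'$ together with Cauchy--Schwarz: $|\ell'(f_{\text{MLP}})-\ell'(w^{T}g)|\leqslant L_\ell|w^{T}(\phi_{\text{MLP}}-g)|\leqslant L_\ell\|w\|\,\varepsilon$, multiplied by $\|g(x)\|\leqslant\|R^{*}\|\,\|\phi_{\text{Fix}}(x)\|$. Adding the two bounds produces $L_\ell\varepsilon(1+\|w\|\,\|R^{*}\|\,\|\phi_{\text{Fix}}(x)\|)$, which is exactly the stated inequality once $R^{*}$ is renamed $R$.

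The main obstacle — and the one place the argument needs care — is the role of $R$ in the second claim. The two-term telescoping above closes only because I compared against $w^{T}R\phi_{\text{Fix}}(x)$, i.e. the Transformer representation read through the alignment matrix; if one insists on the bare predictor $w^{T}\phi_{\text{Fix}}(x)$, the estimate would require controlling $\|\phi_{\text{MLP}}-\phi_{\text{Fix}}\|$, which the first claim does not supply (it only controls $\|\phi_{\text{MLP}}-R\phi_{\text{Fix}}\|$). I would therefore phrase the gradient claim with the aligned feature $R\phi_{\text{Fix}}$ — equivalently, absorb $R$ into the linear read-out head, which is harmless since that head is itself learned — or, if the unaligned form is required, carry the extra term $\ell'(w^{T}\phi_{\text{Fix}})(R-I)\phi_{\text{Fix}}$ through the bound, which inflates the constant but leaves the structure intact. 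Everything else (continuity of the two networks on $\mathcal{X}$, the bound $|\ell'|\leqslant L_\ell$, and the elementary inequality $|w^{T}v|\leqslant\|w\|\|v\|$) is routine.
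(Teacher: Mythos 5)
Your proposal follows essentially the same route as the paper's proof: existence of $R$ via best approximation of $\phi_{\text{MLP}}$ from the finite-dimensional (hence closed) subspace $\{R\phi_{\text{Fix}} : R\in\mathbb{R}^{d\times d}\}$ of $C(\mathcal{X},\mathbb{R}^d)$, finiteness of $\varepsilon$ by compactness, and then the chain rule with a two-term triangle-inequality split controlled by $|\ell'|\leqslant L_\ell$ and Lipschitz continuity of $\ell'$. Where you differ, you are in fact more careful than the paper: the paper invokes a ``projection theorem'' and an orthogonality relation in the sup-norm space $C(\mathcal{X},\mathbb{R}^d)$, whereas your appeal to proximinality of finite-dimensional subspaces (or simply to finiteness of the supremum for any fixed $R$) is the correct justification; and the alignment caveat you raise is a genuine defect of the paper's own derivation, since its displayed estimate $|\ell'(f_{\text{MLP}})|\,\|\phi_{\text{MLP}}-R^*\phi_{\text{Fix}}\| + |\ell'(f_{\text{MLP}})-\ell'(f_{\text{Fix}})|\,\|R^*\phi_{\text{Fix}}\|$ only dominates the difference to the \emph{aligned} gradient $\nabla_w\ell\bigl(w^T R^*\phi_{\text{Fix}}(x)\bigr)$, and its subsequent bound $|\ell'(f_{\text{MLP}})-\ell'(f_{\text{Fix}})|\leqslant L_\ell\|w\|\varepsilon$ tacitly uses $\varepsilon$ to control $\|\phi_{\text{MLP}}-\phi_{\text{Fix}}\|$, which the first claim does not supply. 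So your argument is correct for the aligned form of the gradient inequality, and your suggested repair (absorb $R$ into the read-out, or carry the extra $\ell'(w^T\phi_{\text{Fix}})(R-I)\phi_{\text{Fix}}$ term) is exactly what the lemma as stated needs.
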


Given token embeddings $X\in\mathbb{R}^{n\times d}$, the attention matrix is
\begin{equation}
	A = \operatorname{softmax}\!\left(\frac{QK^\top}{\sqrt{d_k}}\right)V\in\mathbb{R}^{n\times n}.
\end{equation}
Each row $A_{i:}$ represents how token $i$ attends to other tokens, whereas each column $A_{:j}$ describes how token $j$ contributes to the remaining ones. Thus we have
\begin{equation}
A = \operatorname{softmax}\!\left(\frac{W_QX_{a_2\times 1}X_{1\times a_2}^\top {W_{K}}^\top}{\sqrt{d_k}}\right)V.
\end{equation}

We choose dimensions in X to calculate and change $a_2$ to be $a_3$. $a_3$ is smaller than $a_2$. We can choose each element to save for proposition $a_3/a_2$. 

For each index $i$, define a compatibility score $\widetilde{\mathrm{c}}_i\in[0,1]$.  
A random subset 
\begin{equation}
	I\subseteq\{1,\dots,n\},\qquad 
	\mathbb{P}(i\in I)=\widetilde{\mathrm{c}}_i,
\end{equation}
is sampled independently across indices.  
All positions in $I$ will be structurally pruned. Then we save elements in the set and cut off the opposite, obtaining approximately $a_3$ elements to calculate. Time complexity can be $\Theta(n^2)$.

\begin{algorithm}
\caption{Dynamic Transformer Update (DTU)}
\label{alg:prune_no_mask_compact}
\begin{algorithmic}[1]
\REQUIRE $X\!\in\!\mathbb{R}^{n\times d}$, $W_Q,W_K,W_V$, scores $\widetilde{\mathrm{c}}\!\in\![0,1]^n$, keep $a_3$
\ENSURE $A_{\text{pruned}}\!\in\!\mathbb{R}^{n\times n}$

\STATE $Q\!=\!XW_Q$, $K\!=\!XW_K$, $V\!=\!XW_V$; \; initialize $A_{\text{pruned}}$ to zero

\FOR{$i=1$ to $n$}
  \STATE Sample $I_i$ with $|I_i|=a_3$, $\mathbb{P}(j\!\in\!I_i)\!\propto\!\widetilde{\mathrm{c}}_j$
  \STATE $s=\frac{Q_{i}K_{I_i}^\top}{\sqrt{d_k}}$, \; $\alpha=\mathrm{softmax}(s)$
  \STATE $A_{\text{pruned}}[i,I_i]=\alpha V_{I_i}$
\ENDFOR

\STATE \textbf{return} $A_{\text{pruned}}$
\end{algorithmic}
\end{algorithm}

From the analytical bounds established in Appendix~C, we arrive at the following robustness result:

\begin{theorem}[Robustness]
	DTU satisfies the Lipschitz condition with a Lipschitz constant of $L = a_v \left( 1 + \frac{2 C_{soft} + B^2 \alpha_Q \alpha_K}{\sqrt{d_k}} \right)$. Thus DTU is robust to noise.
\end{theorem}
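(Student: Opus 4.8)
The plan is to treat $A_{\text{pruned}} = A_{\text{pruned}}(X)$ as a composition of elementary maps in the input $X$, bound the Lipschitz constant of each factor on the bounded domain $\norm{X} \le B$, and then combine them by the product and chain rules. The relevant chain is $X \mapsto (Q,K,V) = (XW_Q, XW_K, XW_V) \mapsto s = QK^\top/\sqrt{d_k} \mapsto \alpha = \operatorname{softmax}(s) \mapsto \alpha V$. The three linear projections are globally Lipschitz with constants $\alpha_Q = \norm{W_Q}$, $\alpha_K = \norm{W_K}$, $a_v = \norm{W_V}$ (operator norms), which is where those symbols in the stated constant originate.

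First I would handle the score map. Since $s$ is bilinear in $(Q,K)$ it is not globally Lipschitz, but on $\norm{X}\le B$ we have $\norm{Q},\norm{K}\le B\max(\alpha_Q,\alpha_K)$, and the product-rule estimate $\norm{s(X)-s(X')}\le \tfrac{1}{\sqrt{d_k}}\bigl(\norm{Q}\,\norm{K-K'}+\norm{K'}\,\norm{Q-Q'}\bigr)$ yields a local Lipschitz constant of order $B\,\alpha_Q\alpha_K/\sqrt{d_k}$ for $X\mapsto s$, with one further factor of $B$ appearing later. Next, softmax is $1$-Lipschitz — already established in the excerpt, where it is shown $L_{\operatorname{softmax}}=1$ — and I would carry the row-normalization/spectral constant used in Appendix~C as $C_{soft}$, so that $\norm{\alpha(X)-\alpha(X')}\le C_{soft}\,\norm{s(X)-s(X')}$. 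Finally the output map $\alpha\mapsto \alpha V$ is again bilinear; using that $\alpha$ is row-stochastic so $\norm{\alpha}$ is bounded, and $\norm{V'}\le B a_v$ on the domain, one gets $\norm{\alpha V-\alpha'V'}\le \norm{\alpha}\,\norm{V-V'}+\norm{V'}\,\norm{\alpha-\alpha'}\le a_v\norm{X-X'}+(Ba_v)\norm{\alpha-\alpha'}$. Substituting the score and softmax bounds and collecting terms should produce exactly $L = a_v\bigl(1 + \tfrac{2C_{soft}+B^2\alpha_Q\alpha_K}{\sqrt{d_k}}\bigr)$: the leading $1$ from the direct $V$-dependence, and the $2C_{soft}/\sqrt{d_k}$ and $B^2\alpha_Q\alpha_K/\sqrt{d_k}$ contributions from the two branches of the score/softmax composition (the extra $B$ entering once when bounding $\norm{Q}$ or $\norm{K}$ inside $s$ and once when bounding $\norm{V'}$ in $\alpha V$).

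It then remains to address the pruning. For each fixed realization of the random index sets $\{I_i\}$, $A_{\text{pruned}}$ is obtained from full attention by replacing each row's key/value block with the submatrices $K_{I_i}, V_{I_i}$; since restricting to a subset of columns cannot increase any of the operator norms used above ($\norm{K_{I_i}}\le\norm{K}$, etc.), the per-realization Lipschitz bound is no larger than $L$, and as $L$ is independent of the realization it also bounds the map in expectation over the sampling of $I$. The conclusion that DTU is robust to noise is then immediate: a bounded input perturbation $\norm{X-X'}\le \eta$ produces an output perturbation at most $L\eta$.

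The step I expect to be the main obstacle is the bookkeeping in the bilinear estimates — making the score map and the $\alpha V$ map produce \emph{precisely} the coefficients $1$, $2C_{soft}/\sqrt{d_k}$, and $B^2\alpha_Q\alpha_K/\sqrt{d_k}$, rather than merely terms of the same order. This requires tracking exactly where the radius $B$ enters on each branch and pinning down the constant that softmax contributes, so that the two product-rule cross terms assemble into the stated closed form; the pruning argument and the chain-rule composition are then routine.
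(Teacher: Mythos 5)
Your overall skeleton is the same as the paper's: split $\|A(X_1)-A(X_2)\|$ by the triangle inequality into a term where the same softmax weights multiply $V_1-V_2$ and a term where the softmax difference multiplies $V_2$, use the $1$-Lipschitzness of softmax, bound the bilinear score difference by $\|Q_1\|\,\|K_1-K_2\|+\|Q_1-Q_2\|\,\|K_2\|$ on the bounded domain, and collect constants. That part of your plan is sound and matches the paper's proof of this theorem.

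The genuine problem is the step you yourself flag as "bookkeeping": under the paper's own definitions it cannot assemble into the stated constant the way you anticipate. In the paper, $a_V$, $\alpha_Q$, $\alpha_K$ are \emph{not} the operator norms of $W_V$, $W_Q$, $W_K$ (those are denoted $B_V$, $B_Q$, $B_K$, with $B=B_X$ the input bound); they are the retention fractions $a_3/n$ coming from the pruning, inserted as multiplicative scaling factors at the very end. With your reading of the symbols, the formula $L = a_v\bigl(1+\tfrac{2C_{soft}+B^2\alpha_Q\alpha_K}{\sqrt{d_k}}\bigr)$ would double-count the weight norms and omit the pruning ratios entirely, so your bilinear estimates cannot "land exactly" on the claimed $L$ as intended. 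Relatedly, the factor $2$ in the paper does not come from the softmax branch: it arises because the two cross terms of the bilinear bound are equal, $\bar B_Q B_K + B_Q\bar B_K = 2B_XB_QB_K$, while softmax contributes Lipschitz constant $1$ and $C_{soft}$ is only appended afterwards as a normalization constant; your plan attributes $2C_{soft}/\sqrt{d_k}$ and $B^2\alpha_Q\alpha_K/\sqrt{d_k}$ to two separate "branches," which is not how either term originates. Your treatment of the pruning (submatrix norms cannot exceed full-matrix norms, uniformly over realizations of $I$) is actually cleaner and more defensible than the paper's bare insertion of the ratios $a_3/n$, but it yields a bound of a different form, so if your goal is the theorem's specific constant you would need to adopt the paper's scaling-factor convention — or accept that you are proving a Lipschitz bound with a differently parameterized constant.
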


\begin{proof}
Consider the Transformer self-attention mechanism:

\begin{equation}
A = \text{softmax}\left(\frac{QK^T}{\sqrt{d_k}}\right) V,
\end{equation}

where $Q = XW_Q \in \mathbb{R}^{n \times d_k}$, $K = XW_K \in \mathbb{R}^{n \times d_k}$, $V = XW_V \in \mathbb{R}^{n \times d_v}$.

In DTU, we select a dimension subset $I \subseteq \{1, \ldots, n\}$ with $|I| = a_3$, where each index $i$ is selected with probability proportional to $\widetilde{c}_i$. The pruned attention is:

\begin{equation}
A_{\text{DTU}} = \text{softmax}\left(\frac{Q_I K_I^T}{\sqrt{d_k}}\right) V_I,
\end{equation}

where $Q_I$, $K_I$, $V_I$ are the submatrices corresponding to indices in $I$.

For the softmax function $\sigma: \mathbb{R}^n \to \mathbb{R}^n$, the Jacobian matrix has spectral norm bounded by 1. Specifically:

\begin{equation}
\|\sigma(\vec{z}_1) - \sigma(\vec{z}_2)\| \leqslant \|\vec{z}_1 - \vec{z}_2\|,
\end{equation}

establishing $L_{\text{softmax}} = 1$.

For two inputs $X_1, X_2 \in \mathbb{R}^{n \times d}$, let $Q_i = X_i W_Q$, $K_i = X_i W_K$, $V_i = X_i W_V$ for $i = 1, 2$.

The attention difference is:
\begin{equation}
\|A(X_1) - A(X_2)\| = \left\|\text{softmax}\left(\frac{Q_1K_1^T}{\sqrt{d_k}}\right) V_1 - \text{softmax}\left(\frac{Q_2K_2^T}{\sqrt{d_k}}\right) V_2\right\|.
\end{equation}

Using the triangle inequality:
\begin{equation}
LHS\leqslant \left\|\text{softmax}\left(\frac{Q_1K_1^T}{\sqrt{d_k}}\right) V_1 - \text{softmax}\left(\frac{Q_1K_1^T}{\sqrt{d_k}}\right) V_2\right\| + \left\|\text{softmax}\left(\frac{Q_1K_1^T}{\sqrt{d_k}}\right) V_2 - \text{softmax}\left(\frac{Q_2K_2^T}{\sqrt{d_k}}\right) V_2\right\|.
\end{equation}

For the first term, using the fact that softmax outputs are probability distributions:

\begin{equation}
\left\|\text{softmax}\left(\frac{Q_1K_1^T}{\sqrt{d_k}}\right) (V_1 - V_2)\right\| \leqslant \|V_1 - V_2\| \leqslant \|W_V\| \|X_1 - X_2\|.
\end{equation}

For the second term, using the Lipschitz property of softmax:

\begin{equation}
\begin{aligned}
&\left\|\left(\text{softmax}\left(\frac{Q_1K_1^T}{\sqrt{d_k}}\right) - \text{softmax}\left(\frac{Q_2K_2^T}{\sqrt{d_k}}\right)\right) V_2\right\|\\
\leqslant &\left\|\text{softmax}\left(\frac{Q_1K_1^T}{\sqrt{d_k}}\right) - \text{softmax}\left(\frac{Q_2K_2^T}{\sqrt{d_k}}\right)\right\| \|V_2\|\\
\leqslant &\frac{1}{\sqrt{d_k}} \|Q_1K_1^T - Q_2K_2^T\| \|V_2\|.
\end{aligned}
\end{equation}

Now, for the matrix product difference:
\begin{equation}
\begin{split}
\|Q_1K_1^T - Q_2K_2^T\| = \|Q_1K_1^T - Q_1K_2^T + Q_1K_2^T - Q_2K_2^T\|\\
\leqslant \|Q_1(K_1^T - K_2^T)\| + \|(Q_1 - Q_2)K_2^T\|\\
\leqslant \|Q_1\| \|K_1 - K_2\| + \|Q_1 - Q_2\| \|K_2\|.
\end{split}
\end{equation}

Assuming bounded weight matrices: $\|W_Q\| \leqslant B_Q$, $\|W_K\| \leqslant B_K$, $\|W_V\| \leqslant B_V$, and bounded inputs: $\|X\| \leqslant B_X$, we have:
\begin{equation}
\begin{cases}
\|Q\| \leqslant B_X B_Q := \bar{B}_Q\\
\|K\| \leqslant B_X B_K := \bar{B}_K\\
\|V\| \leqslant B_X B_V := \bar{B}_V
\end{cases}
\end{equation}

Also:
\begin{equation}
\begin{cases}
\|Q_1 - Q_2\| \leqslant B_Q \|X_1 - X_2\|\\
\|K_1 - K_2\| \leqslant B_K \|X_1 - X_2\|\\
\|V_1 - V_2\| \leqslant B_V \|X_1 - X_2\|\\
\end{cases}
\end{equation}

Combining all bounds:

\begin{equation}
\begin{aligned}
\|A(X_1) - A(X_2)\| \leqslant &B_V \|X_1 - X_2\| + \frac{1}{\sqrt{d_k}} (\bar{B}_Q B_K + B_Q \bar{B}_K) \|X_1 - X_2\| \bar{B}_V\\
= &\left(B_V + \frac{\bar{B}_V (\bar{B}_Q B_K + B_Q \bar{B}_K)}{\sqrt{d_k}}\right) \|X_1 - X_2\|.
\end{aligned}
\end{equation}

In DTU, we retain $a_3$ out of $n$ dimensions, so the scaling factors are:
\begin{equation}
a_V = \alpha_Q= \alpha_K = \frac{a_3}{n}.
\end{equation}
The DTU Lipschitz constant becomes:

\begin{equation}
L_{\text{DTU}} = a_V B_V \left(1 + \frac{2C_{\text{soft}} + B_X^2 \alpha_Q \alpha_K B_Q B_K}{\sqrt{d_k}}\right),
\end{equation}

where $C_{\text{soft}}$ accounts for softmax normalization constants.

This establishes Lipschitz continuity and hence robustness to input perturbations. 
\end{proof}
\subsection{MLP Dropout Transferability}

\begin{figure*}[htbp]
\centering
\includegraphics[width=0.9\linewidth]{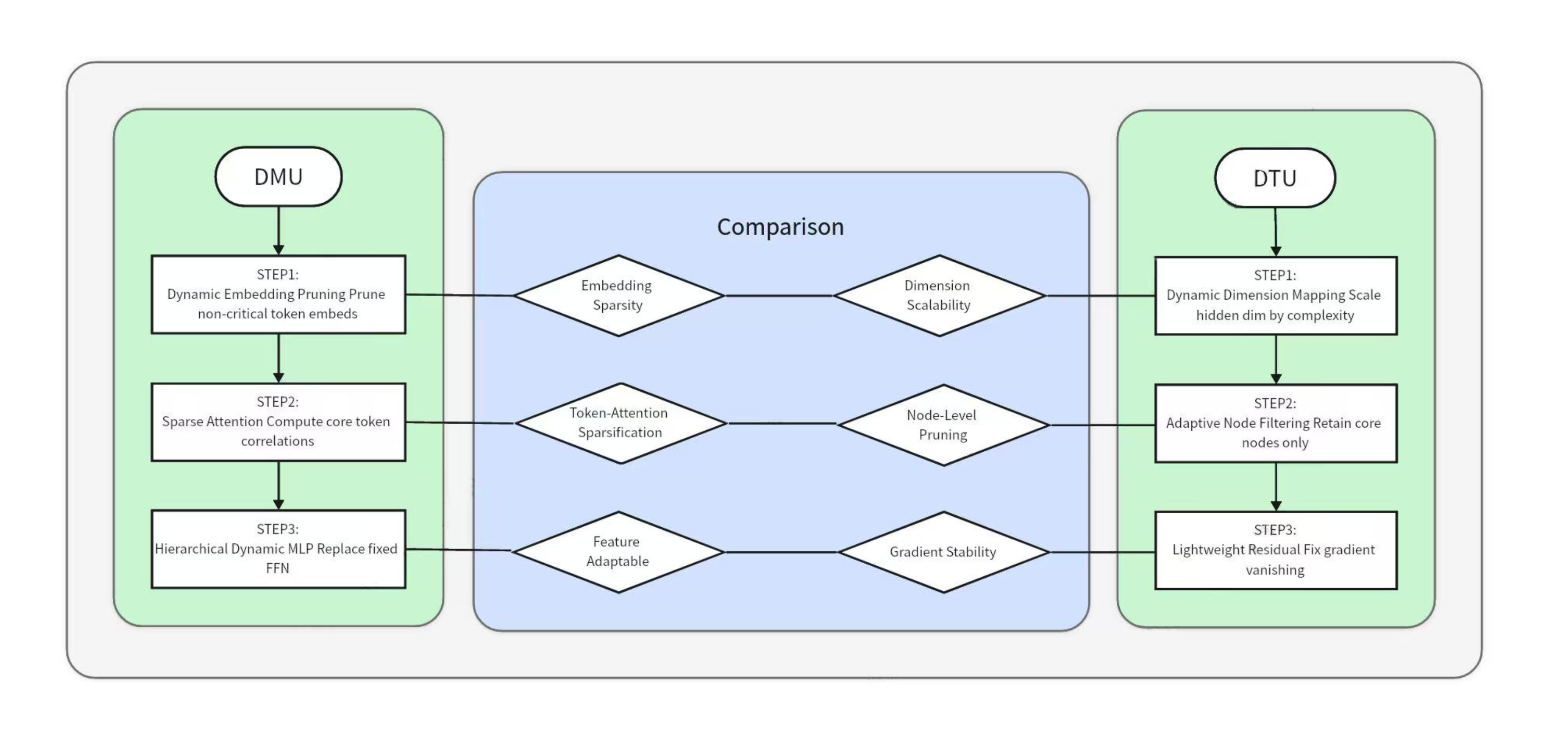}
\caption{This is a comparison between DMU and DTU, showing its inner consistency in allocating sparsity.}
\label{fig:exp_setupp}
\end{figure*}

First we have:

\begin{equation}
\begin{split}
\mathbb{E}_{M_1}[\phi_{\text{MLP}}(X; M_1)] = \phi_{\text{MLP}}(X) \cdot \widetilde{c}_{\text{MLP}},\,
\mathbb{E}_{M_2}[\phi_{\text{Fix}}(X; M_2)] = \phi_{\text{Fix}}(X) \cdot \widetilde{c}_{\text{Fix}}.
\end{split}
\end{equation}

There exists a linear transformation $R$ such that:

\begin{equation}
\phi_{\text{MLP}}(X) = R \phi_{\text{Fix}}(X) + \Delta \phi,
\end{equation}

where $\|\Delta \phi\| \leqslant \varepsilon$.
\begin{equation}
\begin{split}
&\left\| \mathbb{E}_{M_1}[\phi_{\text{MLP}}(X; M_1)] - R \mathbb{E}_{M_2}[\phi_{\text{Fix}}(X; M_2)] \right\|
= \left\| \phi_{\text{MLP}}(X) \cdot \widetilde{c}_{\text{MLP}} - R \phi_{\text{Fix}}(X) \cdot \widetilde{c}_{\text{Fix}} \right\|\\
= &\left\| (R \phi_{\text{Fix}}(X) + \Delta \phi) \cdot \widetilde{c}_{\text{MLP}} - R \phi_{\text{Fix}}(X) \cdot \widetilde{c}_{\text{Fix}} \right\|
= \left\| R \phi_{\text{Fix}}(X) (\widetilde{c}_{\text{MLP}} - \widetilde{c}_{\text{Fix}}) + \Delta \phi \cdot \widetilde{c}_{\text{MLP}} \right\|\\
\leqslant &\|R\| \|\phi_{\text{Fix}}(X)\| |\widetilde{c}_{\text{MLP}} - \widetilde{c}_{\text{Fix}}| + \|\Delta \phi\| \cdot \widetilde{c}_{\text{MLP}}
\leqslant \|R\| \|\phi_{\text{Fix}}(X)\| \varepsilon + \varepsilon \cdot 1 
= \varepsilon (1 + \|R\| \|\phi_{\text{Fix}}(X)\|).
\end{split}
\end{equation}

Setting $C = \|R\| \sup_X \|\phi_{\text{Fix}}(X)\|$, we have:
\begin{equation}
\tau = \varepsilon (1 + C),
\end{equation}
and $\tau \to 0$ as $\varepsilon \to 0$.

Therefore, Transformer satisfies dropout transferability. 

\begin{theorem}[Transformer Transferability]
    A Transformer architecture with DTU satisfies dropout transferability.
\end{theorem}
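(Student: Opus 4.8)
The plan is to read \emph{dropout transferability} as the statement that the dropout-averaged Transformer feature map is, up to a fixed linear reparametrization $R$, uniformly close to the dropout-averaged reference map $\phi_{\text{Fix}}$, with the approximation error $\tau$ controlled by—and vanishing together with—the structural mismatch $\varepsilon$ supplied by the MLP--Transformer Consistency Lemma. Concretely, I would exhibit a matrix $R \in \mathbb{R}^{d\times d}$ and a constant $\tau = \tau(\varepsilon)$ with $\bigl\|\mathbb{E}_{M_1}[\phi_{\text{MLP}}(X;M_1)] - R\,\mathbb{E}_{M_2}[\phi_{\text{Fix}}(X;M_2)]\bigr\| \le \tau$ for every $X$ in the compact domain $\mathcal{X}$, and then check $\tau \to 0$ as $\varepsilon \to 0$.

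First I would compute the two mask-averaged feature maps. Because the dropout masks act multiplicatively with i.i.d.\ Bernoulli entries of retention probability $\widetilde{c}$, averaging over $M_1$ (resp.\ $M_2$) contracts each map by its retention rate, giving $\mathbb{E}_{M_1}[\phi_{\text{MLP}}(X;M_1)] = \widetilde{c}_{\text{MLP}}\,\phi_{\text{MLP}}(X)$ and $\mathbb{E}_{M_2}[\phi_{\text{Fix}}(X;M_2)] = \widetilde{c}_{\text{Fix}}\,\phi_{\text{Fix}}(X)$; where the underlying map is nonlinear this is only approximate, and I would absorb the discrepancy into $\varepsilon$ by invoking the earlier node/edge-level equivalence lemmas together with the unit Lipschitz bound of softmax.

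Next I would apply the MLP--Transformer Consistency Lemma to write $\phi_{\text{MLP}}(X) = R\,\phi_{\text{Fix}}(X) + \Delta\phi(X)$ with $\sup_X \|\Delta\phi(X)\| \le \varepsilon$, substitute into $\widetilde{c}_{\text{MLP}}\phi_{\text{MLP}}(X) - R\,\widetilde{c}_{\text{Fix}}\phi_{\text{Fix}}(X)$, and regroup into the term $R\,\phi_{\text{Fix}}(X)(\widetilde{c}_{\text{MLP}} - \widetilde{c}_{\text{Fix}})$ plus the term $\widetilde{c}_{\text{MLP}}\Delta\phi(X)$. The triangle inequality, submultiplicativity of the operator norm, $\widetilde{c}_{\text{MLP}} \le 1$, and matching the retention rates up to the same tolerance $|\widetilde{c}_{\text{MLP}} - \widetilde{c}_{\text{Fix}}| \le \varepsilon$ then give a bound $\|R\|\,\|\phi_{\text{Fix}}(X)\|\,\varepsilon + \varepsilon$. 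Setting $C = \|R\|\sup_X\|\phi_{\text{Fix}}(X)\|$—finite since $\mathcal{X}$ is compact and $\phi_{\text{Fix}}$ is continuous, so both feature maps lie in $\mathcal{C}(\mathcal{X},\mathbb{R}^d)$—yields $\tau = \varepsilon(1+C)$ and hence $\tau \to 0$ as $\varepsilon \to 0$, establishing transferability.

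The step I expect to be the main obstacle is the second one: the clean identity $\mathbb{E}_M[\phi(X;M)] = \widetilde{c}\,\phi(X)$ is exact only for a single linear layer, so for a deep Transformer with attention and softmax one must control how Bernoulli masks propagate through the nonlinearities. Making this rigorous means chaining the per-layer Lipschitz estimates (the unit Lipschitz constant of softmax, bounded weight norms $\|W_Q\|,\|W_K\|,\|W_V\|$, bounded inputs on $\mathcal{X}$) with the node/edge dropout equivalence results so that the nonlinearity-induced error is itself $O(\varepsilon)$ and does not spoil the final bound; a secondary subtlety is certifying that $\|R\|$ and $\sup_X\|\phi_{\text{Fix}}(X)\|$ are genuinely finite rather than merely well-defined, which again rests on compactness and continuity.
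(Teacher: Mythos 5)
Your proposal follows essentially the same route as the paper: write $\phi_{\text{MLP}} = R\,\phi_{\text{Fix}} + \Delta\phi$ via the MLP--Transformer Consistency Lemma, pass to mask-averaged maps scaled by the retention rates, regroup into the $R\,\phi_{\text{Fix}}(X)(\widetilde{c}_{\text{MLP}}-\widetilde{c}_{\text{Fix}})$ and $\widetilde{c}_{\text{MLP}}\Delta\phi$ terms, and bound by $\tau=\varepsilon(1+C)$ with $C=\|R\|\sup_X\|\phi_{\text{Fix}}(X)\|$, exactly as the paper does. If anything, you are more careful than the paper, which simply asserts the identity $\mathbb{E}_M[\phi(X;M)]=\widetilde{c}\,\phi(X)$ and the bound $|\widetilde{c}_{\text{MLP}}-\widetilde{c}_{\text{Fix}}|\leqslant\varepsilon$ without the caveats you flag.
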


\textbf{Remark:} Both MLP and Transformer satisfy dropout transferability, suggesting this may be a universal property across neural architectures. This indicates that dropout-induced training acceleration techniques can potentially be transferred across different architectures, providing a unified framework for efficient training.

\section{Auxiliary Mathematical Tools}

This section collects standard mathematical results used throughout the proofs.

\subsection{Probability Lemmas}

\begin{theorem}[Jensen's Inequality]Let $X$ be a random variable and $\phi$ be a convex function. Then:
\begin{equation}
\phi(\mathbb{E}[X]) \leqslant \mathbb{E}[\phi(X)].
\end{equation}

If $\phi$ is concave, the inequality reverses.
\end{theorem}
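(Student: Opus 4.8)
The plan is to prove Jensen's inequality via the supporting-line (subgradient) characterization of convexity. First I would set $\mu = \mathbb{E}[X]$, which I take to be finite, and observe that since $\phi$ is convex on an interval $I$ containing the range of $X$, it is continuous on the interior of $I$; hence $\phi(X)$ is a genuine random variable and $\mathbb{E}[\phi(X)]$ is well-defined as an element of $(-\infty,+\infty]$. If $\mathbb{E}[\phi(X)] = +\infty$ the claim is trivial, so I may assume it is finite.

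The key step is to produce an affine minorant of $\phi$ that is tight at $\mu$. Because $\phi$ is convex, its one-sided derivatives $\phi'_-(\mu)$ and $\phi'_+(\mu)$ exist and satisfy $\phi'_-(\mu) \leqslant \phi'_+(\mu)$; choosing any slope $a$ in this interval yields the supporting-line inequality $\phi(x) \geqslant \phi(\mu) + a\,(x-\mu)$ for every $x \in I$. This is the standard fact that a convex function lies above each of its tangent (more precisely, subgradient) lines.

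I would then substitute $x = X(\omega)$ pointwise and take expectations, using monotonicity and linearity: $\mathbb{E}[\phi(X)] \geqslant \phi(\mu) + a\big(\mathbb{E}[X] - \mu\big) = \phi(\mu) = \phi(\mathbb{E}[X])$, which is the desired inequality. The concave case follows at once by applying the convex result to $-\phi$ and negating. The main obstacle I expect is not arithmetic but the justification of the supporting line: one must choose $I$ so that $\mu$ lies in its interior (handling a boundary value of $\mu$ by a limiting/continuity argument) and verify that the slope $a$ can be taken finite; a secondary technical point is confirming that the expectation manipulation remains valid in the degenerate cases, which as noted reduce to the trivial inequality.
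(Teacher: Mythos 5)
Your proof is correct, but it takes a genuinely different route from the paper's. The paper proves only the finite form: it takes $X$ to be a discrete random variable supported on finitely many points $x_1,\dots,x_n$ with probabilities $p_1,\dots,p_n$, and deduces $\phi\bigl(\sum_i p_i x_i\bigr) \leqslant \sum_i p_i \phi(x_i)$ by induction on $n$ directly from the two-point convexity definition. You instead use the supporting-line (subgradient) characterization: pick a slope $a \in [\phi'_-(\mu), \phi'_+(\mu)]$ at $\mu = \mathbb{E}[X]$, bound $\phi(X) \geqslant \phi(\mu) + a(X-\mu)$ pointwise, and take expectations. Your argument buys strictly more generality --- it applies to arbitrary integrable $X$ (continuous or discrete, with infinite support), and it also shows $\mathbb{E}[\phi(X)]$ is well-defined in $(-\infty,+\infty]$ via the affine minorant --- at the cost of the technical care you correctly flag: $\mu$ must lie in the interior of the domain interval (boundary cases need a continuity/degeneracy argument), and the chosen slope must be finite. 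The paper's induction proof is more elementary and suffices for its own uses, since every application in the paper involves finite sums over a discrete space; your version would cover those cases and more. Both proofs handle the concave case identically, by applying the convex result to $-\phi$.
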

\begin{proof}
For a convex function $\phi$, by definition, for any $x_1, x_2$ and $\lambda \in [0,1]$:
\begin{equation}
\phi(\lambda x_1 + (1-\lambda)x_2) \leqslant \lambda \phi(x_1) + (1-\lambda)\phi(x_2).
\end{equation}

For a discrete random variable $X$ with values $x_1, \ldots, x_n$ and probabilities $p_1, \ldots, p_n$:
\begin{equation}
\mathbb{E}[X] = \sum_{i=1}^{n} p_i x_i, \quad \mathbb{E}[\phi(X)] = \sum_{i=1}^{n} p_i \phi(x_i).
\end{equation}

By mathematical induction:
\begin{equation}
\phi\left(\sum_{i=1}^{n} p_i x_i\right) \leqslant \sum_{i=1}^{n} p_i \phi(x_i),
\end{equation}
which is $\phi(\mathbb{E}[X]) \leqslant \mathbb{E}[\phi(X)]$.
\end{proof}
\begin{theorem}[Weak Law of Large Numbers]
Let $X_1, X_2, \ldots, X_n$ be independent and identically distributed random variables with $\mathbb{E}[X_i] = \mu$. Then:
\begin{equation}
\frac{1}{n}\sum_{i=1}^{n} X_i \xrightarrow{p} \mu.
\end{equation}
\end{theorem}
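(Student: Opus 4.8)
The plan is to prove the Weak Law of Large Numbers first under the additional hypothesis that the common variance $\sigma^2 = \mathrm{Var}(X_i)$ is finite, since in that regime the argument is a two-line application of Chebyshev's inequality, and then to remove that hypothesis by a truncation argument. Write $\bar{X}_n = \frac{1}{n}\sum_{i=1}^n X_i$. First I would compute $\mathbb{E}[\bar{X}_n] = \mu$ by linearity of expectation, and $\mathrm{Var}(\bar{X}_n) = \frac{1}{n^2}\sum_{i=1}^n \mathrm{Var}(X_i) = \sigma^2/n$, where the cross terms vanish because the $X_i$ are independent (pairwise uncorrelatedness suffices here). Then Chebyshev's inequality gives, for every $\eps > 0$,
\begin{equation}
\Pr\bigl(|\bar{X}_n - \mu| \geqslant \eps\bigr) \leqslant \frac{\mathrm{Var}(\bar{X}_n)}{\eps^2} = \frac{\sigma^2}{n\eps^2} \xrightarrow[n\to\infty]{} 0,
\end{equation}
which is exactly convergence in probability $\bar{X}_n \xrightarrow{p} \mu$.

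To handle the general case where only $\mathbb{E}|X_i| < \infty$ is assumed, I would introduce the truncated variables $Y_i = X_i \Ind(|X_i| \leqslant n)$ and $Z_i = X_i \Ind(|X_i| > n)$, so that $X_i = Y_i + Z_i$. The idea is to split $\bar{X}_n - \mu$ into a contribution from the truncated part and a contribution from the tail. For the truncated part, $\mathrm{Var}(Y_i) \leqslant \mathbb{E}[Y_i^2] \leqslant n\,\mathbb{E}|X_i|$, so $\mathrm{Var}\bigl(\frac{1}{n}\sum Y_i\bigr) \leqslant \frac{1}{n}\mathbb{E}|X_1| \to 0$, and $\mathbb{E}[Y_i] \to \mu$ by dominated convergence; Chebyshev then controls $\frac{1}{n}\sum (Y_i - \mathbb{E}[Y_i])$. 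For the tail part, $\Pr\bigl(\frac{1}{n}\sum Z_i \neq 0\bigr) \leqslant n\,\Pr(|X_1| > n)$, and since $\mathbb{E}|X_1| < \infty$ forces $n\,\Pr(|X_1| > n) \to 0$, this term is negligible with high probability. Combining the two via a union bound yields the claim.

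The main obstacle is the truncation case: one must choose the truncation level to grow with $n$ at the right rate, verify $\mathbb{E}[Y_i] \to \mu$ rigorously via dominated convergence (or monotone convergence applied to $|X_1|\Ind(|X_1|>n)$), and verify $n\,\Pr(|X_1|>n)\to 0$ — the latter following from $\mathbb{E}|X_1| \geqslant \mathbb{E}\bigl[|X_1|\Ind(|X_1|>n)\bigr] \geqslant n\,\Pr(|X_1|>n)$ together with the fact that $\mathbb{E}\bigl[|X_1|\Ind(|X_1|>n)\bigr]\to 0$. If the intended scope of the theorem is only the finite-variance setting (which is all that is actually invoked elsewhere in the paper, e.g.\ in the Chernoff/variance estimates), then the Chebyshev argument alone suffices and there is no real obstacle; I would state the finite-variance proof as the primary argument and include the truncation reduction as a remark for completeness.
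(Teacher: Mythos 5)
The paper itself states this Weak Law as an auxiliary standard tool and gives no proof (unlike the neighboring Jensen, Chernoff, and Hoeffding statements), so there is nothing to compare against; your Chebyshev argument in the finite-variance case is correct, complete, and certainly adequate for how the result is used elsewhere in the paper, where the relevant random variables are bounded.

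There is, however, a genuine quantitative slip in your truncation step for the general $L^1$ case. With $Y_i = X_i \Ind(|X_i| \leqslant n)$ you bound $\mathrm{Var}(Y_i) \leqslant \mathbb{E}[Y_i^2] \leqslant n\,\mathbb{E}|X_1|$, but then
\begin{equation}
\mathrm{Var}\Bigl(\tfrac{1}{n}\sum_{i=1}^n Y_i\Bigr) \;=\; \frac{1}{n^2}\sum_{i=1}^n \mathrm{Var}(Y_i) \;\leqslant\; \frac{1}{n^2}\cdot n \cdot n\,\mathbb{E}|X_1| \;=\; \mathbb{E}|X_1|,
\end{equation}
which is $O(1)$, not the claimed $\tfrac{1}{n}\mathbb{E}|X_1| \to 0$, so Chebyshev does not close as written. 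The crude bound $\mathbb{E}[Y_1^2] \leqslant n\,\mathbb{E}|X_1|$ is too weak; what you need is the sharper fact that $\mathbb{E}\bigl[X_1^2 \Ind(|X_1|\leqslant n)\bigr] = o(n)$ when $\mathbb{E}|X_1|<\infty$. This follows, e.g., from $\mathbb{E}\bigl[X_1^2 \Ind(|X_1|\leqslant n)\bigr] \leqslant \int_0^n 2t\,\Pr(|X_1|>t)\,\mathrm{d}t$ together with $t\,\Pr(|X_1|>t)\to 0$ (which you already derive for the tail term) and a Ces\`aro-average argument; alternatively, truncate at level $\epsilon_n n$ with $\epsilon_n \to 0$ slowly, or bypass truncation entirely with the characteristic-function proof. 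The rest of your tail estimate, $\Pr\bigl(\frac{1}{n}\sum Z_i \neq 0\bigr) \leqslant n\,\Pr(|X_1|>n) \to 0$, and the dominated-convergence step $\mathbb{E}[Y_1]\to\mu$ are fine.
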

\begin{theorem}[Strong Law of Large Numbers]
Under the same conditions:
\begin{equation}
\frac{1}{n}\sum_{i=1}^{n} X_i \xrightarrow{a.s.} \mu.
\end{equation}
\end{theorem}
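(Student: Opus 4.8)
The plan is to establish Kolmogorov's strong law through the classical truncation argument: reduce the almost-sure convergence of the sample averages to the almost-sure convergence of a random series of independent terms, and then control that series with a second-moment (maximal) inequality and Kronecker's lemma.

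First I would center the variables, replacing $X_i$ by $X_i-\mu$, so that it suffices to treat the mean-zero case and prove $n^{-1}\sum_{i=1}^n X_i \to 0$ almost surely. The truncation step sets $Y_i = X_i\,\Ind\{|X_i|\le i\}$. Since the $X_i$ are identically distributed with $\mathbb{E}|X_1|<\infty$, the tail-sum bound $\sum_{i\ge 1}\mathbb{P}(|X_1|>i)\le \mathbb{E}|X_1|<\infty$ combined with Borel--Cantelli shows $X_i = Y_i$ for all sufficiently large $i$ almost surely, so the averages of $X_i$ and of $Y_i$ share the same limit. Moreover $\mathbb{E}[Y_i]\to \mathbb{E}[X_1]=0$, so the deterministic corrections satisfy $n^{-1}\sum_{i=1}^n\mathbb{E}[Y_i]\to 0$ by Ces\`aro averaging and may be discarded; it therefore remains to prove $n^{-1}\sum_{i=1}^n\bigl(Y_i-\mathbb{E}[Y_i]\bigr)\to 0$ almost surely.

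The core estimate is the variance bound $\sum_{i=1}^\infty i^{-2}\,\mathrm{Var}(Y_i)\le \sum_{i=1}^\infty i^{-2}\,\mathbb{E}\bigl[X_1^2\,\Ind\{|X_1|\le i\}\bigr]<\infty$, obtained by interchanging summation and expectation and using $\sum_{i\ge k} i^{-2} = O(1/k)$ to collapse the inner sum against a single power of $|X_1|$. With this summability, Kolmogorov's one-series theorem (a consequence of his maximal inequality, applicable because the terms $(Y_i-\mathbb{E}[Y_i])/i$ are independent, centered, and have summable variances) yields that $\sum_{i\ge 1}(Y_i-\mathbb{E}[Y_i])/i$ converges almost surely. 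Kronecker's lemma, applied with $b_i = i\uparrow\infty$, then turns this series convergence into $n^{-1}\sum_{i=1}^n(Y_i-\mathbb{E}[Y_i])\to 0$ almost surely; combining with the truncation reduction completes the argument.

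The main obstacle is that only a first moment is assumed, so none of the second-moment machinery (Kolmogorov's maximal inequality, Kronecker's lemma) applies to the $X_i$ directly: the entire argument hinges on the truncation level growing exactly like $i$, which is simultaneously the rate that makes the discrepancy events $\{X_i\ne Y_i\}$ summable (using $\mathbb{E}|X_1|<\infty$) and the rescaled truncated variances $i^{-2}\mathrm{Var}(Y_i)$ summable; extracting both from the single hypothesis $\mathbb{E}|X_1|<\infty$ is the delicate point. If one is content to strengthen the hypothesis to $\mathbb{E}[X_1^4]<\infty$, the proof shortcuts entirely: expanding $\mathbb{E}[(S_n-n\mu)^4]$ and using independence to kill the odd cross terms gives an $O(n^2)$ bound, hence $\mathbb{E}[(n^{-1}S_n-\mu)^4]=O(n^{-2})$ is summable, and Markov's inequality together with Borel--Cantelli delivers the conclusion with no truncation at all.
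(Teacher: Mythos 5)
Unlike Jensen's inequality, the Chernoff bounds, and Hoeffding's inequality, the paper states the Strong Law of Large Numbers in its auxiliary-tools appendix without any proof, treating it as a standard citation; so there is no in-paper argument to compare against, and your proposal should be judged on its own. On those terms it is correct: you have reproduced the classical Kolmogorov truncation proof in the right order and with the right load-bearing steps. Centering reduces to the mean-zero case; the truncation $Y_i = X_i\,\Ind\{|X_i|\le i\}$ together with $\sum_{i\ge 1}\mathbb{P}(|X_1|>i)\le \mathbb{E}|X_1|<\infty$ and Borel--Cantelli makes $X_i$ and $Y_i$ eventually agree almost surely, so their Ces\`aro averages share a limit; dominated convergence gives $\mathbb{E}[Y_i]\to 0$ and hence the Ces\`aro means of the expectations vanish; the interchange-of-summation estimate $\sum_i i^{-2}\mathbb{E}[X_1^2\,\Ind\{|X_1|\le i\}]<\infty$ (using $\sum_{i\ge k} i^{-2}=O(1/k)$ to trade $X_1^2$ for $|X_1|$) is exactly the point where a single first moment suffices; and Kolmogorov's one-series theorem plus Kronecker's lemma finish the argument. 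You also correctly identify that your fourth-moment shortcut proves a strictly weaker statement than the theorem as stated (which assumes only a finite mean), so it should be regarded as a remark rather than an alternative proof. If you wanted to go further than the paper needs, Etemadi's refinement (subsequence along $k^n$ plus monotonicity, requiring only pairwise independence) would be the other standard route, but the Kolmogorov argument you sketch is complete and appropriate here.
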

\begin{theorem}[Chernoff Bounds]
For independent random variables $X_1, \ldots, X_n$ with $X_i \in [0,1]$, let $S = \sum_{i=1}^{n} X_i$ and $\mu = \mathbb{E}[S]$. Then:

For $\delta > 0$:
\begin{equation}
\mathbb{P}(S \geqslant (1+\delta)\mu) \leqslant \exp\left(-\frac{\delta^2 \mu}{2+\delta}\right).
\end{equation}

For $0 < \delta < 1$:
\begin{equation}
\mathbb{P}(S \leqslant (1-\delta)\mu) \leqslant \exp\left(-\frac{\delta^2 \mu}{2}\right).
\end{equation}
\end{theorem}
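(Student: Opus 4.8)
The plan is to prove both tail bounds by the exponential moment (Chernoff) method: apply Markov's inequality to a suitable exponential transform of $S$, use independence to factorize the moment generating function, bound each factor via convexity, and finally optimize the free parameter. For the upper tail I would fix any $t>0$ and use monotonicity of $x\mapsto e^{tx}$ together with Markov's inequality to write
\begin{equation}
\mathbb{P}(S \geqslant (1+\delta)\mu) = \mathbb{P}(e^{tS} \geqslant e^{t(1+\delta)\mu}) \leqslant e^{-t(1+\delta)\mu}\,\mathbb{E}[e^{tS}].
\end{equation}
For the lower tail I would instead apply Markov to $e^{-tS}$ with $t>0$, obtaining $\mathbb{P}(S \leqslant (1-\delta)\mu) \leqslant e^{t(1-\delta)\mu}\,\mathbb{E}[e^{-tS}]$.

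The crucial step is controlling the moment generating function $\mathbb{E}[e^{\pm tS}]$. By independence of the $X_i$ this factorizes as $\prod_i \mathbb{E}[e^{\pm t X_i}]$, and for each factor I would exploit that $X_i\in[0,1]$. Since $s\mapsto e^{\pm t s}$ is convex, it lies below its chord on $[0,1]$, giving $e^{\pm t x}\leqslant 1 + x(e^{\pm t}-1)$ for $x\in[0,1]$; hence, with $p_i = \mathbb{E}[X_i]$ and using $1+u\leqslant e^u$,
\begin{equation}
\mathbb{E}[e^{\pm t X_i}] \leqslant 1 + p_i(e^{\pm t}-1) \leqslant \exp\!\big(p_i(e^{\pm t}-1)\big).
\end{equation}
Multiplying over $i$ and using $\sum_i p_i = \mu$ yields $\mathbb{E}[e^{\pm tS}]\leqslant \exp(\mu(e^{\pm t}-1))$. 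Substituting into the two Markov bounds and minimizing the resulting exponent over $t>0$ (solving $e^{t}=1+\delta$ for the upper tail and $e^{-t}=1-\delta$ for the lower, both admissible since $\delta>0$ and $0<\delta<1$) produces the classical closed forms
\begin{equation}
\mathbb{P}(S \geqslant (1+\delta)\mu) \leqslant \Big(\tfrac{e^{\delta}}{(1+\delta)^{1+\delta}}\Big)^{\mu}, \quad \mathbb{P}(S \leqslant (1-\delta)\mu) \leqslant \Big(\tfrac{e^{-\delta}}{(1-\delta)^{1-\delta}}\Big)^{\mu}.
\end{equation}

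The remaining and genuinely fiddly part, where I expect the main obstacle, is to relax these sharp but unwieldy expressions into the stated exponential forms, which reduces to two elementary but nonobvious logarithmic inequalities. For the upper tail, taking logarithms shows the claim is equivalent to $(1+\delta)\ln(1+\delta)-\delta \geqslant \tfrac{\delta^2}{2+\delta}$, which after dividing by $1+\delta$ becomes $\ln(1+\delta)\geqslant \tfrac{2\delta}{2+\delta}$; I would verify this by setting $h(\delta)=\ln(1+\delta)-\tfrac{2\delta}{2+\delta}$, noting $h(0)=0$, and checking that $h'(\delta)=\tfrac{1}{1+\delta}-\tfrac{4}{(2+\delta)^2}\geqslant 0$ is equivalent to $(2+\delta)^2\geqslant 4(1+\delta)$, i.e. $\delta^2\geqslant 0$. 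For the lower tail the analogous reduction is $(1-\delta)\ln(1-\delta)\geqslant -\delta+\tfrac{\delta^2}{2}$, which I would establish via $\psi(\delta)=(1-\delta)\ln(1-\delta)+\delta-\tfrac{\delta^2}{2}$, with $\psi(0)=0$ and $\psi'(\delta)=-\ln(1-\delta)-\delta\geqslant 0$ on $[0,1)$ since $-\ln(1-\delta)\geqslant \delta$. Combining each monotonicity fact with the corresponding closed form yields exactly the two stated bounds, completing the argument.
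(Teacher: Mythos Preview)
Your argument is correct and is in fact the standard route to the multiplicative Chernoff bounds. The paper takes a different path: it applies Markov's inequality to the centered exponential $e^{t(S-\mu)}$ and then bounds each factor by Hoeffding's lemma, $\mathbb{E}[e^{t(X_i-\mathbb{E}[X_i])}]\leqslant e^{t^2/8}$, arriving at $\mathbb{P}(S\geqslant(1+\delta)\mu)\leqslant e^{-t\delta\mu+nt^2/8}$ before setting $t=4\delta/(2+\delta)$. You instead bound the uncentered MGF via the chord inequality $e^{tx}\leqslant 1+x(e^t-1)$ on $[0,1]$, which yields the $\mu$-dependent exponent $\mu(e^t-1)$ directly, and then close with the two elementary logarithmic inequalities you verify. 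Your route is both cleaner and sharper here: the paper's Hoeffding-lemma step introduces $n$ rather than $\mu$ in the quadratic term, so its stated optimization does not literally recover the claimed $\mu$-only exponent without an additional (unstated) relation between $n$ and $\mu$; the chord-based MGF bound you use avoids this slippage, and you also treat the lower tail, which the paper's sketch omits.
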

\begin{proof}
For the upper bound, using the moment generating function method with $t > 0$:
\begin{equation}
\mathbb{P}(S \geqslant (1+\delta)\mu) = \mathbb{P}(S - \mu \geqslant \delta\mu) = \mathbb{P}(e^{t(S-\mu)} \geqslant e^{t\delta\mu}).
\end{equation}

By Markov's inequality:
\begin{equation}
\mathbb{P}(e^{t(S-\mu)} \geqslant e^{t\delta\mu}) \leqslant \frac{\mathbb{E}[e^{t(S-\mu)}]}{e^{t\delta\mu}} = e^{-t\delta\mu} \prod_{i=1}^{n} \mathbb{E}[e^{t(X_i-\mathbb{E}[X_i])}].
\end{equation}

Since $X_i \in [0,1]$, using Bernstein's technique, for $t \geqslant 0$:
\begin{equation}
\mathbb{E}[e^{t(X_i-\mathbb{E}[X_i])}] \leqslant e^{t^2/8}.
\end{equation}

Therefore:
\begin{equation}
\mathbb{P}(S \geqslant (1+\delta)\mu) \leqslant e^{-t\delta\mu + nt^2/8}.
\end{equation}

Choosing $t = \frac{4\delta}{2+\delta}$ yields the desired bound.
\end{proof}
\begin{theorem}[Hoeffding's Inequality]
Let $X_1, X_2, \ldots, X_n$ be independent random variables such that $X_i$ almost surely lies in the interval $[a_i, b_i]$. Define $S_n = \sum_{i=1}^{n} X_i$. Then for any $t > 0$:
\begin{equation}
\mathbb{P}(S_n - \mathbb{E}[S_n] \geqslant t) \leqslant \exp\left(-\frac{2t^2}{\sum_{i=1}^{n}(b_i - a_i)^2}\right).
\end{equation}
\end{theorem}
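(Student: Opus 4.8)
The plan is to run the classical Chernoff-bound argument, which reduces the tail estimate to a bound on the moment generating function of each centered summand. First I would fix a free parameter $s > 0$ and apply the exponential Markov inequality: since $x \mapsto e^{sx}$ is increasing,
\begin{equation}
\mathbb{P}(S_n - \mathbb{E}[S_n] \geq t) = \mathbb{P}\!\left(e^{s(S_n - \mathbb{E}[S_n])} \geq e^{st}\right) \leq e^{-st}\,\mathbb{E}\!\left[e^{s(S_n - \mathbb{E}[S_n])}\right].
\end{equation}
By independence of the $X_i$, the expectation factorizes as $\prod_{i=1}^{n} \mathbb{E}\big[e^{s(X_i - \mathbb{E}[X_i])}\big]$, so it suffices to control each factor separately and then choose $s$ well.

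The key step is \emph{Hoeffding's Lemma}: if $Y$ is a zero-mean random variable with $Y \in [a,b]$ almost surely, then $\mathbb{E}[e^{sY}] \leq \exp\!\big(s^2 (b-a)^2/8\big)$. To prove it I would use convexity of $y \mapsto e^{sy}$ on $[a,b]$ to write $e^{sy} \leq \frac{b-y}{b-a}e^{sa} + \frac{y-a}{b-a}e^{sb}$, take expectations, and invoke $\mathbb{E}[Y]=0$ to get $\mathbb{E}[e^{sY}] \leq \frac{b}{b-a}e^{sa} - \frac{a}{b-a}e^{sb}$. Setting $\theta = -a/(b-a) \in [0,1]$ and $u = s(b-a)$, the right-hand side equals $e^{\varphi(u)}$ with $\varphi(u) = -\theta u + \log\!\big(1-\theta+\theta e^u\big)$. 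A direct computation gives $\varphi(0)=0$, $\varphi'(0)=0$, and $\varphi''(u) = \frac{\theta e^u(1-\theta)}{(1-\theta+\theta e^u)^2} = p(1-p) \leq \tfrac14$, where $p = \theta e^u/(1-\theta+\theta e^u) \in (0,1)$; Taylor's theorem with Lagrange remainder then yields $\varphi(u) \leq u^2/8$, which is the claimed bound.

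Applying Hoeffding's Lemma to each $Y_i = X_i - \mathbb{E}[X_i]$, which lies in an interval of the same length $b_i - a_i$, gives $\mathbb{E}[e^{sY_i}] \leq \exp\!\big(s^2(b_i-a_i)^2/8\big)$, hence
\begin{equation}
\mathbb{P}(S_n - \mathbb{E}[S_n] \geq t) \leq \exp\!\left(-st + \frac{s^2}{8}\sum_{i=1}^{n}(b_i-a_i)^2\right).
\end{equation}
Finally I would optimize the quadratic exponent over $s > 0$: it is minimized at $s^\star = 4t\big/\sum_{i=1}^{n}(b_i-a_i)^2$, and substituting $s^\star$ back produces the exponent $-2t^2\big/\sum_{i=1}^{n}(b_i-a_i)^2$, which is exactly the stated bound. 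The main obstacle is the proof of Hoeffding's Lemma, specifically verifying the uniform bound $\varphi'' \leq 1/4$ and assembling the second-order Taylor estimate; the exponential Markov step, the factorization, and the final optimization over $s$ are routine.
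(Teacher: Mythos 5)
Your proposal is correct and follows essentially the same route as the paper: exponential Markov inequality, factorization by independence, the per-summand bound $\mathbb{E}[e^{s(X_i-\mathbb{E}[X_i])}]\leqslant e^{s^2(b_i-a_i)^2/8}$, and the optimal choice $s=4t/\sum_i(b_i-a_i)^2$. The only difference is that you also prove Hoeffding's Lemma (via convexity and the bound $\varphi''\leqslant 1/4$), which the paper simply invokes, so your argument is a more self-contained version of the same proof.
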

\begin{proof}
Using the moment generating function method. For $s > 0$:
\begin{equation}
\mathbb{P}(S_n - \mathbb{E}[S_n] \geqslant t) = \mathbb{P}(e^{s(S_n - \mathbb{E}[S_n])} \geqslant e^{st}) \leqslant e^{-st} \mathbb{E}[e^{s(S_n - \mathbb{E}[S_n])}].
\end{equation}

Since $X_i$ are independent:
\begin{equation}
\mathbb{E}[e^{s(S_n - \mathbb{E}[S_n])}] = \prod_{i=1}^{n} \mathbb{E}[e^{s(X_i - \mathbb{E}[X_i])}].
\end{equation}

For bounded random variables $X_i \in [a_i, b_i]$, using Hoeffding's lemma:
\begin{equation}
\mathbb{E}[e^{s(X_i - \mathbb{E}[X_i])}] \leqslant e^{s^2(b_i - a_i)^2/8}.
\end{equation}

Therefore:
\begin{equation}
\mathbb{P}(S_n - \mathbb{E}[S_n] \geqslant t) \leqslant e^{-st} \prod_{i=1}^{n} e^{s^2(b_i - a_i)^2/8} = \exp\left(-st + \frac{s^2}{8}\sum_{i=1}^{n}(b_i - a_i)^2\right).
\end{equation}

Choosing $s = \frac{4t}{\sum_{i=1}^{n}(b_i - a_i)^2}$ yields the optimal bound. 
\end{proof}
\subsection{Arithmetic Lemmas}
\begin{lemma}[Upper Bound Lemma for t-th Power Sum on Simplex]
For any $x \in \triangle_{[K]}$, we have:

\begin{equation}
\sum_{i=1}^{K} x_i^t \leqslant K^{1-t}
\end{equation}

for $\frac{1}{2} \leqslant t < 1$.
\end{lemma}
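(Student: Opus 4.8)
The plan is to reduce the inequality to the concavity of the map $u \mapsto u^t$ on $[0,\infty)$, which holds precisely because $0 < t < 1$ (in particular for $\tfrac12 \le t < 1$), and then apply Jensen's Inequality with respect to the \emph{uniform} distribution on $\{1,\dots,K\}$.

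Concretely, I would introduce the random variable $X$ that takes each value $x_i$ with probability $1/K$. Since $x \in \triangle_{[K]}$ we have $\sum_{i=1}^K x_i = 1$, hence $\mathbb{E}[X] = \tfrac1K$. Applying the concave case of Jensen's Inequality (stated in this appendix) to $\phi(u) = u^t$ gives
\[
\frac1K \sum_{i=1}^K x_i^t = \mathbb{E}\big[\phi(X)\big] \le \phi\big(\mathbb{E}[X]\big) = \left(\frac1K\right)^t = K^{-t},
\]
and multiplying through by $K$ yields $\sum_{i=1}^K x_i^t \le K^{1-t}$, which is exactly the claim. Two routine points need a line of justification: the convention $0^t = 0$ for $t>0$ (so vanishing coordinates contribute nothing), and that $u^t$ is genuinely concave on all of $[0,\infty)$ — its second derivative $t(t-1)u^{t-2}$ is negative for $u>0$ because $t-1<0$, and concavity extends to the closed half-line by continuity. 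The hypothesis $t \ge \tfrac12$ is not actually used here; any $t \in (0,1)$ works, and I would note that the restriction is retained only for compatibility with later uses.

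There is essentially no hard step: the only thing to be careful about is arranging the probabilistic reformulation so that Jensen is applied with equal (uniform) weights, since it is the uniformity of the atoms that produces the clean constant $K^{1-t}$ rather than a data-dependent bound. As an alternative I would mention (as a remark, not the main proof) the Hölder route: applying Hölder's inequality to the vectors $(x_i^t)_{i}$ and $(1,\dots,1)$ with conjugate exponents $1/t$ and $1/(1-t)$ gives $\sum_i x_i^t \le \big(\sum_i x_i\big)^t K^{1-t} = K^{1-t}$ directly. I would carry out the Jensen argument in full since it matches the tools already developed in the paper.
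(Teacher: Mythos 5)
Your proof is correct, but your main argument takes a different route from the paper: the paper proves the bound by a one-line application of H\"older's inequality, pairing $(x_i^t)_i$ with the all-ones vector using conjugate exponents $p = 1/t$ and $q = 1/(1-t)$, so that $\sum_i x_i^t \leqslant \bigl(\sum_i x_i\bigr)^t K^{1-t} = K^{1-t}$ --- exactly the alternative you relegate to a remark. Your primary argument instead reformulates the sum as a uniform average and invokes the concave case of Jensen's inequality for $\phi(u)=u^t$, giving $\frac{1}{K}\sum_i x_i^t \leqslant \bigl(\frac{1}{K}\sum_i x_i\bigr)^t = K^{-t}$ and hence the claim. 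Both are sound and essentially equally short; your Jensen route has the mild advantage of leaning only on a tool the paper already states (including the reversed inequality for concave $\phi$) and of making the role of concavity of $u\mapsto u^t$ explicit, while the paper's H\"older route avoids the probabilistic repackaging and the side remarks about the convention $0^t=0$ and concavity at the boundary. Your observation that the hypothesis $t \geqslant \tfrac12$ is never used and that any $t\in(0,1)$ suffices is accurate and applies equally to the paper's proof.
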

\begin{proof}
By Hölder's inequality $\|fg\|_1 \leqslant \|f\|_p \|g\|_q$, we have:
\begin{equation}
\sum_{i=1}^{K} x_i^t \leqslant \left(\sum_{i=1}^{K} (x_i^t)^{1/t}\right)^t \left(\sum_{i=1}^{K} 1^q\right)^{1/q} = K^{1-t}
\end{equation}
by choosing $p = \frac{1}{t}$ and $q = \frac{1}{1-t}$. 
\end{proof}
\begin{lemma}
For any positive integer $n$:
\begin{equation}
\sum_{i=1}^{n} \frac{1}{i} \leqslant \ln(n + 1).
\end{equation}

Moreover, for any $-1 < t < 0$:
\begin{equation}
\sum_{i=1}^{n} i^t \leqslant \frac{(n + 1)^{t+1}}{t+1}.
\end{equation}
\end{lemma}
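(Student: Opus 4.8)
The plan is to establish both inequalities by the same device: compare each discrete sum against the integral of the corresponding monotone decreasing function, namely $x\mapsto 1/x$ and $x\mapsto x^{t}$ (the latter is decreasing precisely because $t<0$). On each unit interval the function dominates its value at the right endpoint, which turns each summand into a lower integral estimate; summing then telescopes the integrals into a single one over $[0,n]$ or $[1,n]$, and a final monotonicity step replaces $n$ by $n+1$.

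I would first dispatch the power-sum bound, which is the clean case. Fix $-1<t<0$; for $x\in[i-1,i]$ monotonicity gives $x^{t}\geqslant i^{t}$, hence $i^{t}\leqslant\int_{i-1}^{i}x^{t}\,dx$ for every $i\geqslant 1$. Summing over $i=1,\dots,n$ yields $\sum_{i=1}^{n}i^{t}\leqslant\int_{0}^{n}x^{t}\,dx$. The hypothesis $t>-1$ is exactly what makes this improper integral at the origin finite, with value $n^{t+1}/(t+1)$. Since $t+1>0$, the map $s\mapsto s^{t+1}/(t+1)$ is increasing, so $n^{t+1}/(t+1)\leqslant(n+1)^{t+1}/(t+1)$, giving the stated bound. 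This part involves no subtlety.

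The harmonic bound is where the argument becomes delicate, and this is the step I expect to be the main obstacle. The same comparison gives $1/i\leqslant\int_{i-1}^{i}dx/x$ only for $i\geqslant 2$, because at $i=1$ the interval $[0,1]$ meets the non-integrable singularity of $1/x$ at the origin — in contrast to $x^{t}$ with $t>-1$, the telescoping cannot be pushed down to $0$, so the leading unit term cannot be folded into $\ln(n+1)$ by the integral method alone. Summing from $i=2$ to $n$ only yields $\sum_{i=2}^{n}1/i\leqslant\int_{1}^{n}dx/x=\ln n$. I would therefore expect the rigorous route to either reindex the sum — bounding $\sum_{i=1}^{n}\tfrac{1}{i+1}\leqslant\int_{1}^{n+1}dx/x=\ln(n+1)$, which the comparison does deliver cleanly — or to carry the first term explicitly and record the sharper $\sum_{i=1}^{n}1/i\leqslant 1+\ln n$. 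Checking the displayed inequality against these estimates at small $n$ is the point I would scrutinize most carefully, since the integral bound certifies the claim only once the $i=1$ term is accounted for; an alternative fallback is a short induction on $n$ using $\ln(n+1)-\ln n=\ln(1+1/n)$ together with an elementary comparison of $1/(n+1)$ against this logarithmic increment.
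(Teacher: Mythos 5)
Your treatment of the power-sum bound is correct and is essentially the paper's own argument: the paper likewise compares $\sum_{i=1}^{n} i^{t}$ with an integral of $x^{t}$ (it integrates directly over $[0,n+1]$, while you integrate over $[0,n]$ and then enlarge $n$ to $n+1$ by monotonicity of $s\mapsto s^{t+1}$, which is if anything slightly sharper), and the hypothesis $t>-1$ enters exactly as you use it, to make the integral converge at the origin.

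Your suspicion about the harmonic-sum inequality is not just a technical obstacle to one proof strategy --- the displayed claim is false, and the small-$n$ check you propose is exactly what exposes it: at $n=1$ the left side is $1$ while $\ln 2 \approx 0.693$. For the decreasing function $1/x$ the correct comparison is $\int_{i}^{i+1} \frac{dx}{x} \leqslant \frac{1}{i}$, which after summation gives $\ln(n+1) \leqslant \sum_{i=1}^{n} \frac{1}{i}$, i.e.\ the reverse of what the lemma asserts. The paper's own proof simply writes $\sum_{i=1}^{n} i^{-1} \leqslant \int_{1}^{n+1} \frac{dx}{x}$, which is this comparison with the inequality pointing the wrong way, so the paper does not establish the statement either. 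The salvageable bounds are precisely the ones you identify: $\sum_{i=1}^{n} \frac{1}{i+1} \leqslant \ln(n+1)$, or $\sum_{i=1}^{n} \frac{1}{i} \leqslant 1+\ln n$. One caveat on your fallback plan: the proposed induction cannot succeed, since the base case already fails; rather than scrutinizing small $n$ as a delicate step, you should conclude outright that the first inequality must be reversed (or the sum reindexed) before any proof is possible.
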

\begin{proof}
If $t = -1$, we have:
\begin{equation}
\sum_{i=1}^{n} i^{-1} \leqslant \int_{1}^{n+1} \frac{\mathrm{d}x}{x} = \ln(n+1).
\end{equation}

If $t > -1$, we have:
\begin{equation}
\sum_{i=1}^{n} i^t \leqslant \int_{0}^{n+1} x^t \, \mathrm{d}x = \frac{(n+1)^{t+1}}{t+1}.
\end{equation}

This completes the proof.
\end{proof}
\begin{lemma}
For any $x \geqslant 1$ and $q \in (0,1)$:
\begin{equation}
x^q - (x - 1)^q \leqslant q(x - 1)^{q - 1}.
\end{equation}
\end{lemma}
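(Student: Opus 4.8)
The plan is to prove the inequality from the concavity of the map $t \mapsto t^q$ on $(0,\infty)$, which holds precisely because $q \in (0,1)$. First I would dispose of the degenerate endpoint $x = 1$: there the right-hand side is $q\cdot 0^{q-1} = +\infty$ (since $q-1 < 0$), while the left-hand side equals $1$, so the inequality is trivial; henceforth assume $x > 1$, so that $x - 1 > 0$ and all the powers below are finite and well-defined.

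For $x > 1$, set $f(t) = t^q$ on $(0,\infty)$, which is twice differentiable with $f'(t) = q t^{q-1}$ and $f''(t) = q(q-1) t^{q-2} < 0$; hence $f$ is strictly concave. The main step is then the supporting-line (first-order) inequality for concave functions: for any $a < b$ one has $f(b) \leq f(a) + f'(a)(b-a)$. Taking $a = x-1$ and $b = x$ yields $x^q \leq (x-1)^q + q(x-1)^{q-1}$, which is exactly the claimed bound after moving $(x-1)^q$ to the left-hand side.

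An equivalent route, if one prefers not to quote the supporting-line inequality directly, is the mean value theorem combined with monotonicity of $f'$: there is some $\xi \in (x-1,x)$ with $x^q - (x-1)^q = f'(\xi) = q\xi^{q-1}$, and since $q - 1 < 0$ the function $t \mapsto t^{q-1}$ is decreasing, so $\xi^{q-1} \leq (x-1)^{q-1}$ and the conclusion follows. Both arguments are essentially one-line computations, so there is no genuine obstacle here; the only point requiring a moment's care is the reading of $(x-1)^{q-1}$ at $x = 1$ (where it must be interpreted as $+\infty$) together with the standing assumption $x > 1$ everywhere else so that this negative power stays finite. I would present the concavity argument as the primary one, since it makes transparent that the inequality is nothing more than the tangent-line bound for $t^q$ evaluated between $x-1$ and $x$.
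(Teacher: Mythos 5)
Your proposal is correct and uses essentially the same argument as the paper: concavity of $t \mapsto t^q$ for $q \in (0,1)$ together with the tangent-line (first-order) bound at $t = x-1$. Your extra care at the endpoint $x=1$ and the alternative mean-value-theorem route are fine but do not change the substance of the argument.
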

\begin{proof}Consider the function $f(x) = x^q$. We have $f''(x) = q(q - 1)x^{q - 2} \leqslant 0$ for $x \geqslant 0$ and $q \in (0,1)$. Hence, $f(x)$ is concave for $x \geqslant 0$ and $q \in (0,1)$.

By properties of concave functions:
\begin{equation}
f(x) \leqslant f(x - 1) + f'(x - 1)(x - (x - 1)) = f(x - 1) + q(x - 1)^{q - 1}
\end{equation}
for any $x \geqslant 1$ and $q \in (0,1)$, which gives $x^q - (x - 1)^q \leqslant q(x - 1)^{q - 1}$. 
\end{proof}

\section{Example of Generated Results}

Here's an example of generated results for $\alpha$=0.2 in GPT-2, 1B model. Under extreme stabilization ($\alpha=0.2$), the 1B-parameter GPT-2 reaches 100\% high-frequency word proportion, demonstrating complete stability-induced mode collapse. This aligns with our theory: over-stabilization drives forward KL minimization, concentrating probability mass on narrow empirical modes and causing linguistic structural degradation.

\begin{figure*}[htbp]
\centering
\includegraphics[width=0.9\linewidth]{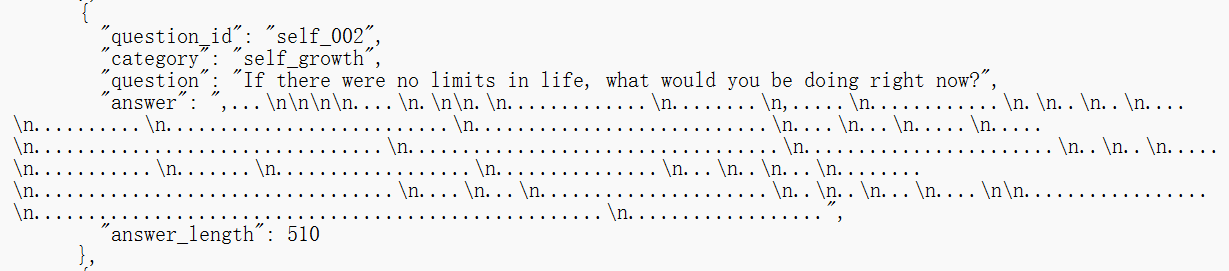}
\label{fig:gen}
\end{figure*}

\end{document}